\newcommand{\uni}{\cup} 
\newcommand{\Prp}{\ensuremath{\mathcal{P}}}
\newtheorem{definition}{Definition}[section]
\newtheorem{lemma}{Lemma}[section]
\newtheorem{proposition}{Proposition}[section]
\newtheorem{corollary}{Corollary}[section]
\newtheorem{example}{Example}[section]
\title{Imagining Probabilistic Belief Change as Imaging\\
(Technical Report)}
\author{Gavin Rens and Thomas Meyer\\
University of Cape Town, South Africa\\
\{grens,tmeyer\}@cs.uct.ac.za}
\begin{document}

\maketitle

\begin{abstract}
Imaging is a form of probabilistic belief change which could be employed for both revision and update. In this paper, we propose a new framework for probabilistic belief change based on imaging, called Expected Distance Imaging (EDI). EDI is sufficiently general to define Bayesian conditioning and other forms of imaging previously defined in the literature. We argue that, and investigate how, EDI can be used for both revision and update. EDI's definition depends crucially on a weight function whose properties are studied and whose effect on belief change operations is analysed. Finally, four EDI instantiations are proposed, two for revision and two for update, and probabilistic rationality postulates are suggested for their analysis. 
\end{abstract}

\section{Introduction}

From the perspective of classical (Boolean) belief change, the work of \citet{agm85} is regarded as  the foundation theory for belief revision (AGM theory). Typically, belief change (in a static world) can be categorized as expansion, revision or contraction, and is performed on a belief set, the set of sentences $K$ closed under logical consequence. Revision is when new information $\alpha$ is (possibly) inconsistent with $K$ and $K$ is (minimally) modified so that the new $K$ remains consistent and entails $\alpha$. Revision is the process which takes place when an agent modifies its beliefs due to receiving new information not previously known or which is more relevant or trustworthy. Except for the movement of information, the physical world is assumed to be completely unchanging.

Whereas belief revision is considered to take place in a static environment, belief update is thought to be the change in beliefs which takes place due to a dynamic environment. Update refers to the process of bringing beliefs up to date precisely because the world has changed and the agent needs a new, `matching' view on the world.

From the perspective of classical (Boolean) belief change, \citet{km92} developed the first serious theory of update (KM theory). Their theory is different from that of AGM in that their rationality postulates are derived from the understanding that update occurs in a dynamic environment.

However, simply applying AGM theory for revision and KM theory for update has been (indirectly) challenged \citep{fh99a,k01a,n11,l07a}. Further, how to categorize a belief change operator is more challenging when notions of uncertainty are considered, for instance, when using probabilities and rankings. The very definition of belief revision and belief update become more problematic under notions of uncertainty.

One kind of probabilistic belief change operation which could potentially `relax' the tension between revision and update is \textit{imaging}. 
David Lewis (1976) first proposed \textit{imaging} to analyse conditional reasoning in probabilistic settings, and it has recently been the focus of several works on probabilistic belief change \citep{rno10,cnss14,rmc16b}. Imaging is the approach of moving the belief in worlds possible at one moment to similar worlds compatible with evidence received at a next moment. In other words, the `belief-mass' is shifted to the `images' of the worlds currently believed possible, where the images are the worlds related via new evidence to the currently believed worlds.

One of the main benefits of imaging is that it overcomes the problem with Bayesian conditioning, namely, being undefined when evidence is inconsistent with current beliefs.
\citet{g88} and \citet{rmc16b} proposed generalizations of Lewis's original definition. In this paper we propose a new generalization of imaging -- or a family of imaging-based belief change operators -- and analyse other probabilistic belief change methods with respect to it. In particular, we propose a version of imaging based on the movement of probability mass weighted by the inverse of distances between possible worlds.

Whether imaging is applicable to revision or update (or both) is still an open question.
\citet{g88} says "...the imaging method is a general revision method because it gives nontrivial results when...the new information to be accommodated...[contradicts the current beliefs]".
\citet{cnss14} explore the use of Lewis imaging as a means to construct probabilistic belief revision. They present explicit constructions of three candidates strategies based on imaging and
investigate their properties.
\citep{rmc16b} define an imaging operation which relaxes the unique-closest-world assumption of Lewis imaging, and they provide a method of revising (via imaging) a potentially infinite set of belief states in a finite procedure.
On the other hand, some researchers have considered imaging to be the probabilistic version of update \citep{km92,dp93,n11}.
\citet{rno10} propose a version of imaging for probabilistic belief erasure.
In fact, \citet{l76} himself never said that imaging was meant to be interpreted as one or the other.
In this paper we continue to investigate the relationship of imaging to revision and update.

The paper makes four contributions. 1) We define Expected Distance Imaging (EDI) and show how Bayesian conditioning, Lewis imaging and generalized imaging can be couched as EDI operations. 2) We define four (new) instantiations of the EDI operation. 3) We define a weight function (as used in EDI), and several properties such a function might have, and explore which of these properties are satisfied by different instantiations of weight functions (for different versions of the EDI operator). 4) We propose a set of rationality postulates for probabilistic belief update.

Due to space limitations, all proofs of propositions have been omitted. Please refer to the appendix for the proofs.

\section{Background}

We shall work with a finitely generated classical propositional logic. Let $\Prp$ be a finite set of $n$ \textit{atoms}. Formally, a \textit{world} $w$ is a unique assignment of truth values to all the atoms in $\Prp$.
There are thus $2^n$ conceivable worlds. 
An agent may consider some non-empty subset $W$ of the conceivable worlds; $W$ is called the possible worlds.
The classical notion of satisfaction is used. World $w$ satisfies (is a model of) $\alpha$ is written $w\Vdash\alpha$.
Let $L$ be all propositional formulae which can be formed from $\Prp$ and the logical connectives $\land$ and $\lnot$, with $\top$ abbreviating tautology and $\bot$ abbreviating contradiction.
Let $\beta$ be a sentence in $L$. $\mathit{Mod}(\beta)$ denotes the set of models of $\beta$.
$\beta$ entails $\alpha$ (denoted $\beta\models\alpha$) iff $\mathit{Mod}(\beta)\subseteq\mathit{Mod}(\alpha)$.
$\beta$ is equivalent to $\alpha$ (denoted $\beta\equiv\alpha$) iff $\mathit{Mod}(\beta)=\mathit{Mod}(\alpha)$.

Often, in the exposition of this paper, a world will be referred to by its truth vector. For instance, if a two-atom vocabulary is placed in order $\langle q,r\rangle$ and $w_3\Vdash \lnot q\land r$, then $w_3$ may be referred to as $01$.

In this work, the basic semantic element of an agent's beliefs is a probability distribution or a \textit{belief state}
\[b=\{(w_1,p_1), (w_2,p_2), \ldots, (w_n,p_n)\},\] where $\{w_1, w_2, \ldots, w_n\}=W$ and $p_i$ is the probability (real number) that $w_i$ is the actual world in which the agent is, with $\sum_{(w,p)\in b}p=1$.
For parsimony, let $b=\langle p_1,\ldots, p_n \rangle$ be the probabilities that belief state $b$ assigns to $w_1,\ldots, w_n$ where, for instance, $\langle w_1,w_2,w_3,w_4\rangle$ $=$ $\langle 11,10,01,00\rangle$, and $\langle w_1,w_2,\ldots,w_8\rangle$ $=$ $\langle 111,110,\ldots,000\rangle$.
$b(\alpha)$ abbreviates $\sum_{w\in\mathit{Mod}(\alpha)}b(w)$.

It is not yet universally agreed what belief change means in a probabilistic setting. One school of thought says that probabilistic expansion (restricted revision) is equivalent to Bayesian conditioning (\citet[Chap.~5]{g88} and \citet{v99b} mention this, but no not necessarily agree with it). This is evidenced by Bayesian conditioning ($\mathsf{BC}$) being defined only when $b(\alpha)\neq0$, thus making $\mathsf{BC}$ expansion equivalent to $\mathsf{BC}$ revision.
In other words,
one could define expansion to be 
\[
b\:\mathsf{BC}\:\alpha =\{(w,p)\mid w\in W, p= b( w\mid\alpha),b(\alpha) \neq 0\},
\]
where $b( w\mid\alpha)$ can be defined as $b(\phi_w\land\alpha)/b(\alpha)$ and $\phi_w$ is a sentence identifying $w$ (i.e., a complete theory for $w$).
Note that $b\:\mathsf{BC}\:\alpha = \emptyset$ iff $b(\alpha) = 0$. This implies that $\mathsf{BC}$ is ill-defined when $b(\alpha) = 0$.


We may write $b*\alpha$ as $b^*_\alpha$ so that we can write $b^*_\alpha(w)$, where $*$ is an arbitrary belief change operator.


The technique of \textit{Lewis imaging} for the revision of belief states requires a notion of similarity between worlds. In fact, he implicitly assumes the availability of a mapping of worlds to a total order $\leq_w$ on worlds for every fixed $w\in W$. Let $w^\alpha$ be the least $\alpha$-world with respect to $\leq_w$. That is, $w^\alpha \Vdash\alpha$ and: if $w \not\Vdash\alpha$, then $w^\alpha\leq_w w''$ for all $w''\in \mathit{Mod}(\alpha)$, and if $w\Vdash\alpha$, then $w^\alpha = w$.

If we indicate Lewis's original imaging operation with $\mathsf{LI}$, then his definition can be stated as
\[
b\,\mathsf{LI}\,\alpha := \{(w,p)\mid w\in W, p=0 \mbox{ if } w\not\Vdash\alpha,\mbox{ else }p=\sum_{w'\in W,w'^\alpha=w}b(w')\}.
\]
He calls $b\,\mathsf{LI}\,\alpha$ the image of $b$ on $\alpha$.
In words, $b\,\mathsf{LI}\,\alpha(w)$ is zero if $w$ does not model $\alpha$, but if it does, then $w$ retains all the probability it had and accrues the probability mass from all the non-$\alpha$-worlds closest to it.
This form of imaging only shifts probabilities around; no probabilities are magnified or shrunk. The probabilities in $b\,\mathsf{LI}\,\alpha$ thus sum to 1 without the need for any normalization.

\section{Non-unique-closest-world Approaches}

Every world having a unique closest $\alpha$-world is quite a strong requirement. We now mention an approach which relaxes the uniqueness requirement.

\citet{g88} describes his generalization of Lewis imaging (which he calls \textit{general imaging}) as ``... instead of moving all the probability assigned to a world  $W^i$ by a probability function $P$ to a unique (``closest'') $A$-world $W^j$, when imaging on $A$, one can introduce the weaker requirement that the probability of $W^i$ be distributed among \textit{several} $A$-worlds (that are ``equally close'').''
If we interpret G\"ardenfors' approach correctly, he does not provide a constructive method but insists that $b^\#_\alpha(\alpha)=1$, where $b^\#_\alpha$ is the image (change) of $b$ on $\alpha$.

\citet{rmc16b} introduced \textit{generalized} imaging via a constructive method. It is a particular instance of G\"ardenfors' general imaging.
They use a pseudo-distance measure between worlds, as defined by \citet{lms01} and adopted by \citet{cnss14}.
\begin{definition}
A pseudo-distance function $d~:~W \times W \to \mathbb{Z}$ satisfies the following four conditions: for all worlds $w, w', w'' \in W$,
\begin{enumerate}
\itemsep=0pt
\item $d(w, w') \geq 0$ (Non-negativity)
\item $d(w, w) = 0$ (Identity)
\item $d(w, w') = d(w', w)$ (Symmetry)
\item $d(w, w') + d(w', w'') \geq d(w, w'')$ (Triangular\\ Inequality)
\end{enumerate}
\label{def:pseudo-func}
\end{definition}

Note that $d$ induces a mapping from worlds to total preorders $\leq^d_w$ on worlds as follows. $w'\leq^d_w w''$ iff $d(w,w')\leq d(w,w'')$.
Note that conditions 2 and 4 make $\leq^d_w$ a total preorder, and that adding conditions 1 and 3 do not necessarily make $\leq^d_w$ a (total) partial order. This makes it possible for a world to have more than one closest worlds.
One may also want to impose a condition on a distance function such that any two distinct worlds must have some distance between them: For all $w,w'\in W$, if $w\neq w'$, then $d(w, w') > 0$ (Faithfulness).

Let $\mathit{Min}(\alpha,w,d)$ be the set of $\alpha$-worlds closest to $w$ with respect to pseudo-distance $d$. Formally,
\[\mathit{Min}(\alpha,w,d):=\{w'\Vdash\alpha\mid \forall w''\Vdash\alpha, d(w',w)\leq d(w'',w)\},\]
where $d(\cdot)$ is some pseudo-distance measure between worlds (e.g., Hamming or Dalal distance).
For instance, using Hamming distance for $d$ and vocabulary $\{q,r\}$, 
\begin{itemize}
\item $\mathit{Min}(\lnot q,11,d)=\{01\}$
\item $\mathit{Min}(\lnot q,10,d)=\{00\}$
\item $\mathit{Min}(\lnot q,01,d)=\{01\}$
\item $\mathit{Min}(\lnot q,00,d)=\{00\}$
\end{itemize}

\emph{Generalized imaging} (denoted $\mathsf{GI}$) is defined as
\[
b\:\mathsf{GI}\:\alpha :=\big\{(w,p)\mid w\in W, p=0 \mbox{ if } w\not\Vdash\alpha,\mbox{else }p=\sum_{\substack{w'\in W\\w\in\mathit{Min}(\alpha,w',d)}}b(w')/|\mathit{Min}(\alpha,w',d)|\big\}.
\]

$b\:\mathsf{GI}\:\alpha$ is the new belief state produced by taking the generalized image of $b$ on $\alpha$.
In words, the probability mass of non-$\alpha$-worlds is shifted to their closest $\alpha$-worlds, such that if a non-$\alpha$-world $w^\times$ with probability $p$ has $n$ closest $\alpha$-worlds (equally distant), then each of these closest $\alpha$-worlds gets $p/n$ mass from $w^\times$.

As an example, let belief state $b_{1.0}:=\langle 1,0,0,0\rangle$ and let belief state $b_{.3/.7}:=\langle 0.3,0.7,0,0\rangle$ for vocabulary $\{q,r\}$. Then
\begin{itemize}
\itemsep=0pt
\item $b_{0.3/0.7}\:\mathsf{GI}\:\lnot q(11)=b_{0.3/0.7}\:\mathsf{GI}\:\lnot q(10)=0$
\item $b_{0.3/0.7}\:\mathsf{GI}\:\lnot q(01)=0.3/1 + 0/1=0.3$
\item $b_{0.3/0.7}\:\mathsf{GI}\:\lnot q(00)=0.7/1 + 0/1=0.7$
\end{itemize}
and
\begin{itemize}
\itemsep=0pt
\item $b_{1.0}\:\mathsf{GI}\:\lnot q(11)=b_{1.0}\:\mathsf{GI}\:\lnot q(10)=0$
\item $b_{1.0}\:\mathsf{GI}\:\lnot q(01)=1/1 + 0/1=1$
\item $b_{1.0}\:\mathsf{GI}\:\lnot q(00)=0/1 + 0/1=0$
\end{itemize}

\section{Expected Distance Imaging}

In this section, we define \textit{expected distance imaging} (EDI) and some properties of interest, then we define two instantiations of EDI and define Bayesian conditioning, Lewis imaging and generalized imaging in terms of EDI. Each operator is considered with respect to the properties. We end with a discussion about the five operations.

\subsection{Definition and Properties}

\citet{rm15b} proposed determining the new probability of an $\alpha$-world $w^\alpha$ as the weighted average of the current probabilities of all worlds $w'$, where the weights are (inversely) proportional to the distance between $w^\alpha$ and the $w'$. The reason for bringing in weights is that the probability of worlds $w'$ less (more) similar to the $\alpha$-world $w^\alpha$ under consideration, should contribute less (more) to the new probability mass of $w^\alpha$. (Distance and similarity are taken to be inversely proportional in this context.)
Building on that idea, we introduce an imaging framework based on weighted distances.
The weight functions are defined in terms of inverse-distance functions, which are defined in terms of the pseudo-distance function.

First, we introduce \textit{potentially inverse-distance} functions $\iota_d: W\times W\to \mathbb{R}$ and propose considering the following seven postulates for such functions.
For all $w,w',w'',w'''\in W$ and all pseudo-distance functions $d$,
\begin{enumerate}
\itemsep=0pt
\item $\iota_d(w,w')\geq0$ (Non-negativity)
\item $\iota_d(w,w)=1$ (Identity)
\item $\iota_d(w,w')=\iota_d(w',w)$ (Symmetry)
\item if $d(w,w')\geq d(w'',w''')$, then $\iota_d(w,w')\leq\iota_d(w'',w''')$ (Weak Inversity)
\item if $d(w,w')>d(w'',w''')$, then $\iota_d(w,w')<\iota_d(w'',w''')$ (Strict Inversity)
\item if $d(w,w')=d(w'',w''')$, then $\iota_d(w,w')=\iota_d(w'',w''')$ (Equi-distance).
\item if $w\neq w'$, then $\iota_d(w,w')<1$ (Faithfulness)
\end{enumerate}
\begin{definition}
A \emph{weight} function is a function $\delta_d:L\times W\times W\to\mathbb{R}$. We say that postulate non-negativity, identity, symmetry, weak inversity, strict inversity, equi-distance or faithfulness is satisfied by $\delta_d(\alpha,w,w')$ iff the respective postulate is satisfied by $\iota_d(w,w')$.
\end{definition}

To start off with, weight functions should be in the range $[0,1]$ so as to support the notion of the \textit{expectation} of probability. A weight function should thus be non-negative and have a maximum of 1. Moreover, a world's probability should carry the maximum weight when compared with itself (identity). Symmetry seems like a very natural property to expect of a weight function, especially given the close relationship that such functions have with distance. Weak inversity seems to be the weakest property that promotes some sort of inversity. It would be a useful distinction to be able to say that a weight function is weakly inverse, but not strictly so. Hence the definition of strict inversity. Knowing whether the property of equi-distance holds also seems useful and interesting.
Note that equi-distance implies symmetry due to symmetry of the pseudo-distance function, and by logic, equi-distance implies weak inversity.
Finally, whether a weight function is faithfulness seems like a natural question to ask.
And note that identity together with strict inversity implies faithfulness.

\begin{definition}
\label{def:EDI}
Let $b$ be an epistemic state and $\alpha$ a new piece of information. Then the new epistemic state changed with $\alpha$ via EDI is defined as
\[
b\,\mathsf{EDI}\,\alpha := \{(w,p)\mid p=0 \mbox{ if }w\not\Vdash\alpha,\mbox{ else }p=\frac{1}{\gamma}\sum_{w'\in W}b(w')\delta_d(\alpha,w,w')\},
\]
where $\gamma:=\sum_{w\in\mathit{Mod}(\alpha)}\sum_{w'\in W}b(w')\delta_d(\alpha,w,w')$ is a normalizing factor.

We may write $\mathsf{EDI}^*$ to indicate that $\delta_d$ is instantiated as $\delta_d^*$.
For any probabilistic belief change operator $\Delta$, we say that $\Delta$ is EDI-compatible if and only if there exists a function $\delta_d^*$ such that $b\,\Delta\,\alpha = b\,\mathsf{EDI}^*\,\alpha$ for all $b$ and $\alpha$.
\end{definition}

In the rest of the paper, we shall omit the subscript from $\delta_d$ as long is it is clear from the context or unnecessary to specify for which pseudo-distance function $\delta$ is defined.

\begin{definition}
\label{def:inv-dist-weight-func}
A weight function is an \emph{inverse-distance} weight function iff it satisfies postulates 1-4 above.
\end{definition}

Consider the following three properties a weight function might have.
\begin{itemize}
\item $\forall\alpha\in L, \forall w,w'\in W$, if $w\Vdash\alpha$ and $w'\Vdash\alpha$, then $\delta(\alpha,w,w')\neq0$ (Evidence Relaxation)
\item $\forall\alpha\in L, \forall w,w'\in W$, if $w\Vdash\alpha$ and $w'\not\Vdash\alpha$, then $\delta(\alpha,w,w')\neq0$ (Non-evidence Relaxation)
\item $\forall\alpha\in L, \forall w,w'\in W$, if $w\Vdash\alpha$ and $w'\Vdash\alpha$ and $w\neq w'$, then $\delta(\alpha,w,w')=0$ (Retention)
\end{itemize}
(In the following, an $\alpha$-world is a world satisfying the new evidence $\alpha$.)
We shall call a weight function \textit{e-relaxed} iff it satisfies evidence relaxation, and \textit{n-e-relaxed} iff it satisfies non-evidence relaxation. We shall say that a weight function is \textit{relaxed} iff it is both e-relaxed and n-e-relaxed.
Later, we shall show how $\alpha$-worlds `share' their mass with other $\alpha$-worlds during the change process if EDI is applied with an e-relaxed weight function. In such operations, belief mass tends to `spread' among worlds consistent with the same evidence (after repeated change operations).
Bayesian conditioning and the versions of imaging mentioned so far are not like this. In the latter operations, $\alpha$-worlds never give up their mass; they are `retentive'.
The idea captured by the retention condition says that when collecting probability mass for $w$ (because it is an $\alpha$ world), then do not collect anything from other $\alpha$ worlds. This rule affects equi-distance, weak and strict inversity.
In Section~\ref{sec:Probabilistic-Revision-via-Classical-Revision}, the reader will see that n-e-relaxed weight functions are also useful.

\begin{definition}
The EDI operation is said to be \emph{relaxed} or \emph{retentive} iff it satisfies the relaxation, respectively, retention property.
\end{definition}
Observe that a weight functions which satisfy e-relaxation cannot satisfy retention, and vice versa.

\subsection{Reciprocal Weights}

\citet{rm15b} define the weight to be applied to $b(w')$ when determining the new probability of $w$ as $\frac{1/d(w,w')}{\sum_{w''\in W, w''\neq w} 1/d(w,w'')}$. It is, however, ill-defined because it is undefined when $d(w,w')=0$ or $d(w,w'')=0$ (i.e., when $w=w'$ or $w=w''$).\footnote{Actually, \citet{rm15b} proposed two definitions for weights, one to be applied when ``adding worlds'' after a belief change and another when ``removing worlds''. In this paper, we use the on for adding worlds.} We thus adapt their definition based on the \textit{reciprocal} of distance as follows.
\begin{definition}
$\delta^\mathit{rcp}(\alpha,w,w') := \frac{\eta}{d(w,w')+\eta}$, for $\eta>0$, where $\eta$ is a real number.
\end{definition}
Because this definition is independent of evidence $\alpha$, we shall usually omit mentioning the evidence and simply write $\delta^\mathit{rcp}(w,w')$.

In the rest of this section, we use vocabulary $\{q,r\}$ and let $\eta=1$, unless stated otherwise. 
Recall that $b_{1.0}:=\langle 1,0,0,0\rangle$ and $b_{.3/.7}:=\langle 0.3,0.7,0,0\rangle$.
Two example applications of $\mathsf{EDI}^\mathit{rcp}$ follow.
\begin{example}~
\begin{itemize}
\item $b_{0.3/0.7}\:\mathsf{EDI}^\mathit{rcp}\:\lnot q(11)=0$
\item $b_{0.3/0.7}\:\mathsf{EDI}^\mathit{rcp}\:\lnot q(10)=0$
\item $b_{0.3/0.7}\:\mathsf{EDI}^\mathit{rcp}\:\lnot q(01)=\frac{1}{\gamma}[0.3\delta^\mathit{rcp}(01,11) + 0.7\delta^\mathit{rcp}(01,10) + 0\delta^\mathit{rcp}(01,01) + 0\delta^\mathit{rcp}(01,00)] = \frac{1}{\gamma}[0.3\times (1/2) + 0.7\times (1/3)] = \frac{1}{\gamma}[0.38\bar{3}] = 0.46$
\item $b_{0.3/0.7}\:\mathsf{EDI}^\mathit{rcp}\:\lnot q(00)=\frac{1}{\gamma}[0.3\delta^\mathit{rcp}(00,11) + 0.7\delta^\mathit{rcp}(00,10) + 0\delta^\mathit{rcp}(00,01) + 0\delta^\mathit{rcp}(00,00)] = \frac{1}{\gamma}[0.3\times (1/3) + 0.7\times (1/2)] = \frac{1}{\gamma}[0.45] = 0.54$
\end{itemize}
\end{example}
\begin{example}~
\begin{itemize}
\item $b_{1.0}\:\mathsf{EDI}^\mathit{rcp}\:\lnot q(11)=0$
\item $b_{1.0}\:\mathsf{EDI}^\mathit{rcp}\:\lnot q(10)=0$
\item $b_{1.0}\:\mathsf{EDI}^\mathit{rcp}\:\lnot q(01)=\frac{1}{\gamma}[1\delta^\mathit{rcp}(01,11) + 0\delta^\mathit{rcp}(01,10) + 0\delta^\mathit{rcp}(01,01) + 0\delta^\mathit{rcp}(01,00)] = \frac{1}{\gamma}[1\times (1/2) ] = \frac{1}{\gamma}[0.5] = 0.6$
\item $b_{1.0}\:\mathsf{EDI}^\mathit{rcp}\:\lnot q(00)=\frac{1}{\gamma}[1\delta^\mathit{rcp}(00,11) + 0\delta^\mathit{rcp}(00,10) + 0\delta^\mathit{rcp}(00,01) + 0\delta^\mathit{rcp}(00,00)] = \frac{1}{\gamma}[1\times (1/3)] = \frac{1}{\gamma}[0.\bar{3}] = 0.4$
\end{itemize}
\end{example}

\begin{proposition}
$\delta^\mathit{rcp}$ is an inverse-distance weight function, and also satisfies strict inversity, equi-distance and faithfulness, and is relaxed.
\end{proposition}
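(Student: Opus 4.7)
My plan is to verify each of the listed properties in turn by direct calculation from the definition $\delta^\mathit{rcp}(w,w') = \eta/(d(w,w')+\eta)$, using only elementary facts about $d$ (pseudo-distance axioms) together with $\eta>0$. Nothing deep is required; the main point is to confirm that the positive shift $+\eta$ in the denominator does the desired work of (a) keeping the expression always well-defined and strictly positive, and (b) preserving the order from $d$.

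First I would dispose of the inverse-distance postulates. For non-negativity, $d(w,w')\ge 0$ and $\eta>0$ give denominator $\ge\eta>0$, so $\delta^\mathit{rcp}\in(0,1]$. For identity, $d(w,w)=0$ yields $\eta/\eta=1$. For symmetry, use symmetry of $d$ directly. For weak inversity, the function $x\mapsto \eta/(x+\eta)$ is strictly decreasing on $[0,\infty)$, so $d(w,w')\ge d(w'',w''')$ forces $\delta^\mathit{rcp}(w,w')\le\delta^\mathit{rcp}(w'',w''')$. This shows $\delta^\mathit{rcp}$ is an inverse-distance weight function in the sense of Definition~\ref{def:inv-dist-weight-func}.

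Next I would use the same monotonicity observation to read off strict inversity (strict decrease gives the strict inequality) and equi-distance (equality of $d$-values gives equality of the fractions). For faithfulness, I would invoke the remark preceding Definition~\ref{def:inv-dist-weight-func} that identity together with strict inversity already implies faithfulness: if $w\ne w'$ then (assuming $d$ is itself faithful, which is the relevant case here, e.g.\ Hamming distance) $d(w,w')>0=d(w,w)$, and strict inversity combined with identity gives $\delta^\mathit{rcp}(w,w')<1$. I would flag this dependence on faithfulness of $d$ explicitly, since pseudo-distances in Definition~\ref{def:pseudo-func} need not be faithful in general.

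Finally, relaxedness is immediate: since $d(w,w')+\eta\ge\eta>0$, the value $\delta^\mathit{rcp}(\alpha,w,w')=\eta/(d(w,w')+\eta)$ is strictly positive for \emph{every} pair $(w,w')$, irrespective of whether $w$ or $w'$ satisfies $\alpha$. Hence both the e-relaxation clause (the case $w\Vdash\alpha$ and $w'\Vdash\alpha$) and the n-e-relaxation clause ($w\Vdash\alpha$ and $w'\not\Vdash\alpha$) are vacuously upheld: the conclusion $\delta\ne 0$ holds unconditionally. There is no real obstacle in this proposition; the only subtlety worth stating clearly is the tacit reliance on faithfulness of the underlying pseudo-distance when proving the faithfulness postulate for $\delta^\mathit{rcp}$.
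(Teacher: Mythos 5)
Your proposal is correct and follows essentially the same route as the paper's own proof: direct verification of each postulate from the strict monotonicity of $x\mapsto \eta/(x+\eta)$ on $[0,\infty)$, with relaxation falling out of strict positivity of the denominator. Your explicit flagging of the reliance on faithfulness of the underlying pseudo-distance $d$ in the faithfulness step is a fair observation — the paper's proof invokes ``the faithfulness of $d$'' at exactly that point without remarking that Definition~\ref{def:pseudo-func} does not guarantee it — but this does not change the substance of the argument.
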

\begin{proof}
For all $w,w',w'',w'''\in W$:

Non-negativity: $d(w,w')\geq 0$ and $\eta>0$. Thus, $\delta^\mathit{rcp}(w,w') = \eta/(d(w,w')+\eta)>0$.

Identity: By the identity constraint of $d$, $d(w,w)=0$ for all $w\in W$. Hence, $\delta^\mathit{rcp}(w,w)$\\ $= \eta/(d(w,w)+\eta)=1$.

Inversity: Assume $d(w,w')>d(w'',w''')$. Then $\delta^\mathit{rcp}(w,w') = \eta/(d(w,w')+\eta) < \eta/(d(w'',w''')+\eta) = \delta^\mathit{rcp}(w'',w''')$. This implies that $\delta^\mathit{rcp}$ is weakly and strictly inverse.

Equi-distance: Assume $d(w,w')=d(w'',w''')$. Then $\delta^\mathit{rcp}(w,w') = \eta/(d(w,w')+\eta)=$ $\eta/(d(w'',w''')$ $+$ $\eta = \delta^\mathit{rcp}(w'',w''')$.

Symmetry is implied by equi-distance.

Faithfulness: Assume $w\neq w'$. Then, by the faithfulness of $d$, $d(w,w')>0$. Thus, $\delta^\mathit{rcp}(w,w') = \eta/(d(w,w')+\eta) < 1$.

For all $w,w'\in W$, $\eta/(d(w,w')+\eta)>0$. Relaxation (evidence and non-evidence) is thus satisfied.
\end{proof}

\subsection{Difference Weights}

A new definition with a similar meaning follows.It is also
inversely proportional to distance. It is based on the \textit{difference} between some particular applicable distance and the maximum distance. 
\begin{definition}
$\delta^\mathit{dfr}(\alpha,w,w') := 
\frac{d^\mathit{max}+\eta-d(w,w')}{d^\mathit{max}+\eta}$, for $\eta>0$, where $\eta$ is a real number and $d^\mathit{max}:=\max\{d(w,w')\mid w,w'\in W\}$.
\end{definition}
Because this definition is independent of evidence $\alpha$, we shall usually omit mentioning the evidence and simply write $\delta^\mathit{dfr}(w,w')$.
Two examples applications of $\mathsf{EDI}^\mathit{dfr}$ follow.
\begin{example}~
\begin{itemize}
\item $b_{0.3/0.7}\:\mathsf{EDI}^\mathit{dfr}\:\lnot q(11)=0$
\item $b_{0.3/0.7}\:\mathsf{EDI}^\mathit{dfr}\:\lnot q(10)=0$
\item $b_{0.3/0.7}\:\mathsf{EDI}^\mathit{dfr}\:\lnot q(01)=\frac{1}{\gamma}[0.3\delta^\mathit{dfr}(01,11) + 0.7\delta^\mathit{dfr}(01,10) + 0\delta^\mathit{dfr}(01,01) + 0\delta^\mathit{dfr}(01,00)] = \frac{1}{\gamma}[0.3\times (3-d(01,11))/3 + 0.7\times (3-d(01,10))/3] = \frac{1}{\gamma}[0.3 \times (2/3) + 0.7\times (1/3)] = \frac{1}{\gamma}[0.4\bar{3}] = 0.4\bar{3}$
\item $b_{0.3/0.7}\:\mathsf{EDI}^\mathit{dfr}\:\lnot q(00)=\frac{1}{\gamma}[0.3\delta^\mathit{dfr}(00,11) + 0.7\delta^\mathit{dfr}(00,10) + 0\delta^\mathit{dfr}(00,01) + 0\delta^\mathit{dfr}(00,00)] = \frac{1}{\gamma}[0.3\times (1/3) + 0.7\times (2/3)] = \frac{1}{\gamma}[0.5\bar{6}] = 0.5\bar{6}$
\end{itemize}
\end{example}
\begin{example}~
\begin{itemize}
\item $b_{1.0}\:\mathsf{EDI}^\mathit{dfr}\:\lnot q(11)=0$
\item $b_{1.0}\:\mathsf{EDI}^\mathit{dfr}\:\lnot q(10)=0$
\item $b_{1.0}\:\mathsf{EDI}^\mathit{dfr}\:\lnot q(01)=\frac{1}{\gamma}[1\delta^\mathit{dfr}(01,11) + 0\delta^\mathit{dfr}(01,10) + 0\delta^\mathit{dfr}(01,01) + 0\delta^\mathit{dfr}(01,00)] = \frac{1}{\gamma}[1\times (3-d(01,11))/3] = \frac{1}{\gamma}[1 \times (2/3)] = 0.\bar{6}$
\item $b_{1.0}\:\mathsf{EDI}^\mathit{dfr}\:\lnot q(00)=\frac{1}{\gamma}[1\delta^\mathit{dfr}(00,11) + 0\delta^\mathit{dfr}(00,10) + 0\delta^\mathit{dfr}(00,01) + 0\delta^\mathit{dfr}(00,00)] = \frac{1}{\gamma}[1\times (1/3)] = 0.\bar{3}$
\end{itemize}
\end{example}

If $\eta$ were allowed to equal zero, there would be cases when EDI-belief-change is undefined for $\delta$ instantiated as $\delta^\mathit{dfr}$. For instance, suppose $b(10)=1$ and the evidence is $\lnot q\land r$. Using Hamming distance for $d$, $\delta^\mathit{dfr}(01,10)=0$. Hence, $b \:\mathit{EDI}\:\lnot q\land r (01) = \frac{1}{\gamma}[0+b(10)\delta^\mathit{dfr}(01,10)+0+0] = \frac{1}{\gamma}0$, which is undefined because $\gamma=0$. But as soon as $\eta>0$, $\delta^\mathit{dfr}(01,10)>0$ and $b \:\mathit{EDI}\:\lnot q\land r (01) = 1$.

\begin{proposition}
$\delta^\mathit{dfr}$ is an inverse-distance weight function, and also satisfies strict inversity, equi-distance and faithfulness, and is relaxed.
\end{proposition}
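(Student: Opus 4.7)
The plan is to follow the same template used for the analogous proposition about $\delta^\mathit{rcp}$: instantiate the definition and verify each of the seven postulates together with both relaxation conditions, one at a time. The key observation that drives every case is that the denominator $d^\mathit{max}+\eta$ is a single fixed positive real number (well-defined because $W$ is finite, and strictly positive because $\eta>0$), so every verification reduces to an elementary analysis of the numerator $N(w,w'):=d^\mathit{max}+\eta-d(w,w')$.

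First I would observe that because $d(w,w')\leq d^\mathit{max}$ by the definition of $d^\mathit{max}$, we have $N(w,w')\geq \eta > 0$. This simultaneously yields non-negativity of $\delta^\mathit{dfr}$ and both e-relaxation and n-e-relaxation, since the weight comes out strictly positive irrespective of the status of $w,w'$ with respect to $\alpha$. Identity follows from $d(w,w)=0$, which makes $N(w,w)$ equal to the denominator and hence $\delta^\mathit{dfr}(w,w)=1$. Symmetry and equi-distance are immediate from the fact that $\delta^\mathit{dfr}$ depends on its two world-arguments only through $d(w,w')$, which is itself symmetric.

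For weak and strict inversity, the step I would make explicit is that the map $x\mapsto (d^\mathit{max}+\eta-x)/(d^\mathit{max}+\eta)$ is strictly decreasing in $x$; thus $d(w,w')\geq d(w'',w''')$ gives $\delta^\mathit{dfr}(w,w')\leq\delta^\mathit{dfr}(w'',w''')$, with strict inequality preserved in the strict case. Faithfulness of $\delta^\mathit{dfr}$ then follows from faithfulness of $d$ (implicitly adopted, as in the $\delta^\mathit{rcp}$ proof): if $w\neq w'$ then $d(w,w')>0$ strictly reduces $N(w,w')$ below the denominator, so $\delta^\mathit{dfr}(w,w')<1$.

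I do not expect any substantial obstacle, since each condition collapses to a one-line arithmetic check. The only delicate bookkeeping concern is ensuring that $d^\mathit{max}$ is well-defined (which requires the finiteness of $W$ from the background) and that the denominator is never zero. The latter is guaranteed by the $\eta>0$ clause built into the definition; without it, the boundary case $d(w,w')=d^\mathit{max}$ would make $N(w,w')$ vanish and break both faithfulness and relaxation — precisely the motivation for introducing $\eta$ in the first place, as illustrated by the $\lnot q\land r$ example immediately preceding the proposition.
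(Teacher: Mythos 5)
Your proposal is correct and follows essentially the same route as the paper: a direct postulate-by-postulate arithmetic verification on the formula, with non-negativity and relaxation both falling out of $d^\mathit{max}+\eta-d(w,w')\geq\eta>0$, identity from $d(w,w)=0$, symmetry/equi-distance from the weight depending on the worlds only through $d$, inversity from monotonicity in $d(w,w')$, and faithfulness of $\delta^\mathit{dfr}$ from the (implicitly assumed) faithfulness of $d$. Your packaging of inversity via the strictly decreasing map $x\mapsto(d^\mathit{max}+\eta-x)/(d^\mathit{max}+\eta)$ is marginally cleaner than the paper's, which derives weak inversity from the strict case alone, but it is not a different argument.
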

\begin{proof}
For all $w,w',w'',w'''\in W$:

Non-negativity: By the definition of $d^\mathit{max}$, $d^\mathit{max}+\eta-d(w,w')\geq 0$, for all $w,w'\in W$. And due to the faithfulness condition of the pseudo-distance function, $d^\mathit{max}>0$. Thus, $\frac{d^\mathit{max}+\eta-d(w,w')}{d^\mathit{max}+\eta}\geq0$.

Identity: By the identity constraint of $d$, $d(w,w)=0$ for all $w\in W$. Hence, $\frac{d^\mathit{max}+\eta-d(w,w)}{d^\mathit{max}+\eta}=1$.

Equi-distance: Assume $d(w,w')=d(w'',w''')$. Then $\frac{d^\mathit{max}+\eta-d(w,w')}{d^\mathit{max}+\eta} = \frac{d^\mathit{max}+\eta-d(w'',w''')}{d^\mathit{max}+\eta}$.

Symmetry is implied by equi-distance.

Inversity: Assume $d(w,w')>d(w'',w''')$. Then $\delta^\mathit{dfr}(w,w') = \frac{d^\mathit{max}+\eta-d(w,w')}{d^\mathit{max}+\eta} < \frac{d^\mathit{max}+\eta-d(w'',w''')}{d^\mathit{max}+\eta} = \delta^\mathit{dfr}(w'',w''')$. This implies that $\delta^\mathit{dfr}$ is weakly and strictly inverse.

Faithfulness: Assume $w\neq w'$. Then, by the faithfulness of $d$, $d(w,w')>0$. Thus, $\delta^\mathit{dfr}(w,w') = \frac{d^\mathit{max}+\eta-d(w,w')}{d^\mathit{max}+\eta} < 1$.

For all $w,w'\in W$, $d^\mathit{max}+\eta-d(w,w)>0$. Relaxation (evidence and non-evidence) is thus satisfied.
\end{proof}

\subsection{Bayesian Conditioning i.t.o. EDI}

Bayesian conditioning can be nicely simulated as an EDI operator.
Let $\delta^\mathit{BC}$ be defined as follows.
\begin{equation*}
\delta^\mathit{BC}(\alpha,w,w'):=\left\lbrace
\begin{array}{ll}
1 & \textit{if }w=w'\\
0 & \textit{otherwise}
\end{array}
\right.
\end{equation*}
Because the definition of $\delta^\mathit{BC}$ is independent of evidence $\alpha$, we shall usually omit mentioning the evidence and simply write $\delta^\mathit{BC}(w,w')$.

\begin{proposition}
$b\,\mathsf{BC}\,\alpha = b\,\mathsf{EDI}^\mathit{BC}\,\alpha$ iff $b(\alpha)>0$. That is, $\mathit{BC}$ is EDI-compatible iff $b(\alpha)>0$.
\end{proposition}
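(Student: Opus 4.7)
The plan is a direct, symmetric calculation exploiting the fact that $\delta^{\mathit{BC}}$ acts as a Kronecker delta: for every $w$, the inner sum $\sum_{w'\in W}b(w')\delta^{\mathit{BC}}(w,w')$ collapses to the single term $b(w)$. I would prove each direction of the biconditional separately, with the backward direction carrying all the real content and the forward direction handled by contrapositive.

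For the $(\Leftarrow)$ direction, assume $b(\alpha)>0$. First I would simplify the normalizing factor:
\[
\gamma \;=\; \sum_{w\in\mathit{Mod}(\alpha)}\sum_{w'\in W} b(w')\,\delta^{\mathit{BC}}(w,w') \;=\; \sum_{w\in\mathit{Mod}(\alpha)} b(w) \;=\; b(\alpha),
\]
so $\gamma>0$ and $\mathsf{EDI}^{\mathit{BC}}$ is well-defined. Then for any $w\Vdash\alpha$, the EDI formula yields $b(w)/b(\alpha)$; since $w\Vdash\alpha$ implies $b(\phi_w\land\alpha)=b(w)$, this equals $b(w\mid\alpha)$, matching the $\mathsf{BC}$ value. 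For $w\not\Vdash\alpha$, EDI assigns $0$ by definition, and $\mathsf{BC}$ also assigns $b(\phi_w\land\alpha)/b(\alpha)=0/b(\alpha)=0$ because $\phi_w\land\alpha$ is inconsistent. The two belief states thus agree pointwise on $W$.

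For the $(\Rightarrow)$ direction, I would argue the contrapositive. Suppose $b(\alpha)=0$. By the definition of $\mathsf{BC}$ given earlier, $b\,\mathsf{BC}\,\alpha=\emptyset$. The same $\gamma$-calculation as above now yields $\gamma=b(\alpha)=0$, so the EDI formula requires division by zero and does not define a belief state; in particular, $\mathsf{EDI}^{\mathit{BC}}$ does not produce the empty set. Hence $b\,\mathsf{BC}\,\alpha\neq b\,\mathsf{EDI}^{\mathit{BC}}\,\alpha$, completing the contrapositive.

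There is no genuine obstacle here; the only subtlety is notational, namely being explicit about the asymmetry between $\mathsf{BC}$ returning $\emptyset$ and $\mathsf{EDI}^{\mathit{BC}}$ being undefined in the $b(\alpha)=0$ case, and noting the identity $b(\phi_w\land\alpha)=b(w)\cdot[\![w\Vdash\alpha]\!]$ used silently in both directions. The proof is essentially one line of algebra per direction once the Kronecker-delta collapse is observed.
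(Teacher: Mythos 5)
Your proof is correct and follows essentially the same route as the paper's: both hinge on the Kronecker-delta collapse $\sum_{w'\in W}b(w')\delta^{\mathit{BC}}(w,w')=b(w)$ and the identification $\gamma=b(\alpha)$. The only difference is that you explicitly argue the forward direction via the contrapositive ($b(\alpha)=0$ makes $\mathsf{BC}$ return $\emptyset$ while $\mathsf{EDI}^{\mathit{BC}}$ is undefined), which the paper leaves implicit; that is a harmless and arguably welcome addition.
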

\begin{proof}
\begin{align*}
b\,\mathsf{BC}\,\alpha (w) &= \frac{b(w,\alpha)}{b(\alpha)},b(\alpha)>0\\
& =\left\lbrace
\begin{array}{ll}
0 & \textit{if }w\not\Vdash\alpha\\
\frac{b(w)}{b(\alpha)} & \textit{otherwise}
\end{array}
\right.\\
& =\left\lbrace
\begin{array}{ll}
0 & \textit{if }w\not\Vdash\alpha\\
\frac{1}{\gamma}b(w) & \textit{otherwise, where }\gamma=b(\alpha)
\end{array}
\right.\\
& =\left\lbrace
\begin{array}{ll}
0 & \textit{if }w\not\Vdash\alpha\\
\frac{1}{\gamma}\sum_{w'\in W}b(w')\delta^\mathit{BC}(w,w') & \textit{otherwise},
\end{array}
\right.\\
&\textit{where }\gamma=\sum_{w\in\mathit{Mod}(\alpha)}\sum_{w'\in W}b(w')\delta^\mathit{BC}(w,w')
\end{align*}
implying that $b\,\mathsf{BC}\,\alpha =  \{(w,p)\mid p=0 \mbox{ if }w\not\Vdash\alpha, \mbox{ else }p=\frac{1}{\gamma}\sum_{w'\in W}b(w')\delta^\mathit{BC}(w,w')\} = b\,\mathsf{EDI}^\mathit{BC}\,\alpha$, for $b(\alpha)>0$.
\end{proof}

\begin{proposition}
$\delta^\mathit{BC}$ is an inverse-distance weight function satisfying equi-distance, symmetry, retention and faithfulness, but not strict inversity.
\end{proposition}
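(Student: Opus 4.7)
The plan is to verify each of the listed postulates directly from the two-valued definition of $\delta^{BC}$ (which is $1$ on the diagonal and $0$ off it), and to exhibit a counterexample for strict inversity. Because $\delta^{BC}$ only looks at whether $w = w'$, almost every condition reduces to a case split on whether the arguments are equal, combined with the constraint that the underlying pseudo-distance function $d$ assigns $0$ exactly to the diagonal. I would work under the (implicit) assumption that $d$ is faithful, since otherwise the equi-distance claim already fails: two distinct worlds at distance $0$ would have weight $0$ while the diagonal pair at distance $0$ has weight $1$.

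First I would dispatch the easy postulates. Non-negativity is immediate since $\delta^{BC}\in\{0,1\}$; identity is by definition; symmetry follows because $w=w'$ iff $w'=w$; retention holds because if $w\neq w'$ then $\delta^{BC}(w,w')=0$; and faithfulness is the same observation ($w\neq w'$ gives $0<1$). For equi-distance, assuming $d(w,w')=d(w'',w''')$, I would split on whether this common value is $0$ or positive: in the first case faithfulness of $d$ makes both pairs diagonal, so both weights equal $1$; in the second case neither pair is diagonal, so both weights equal $0$.

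Weak inversity is the one step that needs a little more care. Given $d(w,w')\geq d(w'',w''')$, I would show $\delta^{BC}(w,w')\leq\delta^{BC}(w'',w''')$ by considering when the right-hand side could be $0$: if $\delta^{BC}(w'',w''')=0$ then $w''\neq w'''$, so $d(w'',w''')>0$ by faithfulness, whence $d(w,w')>0$ and $\delta^{BC}(w,w')=0$ as well; in all other cases the right-hand side is $1$ and the inequality is trivial. In particular weak inversity follows cleanly from equi-distance plus faithfulness of $d$.

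Finally, to refute strict inversity I would exhibit any four worlds (or just three distinct worlds $w,w',w''$ with $d(w,w')>d(w',w'')>0$) whose pairwise distances are strictly ordered but all positive. Then both off-diagonal weights are $0$, violating the required strict inequality. The main obstacle, as flagged above, is the faithfulness assumption on $d$: without it the equi-distance and weak inversity arguments collapse, so I would either state this hypothesis explicitly or rely on the paper's standing convention that the distance under consideration (e.g.\ Hamming) is faithful.
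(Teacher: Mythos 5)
Your proof is correct and follows essentially the same route as the paper's: a case split on whether the two arguments coincide, combined with a zero/nonzero case split on the distances, plus a pair of distinct worlds at distinct positive distances to refute strict inversity. The one point you handle better is making explicit the faithfulness assumption on $d$ (needed so that $d(w,w')=0$ forces $w=w'$), which the paper's own proof uses silently in its equi-distance and weak-inversity cases.
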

\begin{proof}
For all $w,w',w'',w'''\in W$:

Clearly, $\delta^\mathit{BC}(w,w')$ is non-negative.

Identity follows directly from the definition.

Weak Inversity: Assume $d(w,w')>d(w'',w''')$. There are two cases to consider: Either $d(w,w')>d(w'',w''')=0$ or $d(w,w')>d(w'',w''')\neq0$. Assume $d(w,w')>d(w'',w''')=0$. Then $w''=w'''$ and $w\neq w'$, in which case, $0=\delta^\mathit{BC}(w,w')<\delta^\mathit{BC}(w'',w''')=1$. Now assume $d(w,w')>d(w'',w''')\neq0$. Then $w''\neq w'''$ and $w\neq w'$, in which case, $\delta^\mathit{BC}(w,w')=\delta^\mathit{BC}(w'',w''')=0$. By combining the two cases, one sees that the desired result follows.
$\delta^\mathit{BC}$ is thus an inverse-distance weight function.

Equi-distance: Assume $d(w,w')=d(w'',w''')$. There are two cases to consider: Either $d(w,w')=d(w'',w''')=0$ or $d(w,w')=d(w'',w''')\neq0$. Assume $d(w,w')=d(w'',w''')=0$. Then $w=w'$ and $w''=w'''$, in which case, $\delta^\mathit{BC}(w,w') =$ $\delta^\mathit{BC}(w'',w''')$ $=$ $1$.
Now assume $d(w,w')=d(w'',w''')\neq0$. Then $w\neq w'$ and $w''\neq w'''$, in which case, $\delta^\mathit{BC}(w,w') = \delta^\mathit{BC}(w'',w''') = 0$.

Symmetry is implied by equi-distance.

Faithfulness follows directly from the definition.

Assume again that $d(w,w')>d(w'',w''')\neq0$. Then $w''\neq w'''$ and $w\neq w'$, in which case, $\delta^\mathit{BC}(w,w')$ $=$\\ $\delta^\mathit{BC}(w'',w''')=0$. Thus strict inversity does not hold.
\end{proof}

\begin{proposition}
$\delta^\mathit{BC}$ is retentive.
\end{proposition}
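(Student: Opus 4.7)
The plan is to verify the retention condition by a direct unpacking of the definition of $\delta^\mathit{BC}$. Retention requires that whenever $\alpha \in L$ and $w, w' \in W$ satisfy $w \Vdash \alpha$, $w' \Vdash \alpha$, and $w \neq w'$, we have $\delta(\alpha, w, w') = 0$. So I would fix arbitrary such $\alpha$, $w$, $w'$ and simply evaluate $\delta^\mathit{BC}(\alpha, w, w')$ using its piecewise definition.

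Since $\delta^\mathit{BC}(\alpha, w, w')$ is defined to be $1$ if $w = w'$ and $0$ otherwise (with no dependence on $\alpha$ at all), the single hypothesis $w \neq w'$ already forces the second branch, giving $\delta^\mathit{BC}(\alpha, w, w') = 0$. The assumptions $w \Vdash \alpha$ and $w' \Vdash \alpha$ are not needed; they appear in the antecedent of retention but play no role in the conclusion for this particular weight function.

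There is essentially no obstacle here: the proof is a one-line consequence of the definition, and the only remark worth making is that $\delta^\mathit{BC}$ being $\alpha$-independent makes retention reduce to the trivial observation that distinct worlds receive weight $0$. If anything, the only thing to double-check is internal consistency with the earlier proposition stating that $\delta^\mathit{BC}$ is a valid weight function (in particular, that this branch of the definition does not conflict with identity, which applies only when $w = w'$). Since that was already established in the preceding proposition, the retention claim follows immediately.
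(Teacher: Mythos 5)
Your proof is correct and matches the paper's own argument: both simply observe that $w \neq w'$ forces the zero branch of the definition of $\delta^\mathit{BC}$, making the retention condition immediate. Your added remark that the hypotheses $w \Vdash \alpha$ and $w' \Vdash \alpha$ are not even needed is accurate but does not change the substance.
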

\begin{proof}
For all $w,w',w'',w'''\in W$:

Assume $w\Vdash\alpha$ and $w'\Vdash\alpha$ and $w\neq w'$. Then, immediately, by the definition of $\delta^\mathit{BC}$, $\delta^\mathit{BC}(w,w')=0$.
\end{proof}

\subsection{Lewis Imaging i.t.o. EDI}

Lewis imaging can be simulated as an EDI operator, but not as cleanly as $\mathsf{BC}$.
Let $\delta^\mathit{LI}$ be defined as follows.
\begin{equation*}
\delta^\mathit{LI}(\alpha,w,w'):=\left\lbrace
\begin{array}{ll}
x & \textit{if }w\not\Vdash\alpha\\
1 & \textit{if }w\in\mathit{Min}(\alpha,w',d^\mathit{LI})\\
0 & \textit{otherwise},
\end{array}
\right.
\end{equation*}
where $x$ is fixed between 0 and 1, inclusive, and pseudo-distance function $d^\mathit{LI}$ induces a mapping from worlds to total orders (instead of the weaker total pre-orders).


\begin{proposition}
\label{prp:LI=EDI}
$b\,\mathsf{LI}\,\alpha = b\,\mathsf{EDI}^\mathit{LI}\,\alpha$.
That is, $\mathit{LI}$ is EDI-compatible.
\end{proposition}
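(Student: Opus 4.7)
The plan is to show that for every $w\in W$ and every $b$, the new probability $b\,\mathsf{EDI}^\mathit{LI}\,\alpha(w)$ collapses to the Lewis-imaging value $b\,\mathsf{LI}\,\alpha(w)$. I would split on whether $w\Vdash\alpha$. The case $w\not\Vdash\alpha$ is immediate from the definitions: both operators stipulate probability $0$. So the work is entirely in the case $w\Vdash\alpha$.

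For $w\Vdash\alpha$, I would first observe that the first branch of $\delta^\mathit{LI}$ (the one producing the value $x$) never fires, since that branch requires $w\not\Vdash\alpha$. Hence the value $x$ is irrelevant both in the numerator and, as I will confirm below, in $\gamma$. Next, I would use the fact that $d^\mathit{LI}$ induces a total \emph{order} (not merely a preorder) on worlds for each fixed world; this makes $\mathit{Min}(\alpha,w',d^\mathit{LI})$ a singleton, namely $\{w'^\alpha\}$. Consequently, for a fixed $\alpha$-world $w$, $\delta^\mathit{LI}(\alpha,w,w')=1$ precisely when $w'^\alpha=w$, and $\delta^\mathit{LI}(\alpha,w,w')=0$ otherwise. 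Therefore
\[
\sum_{w'\in W}b(w')\delta^\mathit{LI}(\alpha,w,w') \;=\; \sum_{\substack{w'\in W\\ w'^\alpha=w}}b(w'),
\]
which is exactly the unnormalised Lewis aggregation.

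Then I would handle the normaliser $\gamma$. Since the outer sum in $\gamma$ ranges only over $w\in\mathit{Mod}(\alpha)$, the $x$-branch again contributes nothing. Swapping the two sums and using the singleton property of $\mathit{Min}$, every $w'\in W$ is counted exactly once (under the unique $w=w'^\alpha\in\mathit{Mod}(\alpha)$), giving
\[
\gamma \;=\; \sum_{w\in\mathit{Mod}(\alpha)}\sum_{\substack{w'\in W\\ w'^\alpha=w}}b(w') \;=\; \sum_{w'\in W}b(w') \;=\; 1.
\]
Combining the two computations then yields $b\,\mathsf{EDI}^\mathit{LI}\,\alpha(w)=b\,\mathsf{LI}\,\alpha(w)$ for every $w$, establishing EDI-compatibility.

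I do not expect a genuine obstacle here; the only delicate point is being explicit that the free parameter $x$ in $\delta^\mathit{LI}$ plays no role in any EDI computation (because EDI zeroes out non-$\alpha$-worlds before ever consulting the weight, and the normaliser only ranges over $\alpha$-worlds). The second, arguably more important, point to highlight in the proof is the use of total orders rather than preorders for $d^\mathit{LI}$, since this is what reduces $\mathit{Min}(\alpha,w',d^\mathit{LI})$ to the single world $w'^\alpha$ needed by Lewis's original definition.
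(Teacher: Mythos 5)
Your proof is correct and follows essentially the same route as the paper's: split on whether $w\Vdash\alpha$, note that the $x$-branch of $\delta^\mathit{LI}$ is irrelevant for $\alpha$-worlds, and identify $\delta^\mathit{LI}(\alpha,w,\cdot)$ with the indicator of $w'^\alpha=w$ via the singleton property of $\mathit{Min}$ under a total order. The one place you go beyond the paper is in explicitly verifying that $\gamma=1$ (by swapping the sums and using that each $w'$ has a unique image $w'^\alpha\in\mathit{Mod}(\alpha)$), a detail the paper's proof leaves implicit but which is needed to confirm that normalisation does not distort the Lewis values.
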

\begin{proof}
Let $w\not\Vdash\alpha$. Then, independent of the definition of $\delta^\mathit{LI}$, $b\,\mathsf{LI}\,\alpha(w) = b\,\mathsf{EDI}^\mathit{LI}\,\alpha(w) = 0$.

Now let $w\Vdash\alpha$. Then,
\begin{eqnarray*}
b\,\mathsf{LI}\,\alpha(w) &=& \sum_{\substack{w'\in W\\w'^\alpha=w}}b(w')\\
&=& \sum_{w'\in W}b(w')\delta^\mathit{LI-}(w,w'),
\end{eqnarray*}
where
\begin{eqnarray*}
\delta^\mathit{LI-}(w,w')&:=&\left\lbrace
\begin{array}{ll}
1 & \textit{if } w'^\alpha=w\\
0 & \textit{otherwise}
\end{array}
\right.\\
&=& \left\lbrace
\begin{array}{ll}
1 & \textit{if } w\in\mathit{Min}(\alpha,w',d^\mathit{LI})\\
0 & \textit{otherwise}
\end{array}
\right.
\end{eqnarray*}
Noting that the first line of the definition of $\delta^\mathit{LI}$ is applicable only if $w\not\Vdash\alpha$, then the result follows when combining the cases when $w\not\Vdash\alpha$ and $w\Vdash\alpha$.
\end{proof}

Whereas Proposition~\ref{prp:LI=EDI} says that there exists a weight function for simulating Lewis imaging with EDI, Proposition~\ref{def:LI-no-inv-dist-weight-func} says that the function cannot be \textit{inverse-distance}.
\begin{proposition}
\label{def:LI-no-inv-dist-weight-func}
There exists no inverse-distance weight function $\delta^*$ for which $b\,\mathsf{LI}\,\alpha = b\,\mathsf{EDI}^*\,\alpha$.
\end{proposition}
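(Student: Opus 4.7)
The plan is to assume toward contradiction that some inverse-distance weight function $\delta^*$ makes $b\,\mathsf{LI}\,\alpha = b\,\mathsf{EDI}^*\,\alpha$ hold for every $b$ and $\alpha$, and then exhibit a concrete pseudo-distance, formula, and belief state that break this.

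The key first step is to reduce $\delta^*$ to a function of distance alone. If $d(w_1,w_2)=d(w_3,w_4)$, applying weak inversity in both directions forces $\delta^*(\alpha,w_1,w_2)=\delta^*(\alpha,w_3,w_4)$, so I can write $\delta^*(\alpha,w,w')=f_\alpha(d(w,w'))$ for some non-increasing $f_\alpha:\mathbb{Z}_{\geq 0}\to\mathbb{R}_{\geq 0}$ with $f_\alpha(0)=1$ (by identity). Next I would extract two sharp constraints by running EDI on point-mass belief states and matching the result against Lewis imaging. If $b$ assigns mass $1$ to an $\alpha$-world $u$, Lewis leaves all mass on $u$, forcing $f_\alpha(d(v,u))=0$ for every other $\alpha$-world $v$. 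If $b$ assigns mass $1$ to a non-$\alpha$-world $w'$, Lewis moves all mass to the unique closest $\alpha$-world $w'^\alpha$ (unique because $d^\mathit{LI}$ induces total orders), which in turn forces $f_\alpha(d(w'^\alpha,w'))>0$: otherwise the EDI normaliser $\gamma$ vanishes and $b\,\mathsf{EDI}^*\,\alpha$ is undefined while $b\,\mathsf{LI}\,\alpha$ is not.

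The contradiction is then engineered by cooking up a pseudo-distance in which some non-$\alpha$-world's closest-$\alpha$-distance is at least the distance between two distinct $\alpha$-worlds. Concretely, take $\Prp=\{q,r\}$, $\alpha=\lnot q$ (so $\mathit{Mod}(\alpha)=\{01,00\}$), and define $d$ symmetrically by $d(11,10)=1$, $d(11,01)=3$, $d(11,00)=4$, $d(10,01)=4$, $d(10,00)=3$, $d(01,00)=2$. Routine checks confirm the four pseudo-distance axioms and that from each world the three non-self distances are pairwise distinct, so $\leq^d_w$ is a total order as Lewis requires. The first constraint, applied at $u=01$, forces $f_{\lnot q}(2)=0$; the second, applied at $w'=11$ (whose closest $\lnot q$-world $01$ lies at distance $3$), forces $f_{\lnot q}(3)>0$. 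Weak inversity demands $f_{\lnot q}(3)\leq f_{\lnot q}(2)=0$, delivering the contradiction. The main obstacle is arriving at such a pseudo-distance -- it must simultaneously obey the triangle inequality, yield total orders from every world, and place $d(w'^\alpha,w')$ above some $\alpha$-to-$\alpha$ distance -- but with the values above, this verification is routine.
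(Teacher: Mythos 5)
Your proof is correct, and while it rests on the same underlying tension as the paper's argument, it takes a noticeably different and more complete route. The paper's proof works directly with the specific function $\delta^{\mathit{LI}}$ it has already defined: it first observes that $\delta^{\mathit{LI}}$ is retentive, then picks worlds with $d^{\mathit{LI}}(w,w')=d^{\mathit{LI}}(w'',w''')$, where $w$ is the unique closest $\alpha$-world to a non-$\alpha$-world $w'$ and $w'',w'''$ are distinct $\alpha$-worlds, so that $\delta^{\mathit{LI}}(\alpha,w,w')=1$ while retention forces $\delta^{\mathit{LI}}(\alpha,w'',w''')=0$, killing weak inversity. You instead start from an \emph{arbitrary} candidate $\delta^*$ and use weak inversity in both directions to collapse it to a non-increasing function $f_\alpha$ of distance, then extract the two constraints $f_\alpha(d(v,u))=0$ (for distinct $\alpha$-worlds, from a point mass on $u$) and $f_\alpha(d(w'^\alpha,w'))>0$ (from a point mass on a non-$\alpha$-world, via non-vanishing of $\gamma$) by matching EDI against Lewis imaging on point-mass belief states, and finally exhibit an explicit pseudo-distance (which I have checked satisfies the four axioms and induces total orders from every world) in which $d(w'^\alpha,w')=3 \geq 2 = d(01,00)$, so monotonicity of $f_{\lnot q}$ yields the contradiction. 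What your approach buys is rigour on the quantifier: the paper only shows that its particular $\delta^{\mathit{LI}}$ fails weak inversity, and the step from there to ``no inverse-distance weight function can simulate $\mathsf{LI}$'' is left implicit, whereas your point-mass argument pins down the behaviour that \emph{any} simulating weight function must have before deriving the clash; it also supplies a fully concrete witness where the paper merely stipulates that suitable worlds with the required distance relations exist. What the paper's version buys is brevity and the reuse of its already-established retention property. One small presentational caveat: both your argument and the paper's establish the impossibility for a particular choice of the underlying pseudo-distance (and hence of the Lewis ordering), which is all the proposition can reasonably mean; it would be worth one sentence in your write-up making explicit that the weight function is assumed to be inverse-distance with respect to the same $d$ that induces the Lewis orders.
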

\begin{proof}
We provide an example for which weak inversity must fail. $\delta^\mathit{LI}$ is thus not an inverse-distance weight function for this example, nor in general.

Recall that $d^\mathit{LI}$ is such that for every $\alpha$, for each of the worlds, a unique closest $\alpha$-world can be identified.

Observe that $\mathsf{EDI}^\mathit{LI}$ is retentive: Recall the retention property: if $w\Vdash\alpha$ and $w'\Vdash\alpha$ and $w\neq w'$, then $\delta^\mathit{LI}(\alpha,w,w')=0$. Directly, the first line of the definition of $\delta^\mathit{LI}$ is not applicable, and it is impossible for $w\in\mathit{Min}(\alpha,w',d^\mathit{LI})$ \underline{and} $w'\Vdash\alpha$ and $w\neq w'$, making the second line inapplicable. Only the third line is applicable, and retention thus holds.

Now let $d^\mathit{LI}(w,w')=d^\mathit{LI}(w'',w''')$. 
And let $w,w'',w'''\Vdash\alpha$. Then $\mathit{Min}(w,\alpha,d^\mathit{LI})=\{w\}$,\\ $\mathit{Min}(w'',\alpha,d^\mathit{LI})=\{w''\}$, $\mathit{Min}(w''',\alpha,d^\mathit{LI})=\{w'''\}$ and let $\mathit{Min}(w',\alpha,d^\mathit{LI})=\{w\}$.
Note that $\delta^\mathsf{LI}(\alpha,w,w')$ must equal 1 (second line of definition) and by retention, $\delta^\mathit{LI}(\alpha,w'',w''') = 0$. Therefore, $d^\mathit{LI}(w,w')$\\ $\geq d^\mathit{LI}(w'',w''')$, but $\delta^\mathit{LI}(\alpha,w,w')\not\leq\delta^\mathit{LI}(\alpha,w'',w''')$.
\end{proof}

\begin{proposition}
$\delta^\mathit{LI}$ is retentive.
\end{proposition}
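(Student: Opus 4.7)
The plan is a direct case analysis on the three clauses in the definition of $\delta^\mathit{LI}$, under the retention hypothesis that $w\Vdash\alpha$, $w'\Vdash\alpha$, and $w\neq w'$. We want to conclude $\delta^\mathit{LI}(\alpha,w,w')=0$, i.e., that the third (``otherwise'') clause is the only one that fires.

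First, the top clause requires $w\not\Vdash\alpha$ and so is ruled out by hypothesis. Next, the middle clause requires $w\in\mathit{Min}(\alpha,w',d^\mathit{LI})$. Here I would use the key property of $d^\mathit{LI}$ recalled just above: it induces a \emph{total order}, not merely a total preorder, so for every world and every $\alpha$ there is a unique closest $\alpha$-world. Since $w'$ itself satisfies $\alpha$ and $d^\mathit{LI}(w',w')=0$ (identity of the pseudo-distance), $w'$ is the unique $\alpha$-world minimising $d^\mathit{LI}(\cdot,w')$, hence $\mathit{Min}(\alpha,w',d^\mathit{LI})=\{w'\}$. Combined with $w\neq w'$, this rules out the middle clause as well, leaving only the third clause, which yields $\delta^\mathit{LI}(\alpha,w,w')=0$ as required.

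There is essentially no hard part; this argument already appears embedded in the proof of Proposition~\ref{def:LI-no-inv-dist-weight-func}, where retention of $\mathsf{EDI}^\mathit{LI}$ was observed in passing. The only subtlety worth flagging explicitly is the appeal to totality (as opposed to mere total preorder) of $d^\mathit{LI}$ to conclude that $\mathit{Min}(\alpha,w',d^\mathit{LI})$ is the singleton $\{w'\}$ rather than possibly a larger set containing $w'$ together with other $\alpha$-worlds at distance zero from $w'$; this ensures the exclusion of $w$ from that minimiser set.
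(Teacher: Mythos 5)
Your proof is correct and follows essentially the same route as the paper's: rule out the first clause since $w\Vdash\alpha$, observe that $\mathit{Min}(\alpha,w',d^\mathit{LI})=\{w'\}$ whenever $w'\Vdash\alpha$ so that $w\neq w'$ rules out the second clause, and conclude $\delta^\mathit{LI}(\alpha,w,w')=0$ from the third. Your explicit justification of the singleton claim via the total-order property of $d^\mathit{LI}$ is a welcome bit of extra detail that the paper merely asserts.
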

\begin{proof}
Assume $w,w'\Vdash\alpha$ and $w\neq w'$.
Only the second and third lines of the definition of $\delta^\mathit{LI}$ are applicable.
We know that whenever $w''\Vdash\alpha$, then $\mathit{Min}(w'',\alpha,d^\mathit{LI})=\{w''\}$. Hence, due to $w\neq w'$, the second line cannot be applicable.
Therefore, $\delta^\mathit{LI}(\alpha,w,w')=0$.
\end{proof}

\subsection{Generalized Imaging i.t.o. EDI}

Generalized imaging can be simulated as an EDI operator, and where $\mathsf{LI}$ is always retentive,  $\mathsf{GI}$ is only retentive under a reasonable condition.
Let $\delta^\mathit{GI}$ be defined as follows.
\begin{equation*}
\delta^\mathit{GI}(\alpha,w,w'):=\left\lbrace
\begin{array}{ll}
1 & \textit{if }w=w' \textit{ and } w\not\Vdash\alpha\\
1/|\mathit{Min}(\alpha,w',d)| & \textit{if }w\in\mathit{Min}(\alpha,w',d)\\
0 & \textit{otherwise}
\end{array}
\right.
\end{equation*}

\begin{proposition}
\label{prp:GI=EDI}
$b\,\mathsf{GI}\,\alpha = b\,\mathsf{EDI}^\mathit{GI}\,\alpha$. That is, 
$\mathit{GI}$ is EDI-compatible.
\end{proposition}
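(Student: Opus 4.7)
The plan is to split into cases based on whether $w \Vdash \alpha$, reduce the EDI sum using the definition of $\delta^\mathit{GI}$, and show that the normalizing factor $\gamma$ equals $1$ so that no actual normalization occurs. The equivalence then reads off by direct comparison.

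First I would dispatch the trivial case: if $w \not\Vdash \alpha$, then the $\mathsf{GI}$ definition sets $b\,\mathsf{GI}\,\alpha(w) = 0$, and the EDI definition likewise sets $b\,\mathsf{EDI}^\mathit{GI}\,\alpha(w) = 0$ without reference to $\delta^\mathit{GI}$ at all. So assume $w \Vdash \alpha$. In this case, the first clause of $\delta^\mathit{GI}$ (which requires $w \not\Vdash \alpha$) cannot fire, so for every $w' \in W$ we have
\[
\delta^\mathit{GI}(\alpha, w, w') = \begin{cases} 1/|\mathit{Min}(\alpha, w', d)| & \text{if } w \in \mathit{Min}(\alpha, w', d)\\ 0 & \text{otherwise.} \end{cases}
\]
Consequently the inner sum $\sum_{w' \in W} b(w')\delta^\mathit{GI}(\alpha, w, w')$ collapses to $\sum_{w' \in W,\, w \in \mathit{Min}(\alpha, w', d)} b(w')/|\mathit{Min}(\alpha, w', d)|$, which is precisely the summand in the $\mathsf{GI}$ definition.

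It remains to verify $\gamma = 1$. The plan is to swap the double sum (Fubini):
\[
\gamma = \sum_{w \in \mathit{Mod}(\alpha)} \sum_{w' \in W} b(w')\delta^\mathit{GI}(\alpha, w, w') = \sum_{w' \in W} b(w') \sum_{w \in \mathit{Mod}(\alpha)} \delta^\mathit{GI}(\alpha, w, w').
\]
For each fixed $w'$, I would argue that the inner sum evaluates to $1$: the first clause of $\delta^\mathit{GI}$ contributes nothing since it demands $w \not\Vdash \alpha$, contradicting $w \in \mathit{Mod}(\alpha)$; the third clause contributes nothing; and the second clause contributes exactly $1/|\mathit{Min}(\alpha, w', d)|$ for each of the $|\mathit{Min}(\alpha, w', d)|$ worlds in $\mathit{Min}(\alpha, w', d)$, which is a subset of $\mathit{Mod}(\alpha)$. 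These totals give $1$, so $\gamma = \sum_{w' \in W} b(w') = 1$.

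The main obstacle I anticipate is the bookkeeping in the $\gamma$ computation: one has to be certain that every element of $\mathit{Min}(\alpha, w', d)$ lies inside the summation range $\mathit{Mod}(\alpha)$ (which follows from the definition of $\mathit{Min}$) and that no world $w$ is counted twice across clauses (which follows because the clauses' conditions are mutually exclusive, given that $w \in \mathit{Min}(\alpha, w', d)$ forces $w \Vdash \alpha$). Once $\gamma = 1$ is established, combining with the case split above yields $b\,\mathsf{GI}\,\alpha = b\,\mathsf{EDI}^\mathit{GI}\,\alpha$, and hence $\mathsf{GI}$ is EDI-compatible.
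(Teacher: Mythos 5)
Your proposal is correct and follows the same basic decomposition as the paper's proof: split on whether $w\Vdash\alpha$, observe that the first clause of $\delta^\mathit{GI}$ cannot fire when $w\Vdash\alpha$, and collapse the EDI sum to the $\mathsf{GI}$ summand. The one genuine difference is that you explicitly verify $\gamma=1$ by exchanging the order of summation and using the fact that, for each fixed $w'$, the second clause contributes $1/|\mathit{Min}(\alpha,w',d)|$ exactly $|\mathit{Min}(\alpha,w',d)|$ times over $w\in\mathit{Mod}(\alpha)$; the paper's proof never addresses the normalizing factor at all, leaving implicit the fact that $\mathsf{GI}$ already redistributes the full unit of probability mass so that no normalization occurs. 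Your version is therefore the more complete argument: without the $\gamma=1$ step the pointwise identification of the unnormalized EDI sum with $b\,\mathsf{GI}\,\alpha(w)$ does not by itself yield the claimed equality. Your bookkeeping is also sound --- $\mathit{Min}(\alpha,w',d)\subseteq\mathit{Mod}(\alpha)$ by definition, and the clauses are mutually exclusive on the summation range --- with the only unstated (but standard) assumption being that $\alpha$ is satisfiable so that $\mathit{Min}(\alpha,w',d)\neq\emptyset$.
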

\begin{proof}
Let $w\not\Vdash\alpha$. Then, independent of the definition of $\delta^\mathit{GI}$, $b\,\mathsf{GI}\,\alpha(w) = b\,\mathsf{EDI}^\mathit{GI}\,\alpha(w) = 0$.

Now let $w\Vdash\alpha$. Then,
\begin{eqnarray*}
b\,\mathsf{GI}\,\alpha(w) &=& \sum_{\substack{w'\in W\\w\in\mathit{Min}(\alpha,w',d)}}b(w')/|\mathit{Min}(\alpha,w',d)|\\
&=& \sum_{w'\in W}b(w')\delta^\mathit{GI-}(w,w'),
\end{eqnarray*}
where
\begin{equation*}
\delta^\mathit{GI-}(w,w'):=\left\lbrace
\begin{array}{ll}
1/|\mathit{Min}(\alpha,w',d)| & \textit{if }w\in\mathit{Min}(\alpha,w',d)\\
0 & \textit{otherwise}
\end{array}
\right.
\end{equation*}
Noting that the first line of the definition of $\delta^\mathit{GI}$ is applicable only if $w\not\Vdash\alpha$, then the result follows when combining the cases when $w\not\Vdash\alpha$ and $w\Vdash\alpha$.
\end{proof}
For example, the reader can confirm that $b_{0.3/0.7}\,\mathsf{GI}\,\lnot q = b_{0.3/0.7}\,\mathsf{EDI}^\mathit{GI}\,\lnot q$ and $b_{1.0}\,\mathsf{GI}\,\lnot q =$ $b_{1.0}\,\mathsf{EDI}^\mathit{GI}\,\lnot q$.

Whereas Proposition~\ref{prp:GI=EDI} says that there exists a weight function for simulating generalized imaging with EDI, Proposition~\ref{prp:GI-no-inv-dist-weight-func} says that the function cannot be \textit{inverse-distance}.
\begin{proposition}
\label{prp:GI-no-inv-dist-weight-func}
There exists no inverse-distance weight function $\delta^*$ for which $b\,\mathsf{GI}\,\alpha = b\,\mathsf{EDI}^*\,\alpha$.
\end{proposition}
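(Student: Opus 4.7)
The plan is to adapt the proof of Proposition~\ref{def:LI-no-inv-dist-weight-func} to $\mathsf{GI}$. The overall strategy is to first show that any candidate inverse-distance weight function $\delta^*$ realising $\mathsf{GI}$ must satisfy the retention property, and then exhibit a concrete example in which retention together with weak inversity forces $\delta^*$ to vanish on a pair where it must be strictly positive.

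For the retention step, I would argue as follows. Fix any $w'\Vdash\alpha$ and consider the belief state $b$ placing all mass on $w'$. Because $\mathit{Min}(\alpha,w',d)$ always contains $w'$ (and is exactly $\{w'\}$ under a faithful distance such as Hamming), the definition of $\mathsf{GI}$ gives $b\,\mathsf{GI}\,\alpha(w)=0$ for every $w\Vdash\alpha$ with $w\neq w'$. Under the assumption $b\,\mathsf{GI}\,\alpha=b\,\mathsf{EDI}^*\,\alpha$, the EDI sum collapses to $(1/\gamma)\delta^*(\alpha,w,w')$, which must therefore be zero. Hence $\delta^*(\alpha,w,w')=0$ for all distinct $\alpha$-worlds $w,w'$, i.e.\ $\delta^*$ is retentive.

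Next I would instantiate the counterexample over the vocabulary $\{q,r\}$ with Hamming distance, taking $\alpha=\lnot q$. The two $\alpha$-worlds are $01$ and $00$ at distance $1$, so retention yields $\delta^*(\lnot q,01,00)=0$. The non-$\alpha$-world $11$ satisfies $d(01,11)=1=d(01,00)$, so weak inversity and non-negativity force $\delta^*(\lnot q,01,11)=0$. Now take $b'$ with $b'(11)=1$; since $\mathit{Min}(\lnot q,11,d)=\{01\}$, we have $b'\,\mathsf{GI}\,\lnot q(01)=1$, while $b'\,\mathsf{EDI}^*\,\lnot q(01)$ reduces to $(1/\gamma)\delta^*(\lnot q,01,11)=0$. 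This contradicts the assumed equality for this particular $b'$ and $\alpha$.

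The main obstacle is mostly conceptual rather than technical: one has to argue retention for \emph{every} hypothesised inverse-distance $\delta^*$, not merely for the particular $\delta^\mathit{GI}$ exhibited in the preceding subsection, and one must ensure that the distance chosen in the counterexample is faithful so that $\mathit{Min}(\alpha,w',d)$ singles out $w'$ itself when $w'\Vdash\alpha$. Once these two points are handled, the rest is a direct transcription of the $\mathsf{LI}$ argument.
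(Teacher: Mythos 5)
Your argument is correct, but it takes a genuinely different route from the paper. The paper disposes of this proposition in one line: since Lewis imaging is a specialization of generalized imaging (obtained when the pseudo-distance induces total orders), any inverse-distance weight function realising $\mathsf{GI}$ would in particular realise $\mathsf{LI}$, contradicting Proposition~\ref{def:LI-no-inv-dist-weight-func}. You instead give a direct, self-contained impossibility proof: you first derive retention for an \emph{arbitrary} candidate $\delta^*$ by testing the assumed equality on point-mass belief states concentrated on $\alpha$-worlds (where $\gamma\geq\delta^*(\alpha,w',w')=1$ by identity, so the forced zero is legitimate), and then combine retention with weak inversity and non-negativity on a concrete Hamming-distance instance to make $\delta^*(\lnot q,01,11)$ vanish where $\mathsf{GI}$ demands that world $01$ receive all of world $11$'s mass. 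The reduction in the paper is shorter, but it inherits whatever weaknesses the $\mathsf{LI}$ argument has --- and, as you rightly flag, the paper's proof of Proposition~\ref{def:LI-no-inv-dist-weight-func} reasons mostly about the particular function $\delta^\mathit{LI}$ rather than quantifying over all candidate weight functions, so your insistence on establishing retention for \emph{every} hypothesised $\delta^*$ actually closes a gap that the reduction would otherwise propagate. One small correction to your final step: by the same weak-inversity argument, $\delta^*(\lnot q,00,11)\leq\delta^*(\lnot q,01,00)=0$ as well, so for $b'(11)=1$ the normalizer $\gamma$ is itself zero and $b'\,\mathsf{EDI}^*\,\lnot q(01)$ is undefined rather than equal to $0$; the contradiction survives (the EDI image cannot be the belief state that $\mathsf{GI}$ produces), but you should phrase it as ill-definedness rather than as the value $0$.
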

\begin{proof}
Lewis imaging is a specialization of generalized imaging. The proposition is thus entailed by Proposition~\ref{def:LI-no-inv-dist-weight-func}.
\end{proof}

\begin{lemma}
\label{lm:singleton-iff-faithful}
Let $w\Vdash\alpha$.
Then, $\forall w\in W, \mathit{Min}(w,\alpha,d)=\{w\}$ iff $d$ is faithful.
\end{lemma}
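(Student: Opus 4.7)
The statement is essentially an unfolding of definitions once we know $w\Vdash\alpha$, so both directions should be short. Reading the statement, I interpret it as: for every formula $\alpha$ and every $w\in W$ with $w\Vdash\alpha$, we have $\mathit{Min}(\alpha,w,d)=\{w\}$ (the arguments to $\mathit{Min}$ appearing swapped seems to be a typographical slip) if and only if $d$ is faithful. My plan is to prove each direction separately, with the reverse direction handled by contrapositive using a specifically constructed formula $\alpha$.

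For the forward direction ($d$ faithful $\Rightarrow$ $\mathit{Min}(\alpha,w,d)=\{w\}$), I will fix an arbitrary $\alpha$ and $w\Vdash\alpha$. By the identity axiom of a pseudo-distance, $d(w,w)=0$, so $w$ is trivially an element of $\mathit{Min}(\alpha,w,d)$ since it minimises $d(\cdot,w)$ over $\alpha$-worlds. To show it is the only element, I take any $w'\Vdash\alpha$ with $w'\neq w$ and apply faithfulness to conclude $d(w',w)>0=d(w,w)$, so $w'$ fails the minimality condition and cannot belong to $\mathit{Min}(\alpha,w,d)$.

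For the reverse direction, I will argue by contrapositive: suppose $d$ is not faithful, so there exist distinct $w_1,w_2\in W$ with $d(w_1,w_2)=0$. By symmetry, $d(w_2,w_1)=0$ as well. Choose $\alpha:=\phi_{w_1}\vee\phi_{w_2}$, the disjunction of the complete theories identifying $w_1$ and $w_2$, so that the $\alpha$-worlds are exactly $\{w_1,w_2\}$. Then for $w=w_1$, both $w_1$ and $w_2$ minimise $d(\cdot,w_1)$ among $\alpha$-worlds, giving $\mathit{Min}(\alpha,w_1,d)=\{w_1,w_2\}\neq\{w_1\}$, which produces the required counterexample to the universally quantified statement. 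Combining the two directions yields the lemma.

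No step here is really an obstacle; the only mild subtlety is making sure the $\alpha$ constructed in the reverse direction actually restricts the minimisation to the two problematic worlds, which is why I pick the disjunction of their identifying complete theories rather than relying on an arbitrary $\alpha$ satisfied by both.
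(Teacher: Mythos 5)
Your proof is correct and follows the same overall decomposition as the paper's: the forward direction combines identity ($d(w,w)=0$) with faithfulness to exclude any other $\alpha$-world from the minimising set, and the reverse direction is by contrapositive from a pair of distinct worlds at distance zero. The one genuine difference is in the reverse direction, and it works in your favour. The paper simply takes distinct $w,w'$ with $d(w,w')=0$ and asserts $\{w,w'\}\subseteq\mathit{Min}(\alpha,w,d)$, silently assuming that $w'$ is an $\alpha$-world; for an arbitrary $\alpha$ this need not hold, so as written the paper's argument has a small gap. You close that gap by explicitly constructing the witness $\alpha:=\phi_{w_1}\vee\phi_{w_2}$ so that the minimisation genuinely ranges over both problematic worlds, which is exactly what the universally quantified reading of the lemma requires (and is harmless given that $\vee$ is definable from $\land$ and $\lnot$ in the paper's language $L$). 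You also correctly flag the swapped argument order in $\mathit{Min}(w,\alpha,d)$ as a typographical slip. In short: same strategy, but your version is the more rigorous of the two.
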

\begin{proof}
Recall that $\mathit{Min}(\alpha,w,d)=\{w'\Vdash\alpha\mid \forall w''\Vdash\alpha, d(w',w)\leq d(w'',w)\}$.

Assume $d$ is faithful.
Then for all $w,w'\in W$, if $w\neq w'$, then $d(w, w') > 0$.
By identity of $d$, $d(w,w)=0$.
Hence, $\mathit{Min}(w,\alpha,d)=\{w\}$.

Assume $d$ is not faithful.
Then there exists a pair of worlds $w,w'\in W$, such that if $w\neq w'$, then $d(w, w') = 0$.
Hence, $\{w,w'\}\subseteq\mathit{Min}(w,\alpha,d)$.
Therefore, it is not the case that $\mathit{Min}(w,\alpha,d)=\{w\}$.
\end{proof}

\begin{proposition}
$\delta_d^\mathit{GI}$ is retentive iff $d$ is faithful.
\end{proposition}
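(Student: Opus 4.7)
The plan is a straightforward iff argument that leans heavily on Lemma~\ref{lm:singleton-iff-faithful} for the forward direction and on an explicit counterexample for the contrapositive of the reverse direction.

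\textbf{Forward direction} ($d$ faithful $\Rightarrow \delta_d^\mathit{GI}$ retentive). Fix any $\alpha\in L$ and any $w,w'\in W$ with $w\Vdash\alpha$, $w'\Vdash\alpha$ and $w\neq w'$. I would inspect the three cases of $\delta^\mathit{GI}$: the first case is ruled out because it requires $w=w'$ (and also $w\not\Vdash\alpha$); the third case already gives $0$. So the only risk of a nonzero value is the second case, i.e.\ $w\in\mathit{Min}(\alpha,w',d)$. Since $w'\Vdash\alpha$ and $d$ is faithful, Lemma~\ref{lm:singleton-iff-faithful} yields $\mathit{Min}(\alpha,w',d)=\{w'\}$, and because $w\neq w'$, $w$ is not in this singleton. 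Hence only the third line applies, so $\delta_d^\mathit{GI}(\alpha,w,w')=0$, which is retention.

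\textbf{Reverse direction} ($\delta_d^\mathit{GI}$ retentive $\Rightarrow d$ faithful). I would argue the contrapositive. Suppose $d$ is not faithful; then there exist distinct $w,w'\in W$ with $d(w,w')=0$. Choose the evidence $\alpha:=\top$ (or any formula satisfied by both $w$ and $w'$), so that $w,w'\Vdash\alpha$. Because $d(w',w')=0$ by identity and $d(w,w')=0$, the value $0$ is achieved by both $w$ and $w'$ as $\alpha$-worlds nearest to $w'$, so $w\in\mathit{Min}(\alpha,w',d)$. The second line of the definition then fires and gives $\delta_d^\mathit{GI}(\alpha,w,w')=1/|\mathit{Min}(\alpha,w',d)|>0$. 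Since $w,w'\Vdash\alpha$ and $w\neq w'$, retention fails.

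Combining the two directions establishes the biconditional. I do not anticipate a real obstacle here: Lemma~\ref{lm:singleton-iff-faithful} does the heavy lifting for the $(\Rightarrow)$ direction, and the $(\Leftarrow)$ direction is a one-line construction once one has two distinct worlds at distance zero. The only minor care needed is to ensure that in the reverse direction an $\alpha$ satisfied by both witnesses actually exists, which is immediate (e.g.\ $\alpha=\top$).
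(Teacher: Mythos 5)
Your proof is correct and follows essentially the same route as the paper: Lemma~\ref{lm:singleton-iff-faithful} (the singleton property of $\mathit{Min}$ under faithfulness) carries the forward direction, and a pair of distinct worlds at distance zero breaks retention in the reverse direction. If anything, your reverse direction is slightly tighter than the paper's, since you exhibit an explicit witness ($\alpha=\top$ together with the zero-distance pair, so that the second line of the definition of $\delta^\mathit{GI}$ provably fires and returns a positive value) where the paper only remarks that the second line ``might be applicable.''
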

\begin{proof}
Assume $d$ is faithful.
Assume $w,w'\Vdash\alpha$ and $w\neq w'$.
Only the second and third lines of the definition of $\delta^\mathit{GI}$ are potentially applicable.
By Lemma~\ref{lm:singleton-iff-faithful}, we know that whenever $w\Vdash\alpha$, then $\mathit{Min}(w',\alpha,d)=\{w\}$ such that $w=w'$. Hence, due to our assumption that $w\neq w'$, the second line cannot be applicable.
Therefore, $\delta^\mathit{GI}(\alpha,w,w')=0$.

Assume $d$ is not faithful.
Assume $w,w'\Vdash\alpha$ and $w\neq w'$.
Only the second and third lines of the definition of $\delta^\mathit{GI}$ are potentially applicable.
By Lemma~\ref{lm:singleton-iff-faithful}, we know that whenever $w\Vdash\alpha$, then $\{w,w'\}\subseteq\mathit{Min}(w',\alpha,d)$ such that $w\neq w'$. Hence, due to our assumption that $w\neq w'$, the second line might be applicable.
Therefore, it could be that $\delta^\mathit{GI}(\alpha,w,w')=1/|\mathit{Min}(\alpha,w',d)>0$.
\end{proof}
Faithfulness is a reasonable property to expect of a distance function. $\delta_d^\mathit{GI}$ is thus retentive under reasonable conditions.

\subsection{Discussion}

To summarize, $\delta^\mathit{rcp}$ and $\delta^\mathit{dfr}$ are relaxed, and $\delta^\mathit{BC}$ and $\delta^\mathit{LI}$ are retentive, and $\delta^\mathit{GI}$ is retentive iff the associated pseudo-distance function is faithful.

Let $\delta^\mathit{rlx}$ be $\delta^\mathit{rcp}$ or $\delta^\mathit{dfr}$.
Let $\alpha$ be a particular piece of evidence and $b_t$ the belief state resulting from the $t$-th repeated application of~~$\mathsf{EDI}^\mathit{rlx}$ on the resulting belief states for $\alpha$.
That is, $b_1 = b\,\mathsf{EDI}^\mathit{rlx}\,\alpha$, $b_2 = b_1\,\mathsf{EDI}^\mathit{rlx}\,\alpha$, and so on.
By running multiple experiments on a computer, we noticed that, as $t$ increases, $b_t$ settles on a particular belief state, that is, $b_{t-1}\approx b_t$. (Belief states had eight worlds.)
We ran 100 trials for each of the four combinations of choices for $\delta^\mathit{rlx}$ and $\eta=1$ or $\eta=0.0001$. For each trial, an initial belief state and evidence was chosen randomly. $\alpha\equiv\top$ and $\alpha\equiv\bot$ were ignored. Per trial, operation $\mathsf{EDI}$ was applied to the resulting belief state ten times with the $\alpha$ selected for that trial. The difference between the probabilities of a world at successive applications of $\mathsf{EDI}$ were recorded, and the difference of all worlds averaged.
See Figure~\ref{fig:diff-results}. The figure shows the average for the 100 trials of the average difference at each application of the change operation. Note how the differences asymptotically decrease. Note that these results do not imply that belief states become uniform distributions over $\alpha$-worlds. In fact, distributions are typically not uniform; distributions seem to be dictated by the mutual reinforcement of $\alpha$-world probabilities and their corresponding distances from each other. However, whenever exactly two worlds model $\alpha$, the probability is eventually divided exactly (uniformly) between the two worlds.

\begin{figure}[t]
\centering
\includegraphics[scale=0.5]{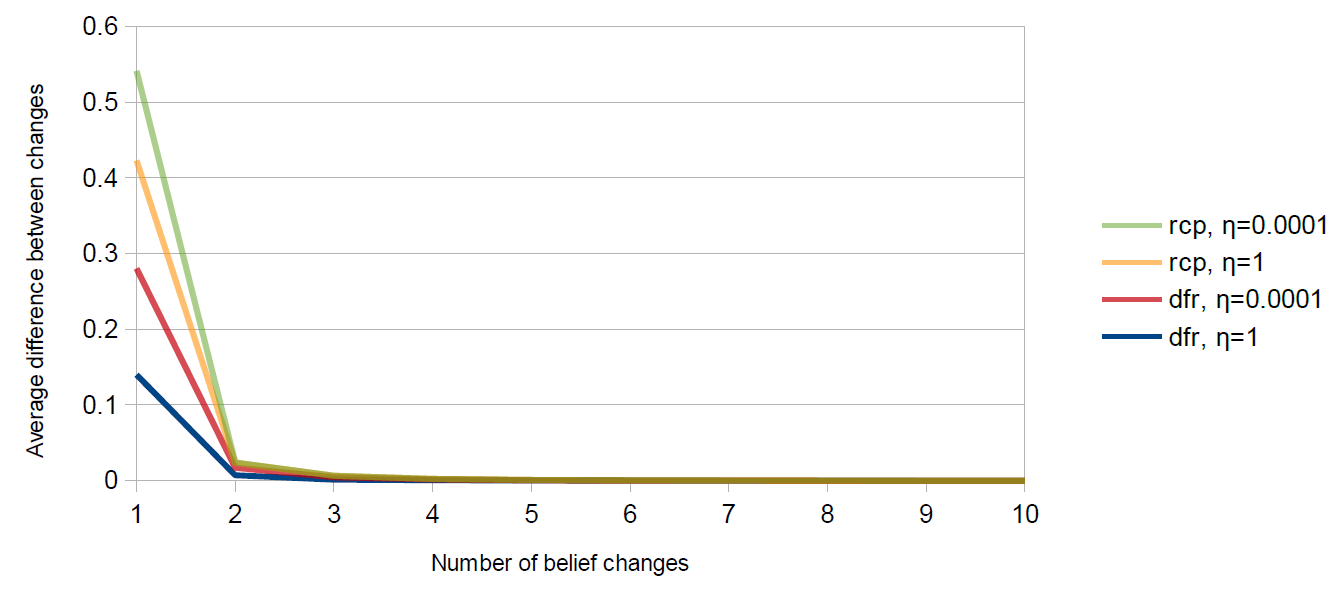}
\caption{The averages of the differences in probability distributions per belief change operation. Results are the averages of 100 randomly generated cases.\label{fig:diff-results}}
\end{figure}

\begin{proposition}
Let $\alpha$ be a particular piece of evidence and $b_t$ the belief state resulting from the $t$-th repeated application of~~$\mathsf{EDI}^\mathit{rtv}$ on the resulting belief states for $\alpha$, where $\delta^\mathit{rtv}$ is a retentive inverse-distance weight function.
That is, $b_1 = b\,\mathsf{EDI}^\mathit{rtv}\,\alpha$, $b_2 = b_1\,\mathsf{EDI}^\mathit{rtv}\,\alpha$, and so on. Then $b_t = b_1$ for all $t>1$.
\end{proposition}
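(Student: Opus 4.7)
The plan is straightforward induction on $t$; the base case $t=2$ does all the real work, and then $b_2 = b_1$ is iterated.

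First I would observe that, by the definition of $\mathsf{EDI}$, $b_1(w) = 0$ whenever $w \not\Vdash \alpha$. Hence when expanding
\[
b_2(w) = \frac{1}{\gamma'} \sum_{w' \in W} b_1(w') \, \delta^\mathit{rtv}(\alpha, w, w'),
\]
for any $w \Vdash \alpha$, the sum effectively ranges over only the $\alpha$-worlds $w'$. (For $w \not\Vdash\alpha$, $b_2(w)=0=b_1(w)$ immediately.)

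Next I would apply the two key hypotheses on $\delta^\mathit{rtv}$ to collapse this sum. Retention tells us that if $w \Vdash \alpha$, $w' \Vdash \alpha$, and $w \neq w'$, then $\delta^\mathit{rtv}(\alpha, w, w') = 0$, so every off-diagonal term in the sum vanishes. The identity postulate (which $\delta^\mathit{rtv}$ satisfies because it is an inverse-distance weight function) gives $\delta^\mathit{rtv}(\alpha, w, w) = 1$. Therefore the sum reduces to the single term $b_1(w)$, and we get $b_2(w) = b_1(w)/\gamma'$.

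The remaining step is to show $\gamma' = 1$. Expanding the normalizer and applying the same retention-plus-identity argument:
\[
\gamma' = \sum_{w \Vdash \alpha} \sum_{w' \in W} b_1(w') \, \delta^\mathit{rtv}(\alpha, w, w') = \sum_{w \Vdash \alpha} b_1(w) = 1,
\]
where the last equality holds because $b_1$ is a probability distribution whose entire mass lies on $\mathit{Mod}(\alpha)$. This yields $b_2 = b_1$. The induction step is then immediate: if $b_{t} = b_1$, replaying the exact same argument with $b_t$ in place of $b_1$ gives $b_{t+1} = b_t = b_1$. There is no real obstacle here — the only thing to be careful about is recording that being an inverse-distance weight function supplies identity (postulate 2), since retention alone would not kill the diagonal term, and that the support of $b_1$ sits inside $\mathit{Mod}(\alpha)$ so that the non-$\alpha$ columns of $\delta^\mathit{rtv}$ never contribute.
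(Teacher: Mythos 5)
Your proof is correct and follows essentially the same route as the paper's: use the fact that $b_1$ is supported on $\mathit{Mod}(\alpha)$ to discard non-$\alpha$ columns, then use retention to kill the off-diagonal terms and identity to reduce the sum to $b_1(w)$. The only difference is that you explicitly verify that the normalizer $\gamma'$ equals $1$, a step the paper's proof silently omits, so if anything your write-up is slightly more complete.
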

\begin{proof}
By definition of $\mathsf{EDI}^\mathit{rtv}$, if $w\not\Vdash\alpha$, then $b_1(w) = 0$, that is, if $b_1(w) > 0$, then $w\Vdash\alpha$. Therefore, we are only interested in the potential change in probability of $\alpha$-worlds. Hence, we consider only $\delta^\mathit{rtv}(w,w')$ such that $w\Vdash\alpha$. And because $w'$ contributes no mass to $w$ if $w'\not\Vdash\alpha$, we consider only $\delta^\mathit{rtv}(w,w')$ such that $w,w'\Vdash\alpha$. Then by retention and identity of an inverse-distance weight function, $\delta^\mathit{rtv}(w,w') = 1$ iff $w=w'$, else it equals 0. Hence, for all $w\Vdash\alpha$, $b_t(w) = \sum_{w'\in W}b_{t-1}(w')\delta^\mathit{rtv}(w,w') = b_{t-1}(w)$ for all $t>1$.
\end{proof}

The difference between retention and relaxation seems important, from the perspective of (formal) epistemology.
Suppose $w_1\Vdash\alpha$ and $w_2\Vdash\beta$ have probabilities 0.4, respectively, 0.6. Now if $\alpha\land\beta$ is successively observed, then according to, for instance, Bayesian conditioning and Lewis imaging, the probabilities of $w_1$ and $w_2$ will not change. We argue that there are situations when their probabilities should be allowed to change. One could argue that if $\alpha$ and $\beta$ are always observed together, then the probabilities of $w_1$ and $w_2$ should become equal.
To put it differently, the fact that the same information is continually received should make a difference to the spread of the likelihood of the constituent parts (atoms) of the information. 
Nonetheless, information received in a static environment has a different flavor to information received in a dynamic environment (giving rise to belief revision, resp., belief update):
Repeated observation of the same piece of information $\alpha'$ should not change beliefs therein, beyond the first revision. The first in the series of observations revises beliefs, but subsequent observations of $\alpha'$ have no effect because it is `old news'/already learnt. In this sense, revision corresponds to retentive belief change. On the other hand, when the same signal is observed in the environment, every new observation is a \textit{further} confirmation of the truth of $\alpha'$, thus modifying one's belief in the atoms of $\alpha'$ with every new observation instance (of the same $\alpha'$). In this sense, update corresponds to evidence-relaxed belief change.

\section{EDI for Revision vs. EDI for Update}

In this section, we couch all belief change operations as EDI. For each of revision and update,  EDI will be specialized into two proposed definitions. One of the specializations in each case will have the following pattern: Apply the classical operation to determine the newly believed worlds, and then use EDI to determine a probability distribution over them. The other specialization in each case will be a more direct use of EDI. We shall also check, for each of the four operations, which properties the corresponding weight function satisfies. We end this section with a discussion of what it takes to translate classical belief change as EDI in terms of uniform probability distributions.

\citet{km92} use an example involving a room with a table, a book and a magazine in it to illustrate the difference between revision and update. Suppose we only know that either the book is on the table or the magazine is on the table, but not both. Let $\mathit{book}$ mean the book is on the floor and $\mathit{mag}$ mean the magazine is on the floor. Then our belief state can reasonably be represented as $b=\langle0,0.5,0.5,0\rangle$, that is $b(\mathit{book}\land\lnot\mathit{mag})=b(\lnot\mathit{book}\land\mathit{mag})=0.5$ and $b(\mathit{book}\land\mathit{mag})=b(\lnot\mathit{book}\land\lnot\mathit{mag})=0$.
\citet{km92} argued that after revision with $\mathit{book}$, one should believe $\mathit{book}\land\lnot\mathit{mag}$.
We thus want
\[
b\,\mathsf{EDI}^\circ\,\mathit{book}=\langle0,1,0,0\rangle,
\]
where $\circ$ indicates revision. And they argued that after update with $\mathit{book}$, one should believe $\mathit{book}$. We thus want
\[
b\,\mathsf{EDI}^\diamond\,\mathit{book}=\langle0.5,0.5,0,0\rangle,
\]
where $\diamond$ indicates update.
This is the case when $\circ$ is $\mathsf{BC}$, and when $\diamond$ is $\mathit{rcp}$ or $\mathit{dfr}$ after an infinite number of $\mathsf{EDI}^\diamond$ with $\mathit{book}$.
However, in this example, $b(\mathit{book})>0$.
The question is, What should probabilistic revision (and update) correspond to when $b(\mathit{book})=0$? For instance, what if $\alpha=\mathit{book}\leftrightarrow\mathit{mag}$, given the agent currently believes $\mathit{book}\leftrightarrow\lnot\mathit{mag}$? Furthermore, we would like the same operation to be applicable whether or not the evidence contradicts current beliefs.

\subsection{Probabilistic Revision via Classical Revision}
\label{sec:Probabilistic-Revision-via-Classical-Revision}

This approach is to determine the new knowledge base exactly as one would in classical belief revision, and then to assign probabilities to the newly believed worlds.

Let
\[
W^b := \{w\in W\mid (w,p)\in b, p\neq 0\}.
\]
Let $\psi^b$ be a sentence modeled by all worlds in $W^b$ and no other worlds.
Let $\delta^{\widehat{\mathit{ClsRev}}}$ be defined as follows.
\begin{equation*}
\delta^{\widehat{\mathit{ClsRev}}}(\alpha,w,w'):=\left\lbrace
\begin{array}{ll}
1 & \textit{if }w=w'\\
\delta(\alpha,w,w') & \textit{if }w\in\mathit{Mod}(\psi^b\circ\alpha)\\
0 & \textit{otherwise},
\end{array}
\right.
\end{equation*}
where $\circ$ is some (acceptable) revision operator and $\delta$ is a retentive inverse-distance weight function.

Unfortunately, $\delta^{\widehat{\mathit{ClsRev}}}$ is not well defined, because $b\,\mathsf{EDI}^{\widehat{\mathit{ClsRev}}}\,\alpha$ might not be a belief state. For example, suppose there are four worlds $w_1$, $w_2$, $w_3$ and $w_4$, where $b(w_1)=1$ and only $w_4\Vdash\alpha$ ($w_1,w_2,w_3\not\Vdash\alpha$). Let $ \delta$ be an inverse-distance weight function such that $\delta(\alpha,w_1,w_4)$ $=$ $\delta(\alpha,w_4,w_1)$ $=$ $0$.
Then, intuitively, we want $b\,\mathsf{EDI}^{\widehat{\mathit{ClsRev}}}\,\alpha(w_4)=1$.
By definition, $b\,\mathsf{EDI}^{\widehat{\mathit{ClsRev}}}\,\alpha(w_i)=0$ for $i=1,2,3$.
We know that $\mathit{Mod}(\psi^b\circ\alpha)=\mathit{Mod}(\alpha)=\{w_4\}$. To compute $b\,\mathsf{EDI}^{\widehat{\mathit{ClsRev}}}\,\alpha(w_4)$, the second line of the definition of $\delta^{\widehat{\mathit{ClsRev}}}$ is applicable, hence,
\begin{eqnarray*}
b\,\mathsf{EDI}^{\widehat{\mathit{ClsRev}}}\,\alpha(w_4)&=&\frac{1}{\gamma}\sum_{w'\in W}b(w')\delta(\alpha,w_4,w')\\
&=& \frac{1}{\gamma}\big(b(w_1)\delta(\alpha,w_4,w_1)+b(w_2)\delta(\alpha,w_4,w_2)+b(w_3)\delta(\alpha,w_4,w_3)+b(w_4)\delta(\alpha,w_4,w_4)\big)\\
&=& \frac{1}{\gamma}\big(1\times0+0\times?+0\times?+0\times?\big).
\end{eqnarray*}

This problem would not occur if $\delta(\alpha,w_1,w_4)=0$ were not allowed.
We thus define $\delta^\mathit{ClsRev}$ as
\begin{equation*}
\delta^\mathit{ClsRev}(\alpha,w,w'):=\left\lbrace
\begin{array}{ll}
1 & \textit{if }w=w'\\
\delta(\alpha,w,w') & \textit{if }w\in\mathit{Mod}(\psi^b\circ\alpha)\\
0 & \textit{otherwise},
\end{array}
\right.
\end{equation*}
where $\circ$ is some (acceptable) revision operator and $\delta$ is a non-evidence (n-e-) relaxed and retentive inverse-distance weight function.

\begin{proposition}
$\delta^\mathit{ClsRev}$ is retentive.
\end{proposition}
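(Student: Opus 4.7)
The plan is to proceed directly from the retention hypothesis by case analysis on which branch of the piecewise definition of $\delta^\mathit{ClsRev}$ applies. First I would fix an arbitrary $\alpha \in L$ and arbitrary $w, w' \in W$ with $w \Vdash \alpha$, $w' \Vdash \alpha$ and $w \neq w'$, and then show that in every one of the three branches of the definition the value returned is $0$.

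The $w = w'$ branch is immediately excluded by the hypothesis $w \neq w'$, so only the second and third branches can apply. For the second branch (the case $w \in \mathit{Mod}(\psi^b \circ \alpha)$), the value is $\delta(\alpha, w, w')$, where the inner $\delta$ is by construction assumed to be a retentive (and n-e-relaxed) inverse-distance weight function. Since retention is a hypothesis on $\delta$ and the triple $(\alpha, w, w')$ satisfies the antecedent of the retention property ($w \Vdash \alpha$, $w' \Vdash \alpha$, $w \neq w'$), we get $\delta(\alpha, w, w') = 0$, so this branch evaluates to $0$. The third branch returns $0$ directly by definition.

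Combining the three cases, $\delta^\mathit{ClsRev}(\alpha, w, w') = 0$ whenever $w, w' \Vdash \alpha$ with $w \neq w'$, which is precisely the retention property. There is no real obstacle here; the proof is essentially a bookkeeping argument whose entire content is that retention is inherited from the inner weight function $\delta$, because the only branch in which $\delta^\mathit{ClsRev}$ could produce a nonzero value for distinct $\alpha$-worlds is the branch that delegates to a $\delta$ which itself is assumed retentive. The one thing I would be careful about is noting that $w \Vdash \alpha$ does not by itself determine whether $w \in \mathit{Mod}(\psi^b \circ \alpha)$, so both the second and third branches remain a priori possible and both must be checked — but as shown, they both yield $0$.
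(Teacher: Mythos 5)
Your proof is correct and follows essentially the same route as the paper's: rule out the first branch via $w \neq w'$, observe the third branch returns $0$ directly, and note that the second branch delegates to the inner $\delta$, which is assumed retentive in the definition of $\delta^\mathit{ClsRev}$ and hence yields $0$ for distinct $\alpha$-worlds. Your explicit remark that both the second and third branches remain a priori possible (since $w \Vdash \alpha$ does not settle membership in $\mathit{Mod}(\psi^b \circ \alpha)$) is a small but welcome clarification over the paper's version.
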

\begin{proof}
Assume $w,w'\Vdash\alpha$ and $w\neq w'$.
Then only the second and third lines of the definition of $\delta^\mathit{ClsRev}$ are applicable.
If $w\not\in\mathit{Mod}(\psi^b\circ\alpha)$, then the third line would be applicable, making $\delta^\mathit{ClsRev}(\alpha,w,w')=0$.
Thus, the proposition's veracity depends only on the value of $\delta(\alpha,w,w')$. And due to the condition (``if $w,w'\Vdash\alpha \textit{ and } w\neq w'$'') in the first line of the definition of $\delta$, $\delta^\mathit{ClsRev}(\alpha,w,w')$ $=$ $\delta'(\alpha,w,w')$ $=$ $0$.
\end{proof}

\begin{proposition}
$\delta^\mathit{ClsRev}$ satisfies non-negativity and identity.
\end{proposition}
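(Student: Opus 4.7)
The plan is a routine case analysis on the three branches in the definition of $\delta^\mathit{ClsRev}$, leveraging that the underlying $\delta$ is an inverse-distance weight function (hence satisfies non-negativity and identity by Definition~\ref{def:inv-dist-weight-func} and postulate~2).

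For identity, I would simply read off the first line: when $w = w'$, the definition evaluates to $1$ irrespective of $\alpha$, $\psi^b$, or the choice of underlying revision operator $\circ$. So $\delta^\mathit{ClsRev}(\alpha,w,w) = 1$ for every $w \in W$ and every $\alpha \in L$, giving identity immediately.

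For non-negativity, I would split into the three cases of the piecewise definition (interpreting the cases as evaluated in order, so the first applicable clause fires). In the first branch, the value is $1 \geq 0$. In the third branch, the value is $0 \geq 0$. The only branch requiring any content is the second one, where $\delta^\mathit{ClsRev}(\alpha,w,w') = \delta(\alpha,w,w')$; by hypothesis $\delta$ is a (non-evidence relaxed and retentive) inverse-distance weight function, and inverse-distance weight functions satisfy postulate~1, so $\delta(\alpha,w,w') \geq 0$ and the value is non-negative. Combining the three cases yields non-negativity.

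There is no real obstacle here, only the mild bookkeeping point that the three clauses of the definition are not mutually exclusive (for instance $w = w'$ is compatible with $w \in \mathit{Mod}(\psi^b \circ \alpha)$), which is why I would explicitly adopt the standard convention that the clauses are checked top-to-bottom; once that is made explicit, the case analysis is unambiguous and both properties fall out directly without invoking the n-e-relaxation or retention assumptions on $\delta$.
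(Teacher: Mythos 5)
Your proof is correct and matches the paper's approach: the paper simply states that non-negativity and identity follow directly from the definition of $\delta^\mathit{ClsRev}$, and your case analysis (reading identity off the first clause and non-negativity from each of the three clauses, using postulate~1 for the underlying inverse-distance weight function $\delta$ in the second clause) is exactly the elaboration of that one-line argument.
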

\begin{proof}
Non-negativity and identity follow directly from the definition of $\delta^\mathit{ClsRev}$.
\end{proof}

\begin{proposition}
$\delta^\mathit{ClsRev}$ does not satisfy symmetry, weak inversity, strong inversity nor equi-distance.
\end{proposition}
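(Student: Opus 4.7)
The plan is to exhibit a single small scenario that falsifies all four properties simultaneously, exploiting the fact that the definition of $\delta^\mathit{ClsRev}$ treats its two world arguments asymmetrically: only the first argument $w$ is tested against $\mathit{Mod}(\psi^b \circ \alpha)$, so swapping $w$ and $w'$ can change which clause of the definition fires. This asymmetry is the source of every failure, so one well-chosen counterexample should do all four jobs.

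Concretely, I would take the vocabulary $\{q,r\}$ (worlds $11,10,01,00$), Hamming distance for $d$, belief state $b=\langle 1,0,0,0\rangle$, evidence $\alpha=\lnot q$, and any standard (e.g.\ Dalal-style) classical revision operator $\circ$. Then $\mathit{Mod}(\psi^b\circ\alpha)=\{01\}$ because $01$ is the unique closest $\alpha$-world to $11$. I would then compute two key values. First, $\delta^\mathit{ClsRev}(\alpha,01,11)$: since $01\ne 11$ and $01\in\mathit{Mod}(\psi^b\circ\alpha)$, the second clause applies, yielding $\delta(\alpha,01,11)$, which is strictly positive by non-evidence relaxation of $\delta$ (as $01\Vdash\alpha$ and $11\not\Vdash\alpha$). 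Second, $\delta^\mathit{ClsRev}(\alpha,11,01)$: since $11\ne 01$ and $11\notin\mathit{Mod}(\psi^b\circ\alpha)$ (indeed $11\not\Vdash\alpha$), the third clause fires, yielding $0$.

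From these two values the failures follow immediately: symmetry fails because $\delta^\mathit{ClsRev}(\alpha,01,11)>0=\delta^\mathit{ClsRev}(\alpha,11,01)$; equi-distance fails because $d(01,11)=d(11,01)$ while the two weights differ; and weak inversity fails because $d(01,11)\ge d(11,01)$ (trivially, with equality) but $\delta^\mathit{ClsRev}(\alpha,01,11)>\delta^\mathit{ClsRev}(\alpha,11,01)$, the reverse of what weak inversity demands. For strict inversity I would add one more computation: $\delta^\mathit{ClsRev}(\alpha,01,10)$, which by the same reasoning as the first calculation equals $\delta(\alpha,01,10)>0$, and has $d(01,10)=2>1=d(11,01)$. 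Strict inversity would require $\delta^\mathit{ClsRev}(\alpha,01,10)<\delta^\mathit{ClsRev}(\alpha,11,01)=0$, which is contradicted by the positivity just shown.

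The only delicate point is making sure that the assumed properties of the underlying $\delta$ (n-e-relaxed, retentive, inverse-distance) really guarantee the strict positivity I rely on; this is the mild obstacle. But non-evidence relaxation is exactly the axiom asserting $\delta(\alpha,w,w')\ne 0$ whenever $w\Vdash\alpha$ and $w'\not\Vdash\alpha$, which is the situation in both Case $(01,11)$ and Case $(01,10)$, so the obstacle is dispatched directly by the hypothesis. All four claims of the proposition therefore follow from this one scenario.
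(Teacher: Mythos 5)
Your proof is correct, and it follows the same basic strategy as the paper's: exhibit a concrete scenario in which the second clause of the definition of $\delta^\mathit{ClsRev}$ fires for one ordering of a pair of worlds but not for the reverse ordering, so that a swapped pair with identical distance receives different weights. The differences are worth noting. The paper uses a three-atom vocabulary and instantiates $\delta$ concretely (a retentive modification of $\delta^\mathit{rcp}$), then refutes weak inversity and symmetry with two different pairs and dismisses strict inversity by contraposition from weak inversity. Your example is leaner: two atoms, $b=\langle1,0,0,0\rangle$, $\alpha=\lnot q$, and the single swapped pair $(01,11)$ versus $(11,01)$ kills symmetry, equi-distance and weak inversity at once; moreover it works for \emph{every} admissible n-e-relaxed retentive inverse-distance $\delta$ rather than one particular instantiation, since non-evidence relaxation plus non-negativity already forces $\delta(\alpha,01,11)>0$. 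Your direct refutation of strict inversity via $d(01,10)=2>1=d(11,01)$ with $\delta^\mathit{ClsRev}(\alpha,01,10)>0=\delta^\mathit{ClsRev}(\alpha,11,01)$ is also slightly more careful than the paper's appeal to ``strict inversity implies weak inversity,'' which strictly speaking does not cover the equal-distance case on its own. All steps check out: with Dalal-style revision $\mathit{Mod}(\psi^b\circ\alpha)=\{01\}$, so the clause analysis you give is exactly right.
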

\begin{proof}
Suppose the vocabulary has three atoms, the distance function is Hamming and
\begin{equation*}
\delta(\alpha,w,w'):=\left\lbrace
\begin{array}{ll}
0 & \textit{if }w,w'\Vdash\alpha \textit{ and } w\neq w'\\
\delta^\mathit{rcp}(\alpha,w,w') & \textit{otherwise},
\end{array}
\right.
\end{equation*}
with $\eta=1$. Note that $\delta$ is an n-e-relaxed and retentive inverse-distance weight function.
Assume $\mathit{Mod}(\psi^b) = \{101,001,000\}$ and $111,110,011,010\Vdash\alpha$ and all other worlds do not model $\alpha$.
Finally, assume that $\circ$ is defined such that $\mathit{Mod}(\psi^b\circ\alpha)=\{111,011,010\}$.
Note that $\mathit{Mod}(\psi^b\circ\alpha)\subseteq\mathit{Mod}(\alpha)$.

Then $d(010,000)=d(110,010)=1$,
$\delta^\mathit{ClsRev}(\alpha,010,000)$ $=$ $\delta(\alpha,010,000)$ $=$ $\delta^\mathit{rcp}(\alpha,010,000)$ $=$ $1/2$ and\\ $\delta^\mathit{ClsRev}(\alpha,110,010) = 0$.
Therefore, $d(010,000)\geq d(110,010)$, but $\delta^\mathit{ClsRev}(\alpha,010,000)$ $\not\leq$\\ $ \delta^\mathit{ClsRev}(\alpha,110,010)$. Hence, weak inversity fails for $\delta^\mathit{ClsRev}$. This case also proves that equi-distance fails.

Due to strict inversity implying weak inversity, by contraposition, strict inversity fails.

To check symmetry, observe that $\delta^\mathit{ClsRev}(\alpha,000,010) = 0$ because $000\neq 010$ and $000\not\in \mathit{Mod}(\psi^b\circ\alpha)$. But as we see above, $\delta^\mathit{ClsRev}(\alpha,010,000) = 1/2$. Thus, symmetry fails.
\end{proof}

\begin{corollary}
Because $\delta^\mathit{ClsRev}$ is not symmetric or weakly inverse, it is not an inverse-distance weight function.
\end{corollary}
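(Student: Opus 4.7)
The corollary is a direct logical consequence of the preceding proposition together with Definition~\ref{def:inv-dist-weight-func}, so the plan is essentially to invoke these two results and observe that either failure alone suffices. First I would recall that Definition~\ref{def:inv-dist-weight-func} requires an inverse-distance weight function to satisfy all four of non-negativity, identity, symmetry, and weak inversity simultaneously. The statement is therefore of the form ``$P_3 \land P_4 \to Q$,'' where $Q$ is ``$\delta^\mathit{ClsRev}$ is an inverse-distance weight function''; by contrapositive, failure of either $P_3$ or $P_4$ suffices to refute $Q$.

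Next I would appeal to the immediately preceding proposition, which exhibits a concrete counterexample (a three-atom vocabulary with Hamming distance, a specific $\delta$ built from $\delta^\mathit{rcp}$ with $\eta=1$, specific worlds $010$, $000$, $110$ and a specific $\circ$) witnessing that $\delta^\mathit{ClsRev}$ satisfies neither symmetry nor weak inversity. Since two of the four defining postulates fail on this single example, $\delta^\mathit{ClsRev}$ cannot simultaneously satisfy all four, and hence cannot be an inverse-distance weight function.

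The ``hard part'' is nonexistent here — the entire argumentative work was already done in the preceding proposition — so the proof reduces to a one-line invocation. The only subtlety worth flagging for the reader is that the two failures exhibited (symmetry and weak inversity) each independently suffice, which is why the corollary is phrased disjunctively (``not symmetric \emph{or} weakly inverse''). I would close by noting this redundancy explicitly so the reader sees that the definition of inverse-distance weight function is violated by at least one, and in fact two, of the required postulates.
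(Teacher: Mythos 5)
Your proposal is correct and matches the paper's approach: the paper offers no separate proof for this corollary precisely because it follows immediately from the preceding proposition (which exhibits the counterexample defeating both symmetry and weak inversity) combined with Definition~\ref{def:inv-dist-weight-func}, which requires all four of postulates 1--4. Your observation that either failure alone already suffices is accurate and is exactly the intended (one-line) justification.
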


\begin{proposition}
$\delta^\mathit{ClsRev}$ is faithful iff $\delta$ is.
\end{proposition}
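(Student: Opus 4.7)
The plan is to prove both directions by a case analysis on the three-line definition of $\delta^\mathit{ClsRev}$, noting that only the second line ever returns $\delta(\alpha,w,w')$ itself.

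For the forward direction, assume $\delta$ is faithful and fix arbitrary $w \neq w'$. The first line (which demands $w=w'$) is inapplicable. If the second line applies, i.e., $w \in \mathit{Mod}(\psi^b \circ \alpha)$, then $\delta^\mathit{ClsRev}(\alpha,w,w') = \delta(\alpha,w,w') < 1$ by faithfulness of $\delta$; otherwise the third line yields $0 < 1$. Either way, faithfulness is preserved.

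For the backward direction I would argue by contrapositive. Suppose $\delta$ is not faithful, so there exist $\alpha$ and $w\neq w'$ with $\delta(\alpha,w,w') = 1$. The goal is to exhibit a belief state $b$ making $\delta^\mathit{ClsRev}_b$ unfaithful. Since $\delta$ is retentive, the case $w,w'\Vdash\alpha$ is ruled out (retention forces the value to $0$); using symmetry of $\delta$ (an inverse-distance property) I may assume, after possibly swapping $w$ and $w'$, that $w \Vdash \alpha$. Choose $b$ with $\psi^b \equiv \phi_w$; any acceptable revision operator then satisfies $\psi^b \circ \alpha \equiv \phi_w$ by the vacuity postulate, since $\phi_w \wedge \alpha$ is consistent. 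Hence $w \in \mathit{Mod}(\psi^b \circ \alpha)$, the second line of the definition applies, and $\delta^\mathit{ClsRev}_b(\alpha,w,w') = \delta(\alpha,w,w') = 1$, contradicting faithfulness of $\delta^\mathit{ClsRev}_b$.

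The main obstacle is the residual case in which both $w,w' \not\Vdash \alpha$: here retention and n-e-relaxation place no constraint on $\delta(\alpha,w,w')$, and the third line of the definition always returns $0$, so an unfaithfulness of $\delta$ at such arguments cannot be witnessed by any $\delta^\mathit{ClsRev}_b$. I expect to address this either by reading ``$\delta^\mathit{ClsRev}$ is faithful'' relative to the arguments on which it actually consults $\delta$, or by further invoking symmetry and the inverse-distance structure to relocate any witness of unfaithfulness to a pair where at least one world satisfies $\alpha$, thereby reducing to the case handled above.
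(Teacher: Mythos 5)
Your forward direction is exactly the paper's: for $w\neq w'$ only the second and third lines of the definition can fire, giving $\delta(\alpha,w,w')<1$ (by faithfulness of $\delta$) or $0$ respectively. For the backward direction the paper is considerably looser than you are: it says only that, since $\delta$ is not faithful, ``it could thus happen'' that $\delta(\alpha,w,w')=1$, and then simply \emph{assumes} the offending pair satisfies $w\in\mathit{Mod}(\psi^b\circ\alpha)$ and $w'\not\Vdash\alpha$, so that the second line returns $\delta(\alpha,w,w')=1$. Your construction --- retention rules out $w,w'\Vdash\alpha$ as a witness, symmetry lets you place the $\alpha$-world in the first argument, and taking $\psi^b\equiv\phi_w$ forces $w\in\mathit{Mod}(\psi^b\circ\alpha)$ (provided ``acceptable'' revision includes the vacuity postulate, which the paper never spells out) --- is a genuine strengthening of that step, since the paper never exhibits a $b$ and a configuration for which the second line actually applies.

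The residual case you flag, where every witness of $\delta$'s unfaithfulness is a pair with $w,w'\not\Vdash\alpha$, is a real gap in the ``only if'' direction as literally stated, but it is not one the paper closes either: its proof silently presupposes the witness has the convenient form $w\in\mathit{Mod}(\psi^b\circ\alpha)$, $w'\not\Vdash\alpha$. Be aware that neither of your proposed escapes is guaranteed to succeed: reading faithfulness ``relative to the arguments actually consulted'' changes the statement being proved, and weak inversity only propagates the value $1$ to pairs at \emph{no greater} distance, which need not include any pair containing an $\alpha$-world. So you have not missed an idea that is present in the paper; you have isolated precisely the point at which the paper's argument is an existence-by-stipulation rather than a proof, and your write-up is the more honest of the two.
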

\begin{proof}
Assume $w\neq w'$.

Assume $\delta$ is not faithful.
It could thus happen that $\delta(\alpha,w,w')=1$.
Next, assume that $w\in\mathit{Mod}(\psi^b\circ\alpha)$ and $w'\not\Vdash\alpha$, implying that $\delta^\mathit{ClsRev}(\alpha,w,w') = \delta(\alpha,w,w')$. Hence, $\delta^\mathit{ClsRev}$ is not faithful.

Now assume $\delta$ is faithful.
Only the second and third lines are applicable.
In both cases,\\	 $\delta^\mathit{ClsRev}(\alpha,w,w')$ $<$ $1$ (particularly, $\delta(\alpha,w,w')<1$, because$w\neq w'$).
\end{proof}

\subsection{Probabilistic Revision with EDI Directly}
\label{sec:Probabilistic-Revision-with-EDI-Directly}

The approach here is that revision should satisfy the retentive property.
Let
\begin{equation*}
\delta^{=0}(\alpha,w,w'):=\left\lbrace
\begin{array}{ll}
1 & \textit{if } w=w'\\
\delta(\alpha,w,w') & \textit{if } w\not\Vdash\alpha \textit{ or } w'\not\Vdash\alpha, \textit{ else}\\
0 & \textit{otherwise}
\end{array}
\right.
\end{equation*}
where $\delta$ is an inverse-distance weight function.
Note that $\delta^{=0}(\alpha,w,w')=0$ whenever $w,w'\Vdash\alpha$ and $w\neq w'$ (retention).

\begin{proposition}
$\delta^{=0}(\alpha,w,w')=0$ does not satisfy weak inversity.
\end{proposition}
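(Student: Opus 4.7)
The plan is to refute weak inversity by a single counterexample, in the same spirit as the argument given for $\delta^\mathit{ClsRev}$. Weak inversity demands that $d(w,w')\geq d(w'',w''')$ force $\delta^{=0}(\alpha,w,w')\leq\delta^{=0}(\alpha,w'',w''')$, so I need one pair whose $\delta^{=0}$-value is strictly positive and another pair, of equal or smaller distance, whose $\delta^{=0}$-value is $0$.

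The retention branch of the definition manufactures the zero for free: any two distinct $\alpha$-worlds have $\delta^{=0}(\alpha,w'',w''')=0$ no matter how close or far apart they sit in $d$. To get a strictly positive value on a pair that lies at \emph{greater or equal} distance, I will use the middle branch, which requires at least one of the two worlds to falsify $\alpha$, and I will instantiate the underlying inverse-distance weight function $\delta$ as $\delta^\mathit{rcp}$ (with $\eta=1$) so that positivity is guaranteed by the proposition already proved for $\delta^\mathit{rcp}$.

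Concretely, I would take vocabulary $\{q,r\}$ with Hamming distance and $\alpha:=q\lor r$, so that $\mathit{Mod}(\alpha)=\{11,10,01\}$ and only $00$ falsifies $\alpha$. Set the second pair to $(w'',w''')=(11,10)$, giving $d(w'',w''')=1$ and $\delta^{=0}(\alpha,11,10)=0$ by retention. Set the first pair to $(w,w')=(11,00)$, giving $d(w,w')=2$ and, via the middle branch, $\delta^{=0}(\alpha,11,00)=\delta^\mathit{rcp}(11,00)=1/(2+1)=1/3$. Then $d(w,w')=2\geq 1=d(w'',w''')$ while $\delta^{=0}(\alpha,w,w')=1/3\not\leq 0=\delta^{=0}(\alpha,w'',w''')$, contradicting weak inversity.

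The only genuine obstacle is getting the direction of the inequality right: one must ensure the retention-forced zero sits on the pair at the \emph{smaller} distance and the positive value sits on the pair at the \emph{larger} distance, otherwise the example will, awkwardly, \emph{satisfy} weak inversity rather than refute it. Once the pairs are chosen as above, the remainder of the argument is just evaluation of $\delta^\mathit{rcp}$ and of the branches of $\delta^{=0}$.
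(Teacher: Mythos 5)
Your counterexample is correct and follows essentially the same strategy as the paper's proof: pit the retention-forced zero on a pair of distinct $\alpha$-worlds against a positive middle-branch value on a pair (at equal or greater distance) involving a non-$\alpha$-world. The only difference is cosmetic: the paper argues generically that $\delta(\alpha,w,w')$ ``could'' be positive for equidistant pairs, whereas you fully instantiate the example with $\delta^\mathit{rcp}$ and concrete worlds, which if anything makes the refutation more self-contained.
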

\begin{proof}
Assume $d(w,w')=d(w'',w''')\neq0$. Then, by the identity condition of $d$, $w\neq w'$ and $w''\neq w'''$.
Assume $w'\not\Vdash\alpha$, and $w'',w'''\Vdash\alpha$.
Then $\delta^{=0}(\alpha,w,w')=\delta(\alpha,w,w')$ and $\delta^{=0}(\alpha,w'',w''')=0$.
Now it could happen that $\delta(\alpha,w,w')>0$, in which case, $d(w,w')\geq d(w'',w''')$, but $\delta^{=0}(\alpha,w,w')\not\leq \delta^{=0}(\alpha,w'',w''')$.
\end{proof}

$\delta^{=0}$ does not work as desired when $b(\alpha)>0$ ($\delta$ is instantiated as $\delta^\mathit{dfr}$ and $\eta=1$):
\begin{eqnarray*}
b^{\mathsf{EDI}^{=0}}_\mathit{book}(11) &=& \frac{1}{\gamma}\Big[b(10)\delta^{=0}(\alpha,11,10) +\\
&& b(01)\delta^{=0}(\alpha,11,01)\Big]\\
&=& \frac{1}{\gamma}\Big[0.5\times0 + 0.5\times\frac{2+1-1}{2+1}\Big]\\
&=& \frac{1}{\gamma}\Big[\frac{2}{3}\Big] = 0.5
\end{eqnarray*}
\begin{eqnarray*}
b^{\mathsf{EDI}^{=0}}_\mathit{book}(10) &=& \frac{1}{\gamma}\Big[b(10)\delta^{=0}(\alpha,10,10) +\\
&& b(01)\delta^{=0}(\alpha,10,01)\Big]\\
&=& \frac{1}{\gamma}\Big[0.5\times1 + 0.5\times\frac{2+1-2}{2+1}\Big]\\
&=& \frac{1}{\gamma}\Big[\frac{2}{3}\Big]= 0.5
\end{eqnarray*}

We thus propose to use $\delta^\mathit{DctRev}$ in general for probabilistic belief revision:
\begin{equation*}
\delta^\mathit{DctRev}(\alpha,w,w'):=\left\lbrace
\begin{array}{ll}
\delta^\mathit{BC}(w,w') & \textit{if } b(\alpha)>0\\
\delta^{=0}(\alpha,w,w') & \textit{otherwise}
\end{array}
\right.
\end{equation*}

The following definition is derived from the one above.
\begin{definition}
\begin{equation*}
\delta^\mathit{DctRev}(\alpha,w,w'):=\left\lbrace
\begin{array}{ll}
\delta^\mathit{BC}(w,w') & \textit{if } b(\alpha)>0\\
\delta^{=0}(\alpha,w,w') & \textit{otherwise}
\end{array}
\right.
\end{equation*}
where $\delta$ is a non-evidence (n-e-) relaxed and an inverse-distance weight function.
\end{definition}
The reason why $\delta$ is n-e-relaxed is similar to the reason why $\delta$  of $\delta^\mathit{ClsRev}$ is. Recall that $b(\mathit{book})\neq0$.
Then
\begin{eqnarray*}
b^{\mathsf{EDI}^\mathit{DctRev}}_\mathit{book}(11) &=& \frac{1}{\gamma}\Big[b(10)\delta^\mathit{DctRev}(\alpha,11,10) +\\
&& b(01)\delta^\mathit{DctRev}(\alpha,11,01)\Big]\\
&=& \frac{1}{\gamma}\Big[0.5\times0 + 0.5\times0\Big]= 0
\end{eqnarray*}
\begin{eqnarray*}
b^{\mathsf{EDI}^\mathit{DctRev}}_\mathit{book}(10) &=& \frac{1}{\gamma}\Big[b(10)\delta^\mathit{DctRev}(\alpha,10,10) +\\
&& b(01)\delta^\mathit{DctRev}(\alpha,10,01)\Big]\\
&=& \frac{1}{\gamma}\Big[0.5\times1 + 0.5\times0\Big]= 1
\end{eqnarray*}

And $\mathsf{EDI}^\mathit{DctRev}$ can also be used in cases where $b(\alpha)=0$:
Let $b=\langle0.3,0.7,0,0\rangle$ and let $\alpha=\lnot\mathit{book}$.
Observe that $\alpha$ contradicts the agent's current belief state $b$. Let
\begin{equation*}
\delta(\alpha,w,w'):=\left\lbrace
\begin{array}{ll}
0 & \textit{if }w,w'\Vdash\alpha \textit{ and } w\neq w'\\
\delta^\mathit{dfr}(\alpha,w,w') & \textit{otherwise},
\end{array}
\right.
\end{equation*}
with $\eta=0.1$. Note that $\delta$ is an n-e-relaxed and retentive inverse-distance weight function.
\begin{eqnarray*}
b^{\mathsf{EDI}^\mathit{DctRev}}_{\lnot\mathit{book}}(01) &=& \frac{1}{\gamma}\Big[b(11)\delta^\mathit{DctRev}(\alpha,01,11) +\\
&& b(10)\delta^\mathit{DctRev}(\alpha,01,10)\Big]\\
&=& \frac{1}{\gamma}\Big[0.3\delta^\mathit{dfr}(01,11) + 0.7\delta^\mathit{dfr}(01,10)\Big]\\
&=& \frac{1}{\gamma}\Big[0.3\frac{2.1-1}{2.1} + 0.7\frac{2.1-2}{2.1}\Big]\\
&=& \frac{1}{\gamma}\Big[0.16 + 0.03\Big]= 0.\bar{3}\\
\end{eqnarray*}
\begin{eqnarray*}
b^{\mathsf{EDI}^\mathit{DctRev}}_{\lnot\mathit{book}}(00) &=& \frac{1}{\gamma}\Big[b(11)\delta^\mathit{DctRev}(\alpha,00,11) +\\
&& b(10)\delta^\mathit{DctRev}(\alpha,00,10)\Big]\\
&=& \frac{1}{\gamma}\Big[0.3\delta^\mathit{dfr}(00,11) + 0.7\delta^\mathit{dfr}(00,10)\Big]\\
&=& \frac{1}{\gamma}\Big[0.3\frac{2.1-2}{2.1} + 0.7\frac{2.1-1}{2.1}\Big]\\
&=& \frac{1}{\gamma}\Big[0.01 + 0.37\Big]= 0.\bar{6}\\
\end{eqnarray*}
Initially, the agent is certain that the book is on the floor and there is a relatively high likelihood (0.7) that the magazine is on the table.
Then the agent hears from a reliable source that actually the book is definitely on the table. The agent revises its beliefs accordingly, and reasonably still believes to a relatively high degree ($0.\bar{6}$) that the magazine is on the table.

\begin{proposition}
$\delta^\mathit{DctRev}$ is retentive.
\end{proposition}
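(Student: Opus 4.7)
The plan is a straightforward case split on the condition that selects which branch of the definition of $\delta^\mathit{DctRev}$ applies, namely whether $b(\alpha)>0$ or $b(\alpha)=0$. Fix arbitrary $\alpha\in L$ and $w,w'\in W$ with $w\Vdash\alpha$, $w'\Vdash\alpha$, and $w\neq w'$; the goal is to show $\delta^\mathit{DctRev}(\alpha,w,w')=0$ in every case.

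In the first case, assume $b(\alpha)>0$. Then by definition $\delta^\mathit{DctRev}(\alpha,w,w')=\delta^\mathit{BC}(w,w')$, and since $w\neq w'$, the ``otherwise'' clause of the definition of $\delta^\mathit{BC}$ applies and yields $0$. (This is essentially the content of the earlier proposition stating that $\delta^\mathit{BC}$ is retentive, which can just be cited.) In the second case, assume $b(\alpha)=0$. Then $\delta^\mathit{DctRev}(\alpha,w,w')=\delta^{=0}(\alpha,w,w')$, and I would walk down the three lines of the definition of $\delta^{=0}$: the first line is ruled out because $w\neq w'$; the second line is ruled out because both $w$ and $w'$ satisfy $\alpha$ (so neither disjunct $w\not\Vdash\alpha$ nor $w'\not\Vdash\alpha$ holds); therefore the third (``otherwise'') line applies and $\delta^{=0}(\alpha,w,w')=0$.

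Combining the two cases, $\delta^\mathit{DctRev}(\alpha,w,w')=0$ whenever $w,w'\Vdash\alpha$ and $w\neq w'$, which is exactly the retention property. There is no real obstacle here: both branches of $\delta^\mathit{DctRev}$ were designed precisely so that $\alpha$-worlds other than $w$ itself contribute no weight, so the proof is just unpacking the definitions. The only thing to be mildly careful about is that the remark in the text immediately following the definition of $\delta^{=0}$ (``Note that $\delta^{=0}(\alpha,w,w')=0$ whenever $w,w'\Vdash\alpha$ and $w\neq w'$'') may be appealed to directly, shortening the second case to a single sentence.
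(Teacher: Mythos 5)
Your proof is correct and follows essentially the same route as the paper's: a case split on $b(\alpha)>0$ versus $b(\alpha)=0$, reducing to the retention of $\delta^\mathit{BC}$ in the first case and to the ``otherwise'' clause of $\delta^{=0}$ in the second. If anything, your version is more explicit than the paper's (which tersely says ``only the second line is applicable'' in both cases), but the substance is identical.
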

\begin{proof}
Assume $w,w'\Vdash\alpha$ and $w\neq w'$.
Assume $b(\alpha)=0$.
Then only the second line of the definition of $\delta^\mathit{DctRev}$ is applicable.
That is, $\delta^\mathit{DctRev}=0$.
Now assume $b(\alpha)\neq0$.
Again, only the second line is applicable, making $\delta^\mathit{DctRev}=0$.
The retention property is thus satisfied.
\end{proof}

\begin{proposition}
$\delta^\mathit{DctRev}$ satisfies non-negativity, identity and symmetry.
\end{proposition}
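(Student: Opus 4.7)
The plan is to unfold the definition of $\delta^\mathit{DctRev}$ into its two branches, handle each of the three postulates separately, and in each case either quote the already-established property of the underlying weight function or invoke a direct inspection of the case split. Because $\delta^\mathit{DctRev}$ is defined piecewise on the condition $b(\alpha) > 0$, each property will be verified by checking the two sub-cases ($\delta^\mathit{BC}$ and $\delta^{=0}$) independently; since the condition $b(\alpha)>0$ depends only on $b$ and $\alpha$, not on the arguments $w,w'$, this is safe.

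For non-negativity I would observe that $\delta^\mathit{BC}$ takes values in $\{0,1\}$ (immediate from its definition), and $\delta^{=0}$ returns $0$, $1$, or $\delta(\alpha,w,w')$, where $\delta$ is an inverse-distance weight function and hence non-negative by Definition~\ref{def:inv-dist-weight-func}. For identity, simply note that when $w = w'$, the first clause of each branch fires, yielding $1$ in either case.

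The main work is symmetry. First I would observe that $\delta^\mathit{BC}(w,w')$ is symmetric by inspection, since its value depends only on the symmetric predicate $w = w'$; this was already recorded in the proposition for $\delta^\mathit{BC}$. For $\delta^{=0}$ I would do a three-way case split on the pair $(w,w')$: (i) if $w = w'$, both $\delta^{=0}(\alpha,w,w')$ and $\delta^{=0}(\alpha,w',w)$ equal $1$; (ii) if $w \neq w'$ and at least one of $w,w'$ fails to model $\alpha$, then for both argument orders the second clause fires, and the claim reduces to $\delta(\alpha,w,w') = \delta(\alpha,w',w)$, which holds because $\delta$ is an inverse-distance weight function and therefore symmetric; (iii) if $w \neq w'$ and both $w,w' \Vdash \alpha$, the third clause fires in both orderings, giving $0 = 0$. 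The key point that makes case (ii) work is that the condition ``$w \not\Vdash \alpha$ or $w' \not\Vdash \alpha$'' is itself symmetric in its arguments, so swapping $w$ and $w'$ selects the same clause.

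The only mild subtlety, and arguably the main obstacle, is making sure that swapping $w$ and $w'$ never moves between branches of the piecewise definition; once the symmetry of the guard conditions is noted, symmetry of $\delta^\mathit{DctRev}$ follows mechanically from the symmetry of the underlying $\delta$. No properties beyond non-negativity, identity and symmetry of the underlying inverse-distance weight function are needed, and in particular n-e-relaxedness is not invoked in this proof.
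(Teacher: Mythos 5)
Your proof is correct and follows essentially the same route as the paper's: a case split on the clauses of the piecewise definitions, the observation that every guard condition is symmetric in $w$ and $w'$, and an appeal to the non-negativity, identity and symmetry of the underlying inverse-distance weight function $\delta$. Your write-up is in fact somewhat more careful than the paper's (which loosely refers to ``first and third lines'' of a two-branch definition), but there is no substantive difference in approach.
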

\begin{proof}
Non-negativity follows directly.
If $w=w'$, then only the first and third lines are applicable. In both cases, identity holds.

The first line of the definition is satisfied for $\delta^\mathit{DctRev}(\alpha,w,w')$ iff it is satisfied for $\delta^\mathit{DctRev}(\alpha,w',w)$.
Similarly for the second line.
Because $\delta$ is assumed to be an inverse-distance weight function, $\delta$ is symmetrical. It thus follows that $\delta^\mathit{DctRev}$ satisfies symmetry.
\end{proof}

\begin{proposition}
$\delta^\mathit{DctRev}$ does not satisfy weak inversity, strict inversity nor equi-distance.
\end{proposition}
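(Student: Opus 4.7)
The plan is to exhibit explicit counterexamples, exactly in the spirit of the preceding proposition for $\delta^\mathit{ClsRev}$. The key observation is that $\delta^\mathit{DctRev}$ branches on whether $b(\alpha) > 0$: in the first branch it reduces to $\delta^\mathit{BC}$, which we already know (from the earlier proposition on $\delta^\mathit{BC}$) satisfies weak inversity and equi-distance (but not strict inversity). Hence any failure of weak inversity or equi-distance must be witnessed in the other branch, where $\delta^\mathit{DctRev}$ coincides with $\delta^{=0}$. So the strategy is to pick a belief state $b$ and evidence $\alpha$ with $b(\alpha) = 0$, instantiate the underlying $\delta$ as a concrete n-e-relaxed inverse-distance weight function (e.g.\ $\delta^\mathit{rcp}$ with $\eta=1$), and exploit the retention behavior of $\delta^{=0}$ on pairs of distinct $\alpha$-worlds.

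First I would handle weak inversity and equi-distance simultaneously. Take vocabulary $\{q,r\}$, Hamming distance, $\alpha = \lnot q$, and any $b$ with $b(\alpha) = 0$ (for instance $b(11) = 1$), so that $\delta^\mathit{DctRev}(\alpha,\cdot,\cdot) = \delta^{=0}(\alpha,\cdot,\cdot)$. Pick $w = 11$, $w' = 01$, $w'' = 01$, $w''' = 00$. Then $d(w,w') = d(w'',w''') = 1$. Because $w \not\Vdash \alpha$, the middle branch of $\delta^{=0}$ applies and gives $\delta^\mathit{DctRev}(\alpha,w,w') = \delta^\mathit{rcp}(w,w') = 1/2$; whereas $w''$ and $w'''$ are distinct $\alpha$-worlds, so the retention branch forces $\delta^\mathit{DctRev}(\alpha,w'',w''') = 0$. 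This already refutes equi-distance and, since $d(w,w') \geq d(w'',w''')$ but $\delta^\mathit{DctRev}(\alpha,w,w') > \delta^\mathit{DctRev}(\alpha,w'',w''')$, also refutes weak inversity.

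For strict inversity I would produce a second, slightly modified counterexample in the same setting: keep $w'' = 01$, $w''' = 00$ (so the $\alpha$-$\alpha$ pair still gives $0$ by retention), but take $w = 11$, $w' = 00$, so that $d(w,w') = 2 > 1 = d(w'',w''')$. Again $w \not\Vdash \alpha$, so the middle branch of $\delta^{=0}$ gives $\delta^\mathit{DctRev}(\alpha,w,w') = \delta^\mathit{rcp}(w,w') = 1/3 > 0$, contradicting the conclusion $\delta^\mathit{DctRev}(\alpha,w,w') < \delta^\mathit{DctRev}(\alpha,w'',w''')$ required by strict inversity.

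There is no real obstacle here; the work is essentially bookkeeping. The one thing to be careful about is staying in the $b(\alpha) = 0$ regime so that the $\delta^\mathit{BC}$ branch of $\delta^\mathit{DctRev}$ never activates (which would muddy the picture, since $\delta^\mathit{BC}$ satisfies both weak inversity and equi-distance). Given the analogous argument already spelled out for $\delta^\mathit{ClsRev}$, the present proof will follow the same template almost verbatim.
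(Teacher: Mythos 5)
Your proof is correct and follows essentially the same route as the paper's: both work in the $b(\alpha)=0$ branch and contrast a distinct pair of $\alpha$-worlds (forced to weight $0$ by the retention clause of $\delta^{=0}$) with an equal-distance pair containing a non-$\alpha$-world (which receives a nonzero weight from the underlying $\delta$), refuting equi-distance and weak inversity simultaneously. The one difference is that you refute strict inversity with a separate, direct counterexample using strictly unequal distances, whereas the paper infers it from the failure of weak inversity; your direct construction is if anything the more careful of the two, since the antecedent of strict inversity is vacuous when the two distances are equal.
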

\begin{proof}
For all $w,w',w'',w'''\in W$:

Assume $d(w,w')=d(w'',w''')$.
If $d(w,w')=d(w'',w''')\neq0$, then, by the identity condition of $d$, $w\neq w'$ and $w''\neq w'''$, and only the second and third lines are applicable.
Assume ``$w'',w'''\Vdash\alpha \textit{ if } b(\alpha)=0$'' holds, but that ``$w,w'\Vdash\alpha \textit{ if } b(\alpha)=0$'' does not hold.
Then $\delta^\mathit{DctRev}(\alpha,w,w') = \delta(\alpha,w,w')$ and $\delta^\mathit{DctRev}(\alpha,w'',w''') = 0$.
Now, it could be that $\delta(\alpha,w,w')>0$.
Hence, if $d(w,w')\geq d(w'',w''')$ it is not in general the case that $\delta^\mathit{DctRev}(\alpha,w,w')\leq \delta^\mathit{DctRev}(\alpha,w'',w''')$.
Thus, weak inversity fails. This case also shows that equi-distance and strict inversity fail.
\end{proof}

\begin{proposition}
$\delta^\mathit{DctRev}$ is faithful iff $\delta$ is.
\end{proposition}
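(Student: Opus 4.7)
My plan is to prove the biconditional by two directions, closely paralleling the earlier faithfulness result for $\delta^\mathit{ClsRev}$. The key structural observation is that $\delta^\mathit{DctRev}$ is defined by cases on whether $b(\alpha)>0$: on the $b(\alpha)>0$ branch it collapses to $\delta^\mathit{BC}$, which automatically satisfies faithfulness, so all nontrivial work happens on the $b(\alpha)=0$ branch where $\delta^\mathit{DctRev}=\delta^{=0}$, and the behaviour there is governed by $\delta$.

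For the ``if'' direction, I would fix $w\neq w'$ and split on $b(\alpha)$. If $b(\alpha)>0$, I read off $\delta^\mathit{DctRev}(\alpha,w,w')=\delta^\mathit{BC}(w,w')=0<1$ from the definition of $\delta^\mathit{BC}$. If $b(\alpha)=0$, then $\delta^\mathit{DctRev}(\alpha,w,w')=\delta^{=0}(\alpha,w,w')$, and I split further on whether both $w,w'\Vdash\alpha$. If so, the third line of $\delta^{=0}$ yields $0$; otherwise the middle line gives $\delta(\alpha,w,w')$, which is strictly less than $1$ by the assumed faithfulness of $\delta$. Combining the cases establishes faithfulness of $\delta^\mathit{DctRev}$.

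For the ``only if'' direction, I would argue by contrapositive, exactly as in the $\delta^\mathit{ClsRev}$ proof. Assuming $\delta$ is not faithful, there exist $\alpha$ and distinct worlds $w,w'$ with $\delta(\alpha,w,w')=1$. I would then exhibit a scenario witnessing the non-faithfulness of $\delta^\mathit{DctRev}$: pick any $b$ with $b(\alpha)=0$ and take the witness pair so that at least one of $w,w'$ does not model $\alpha$ (as is done for $\delta^\mathit{ClsRev}$). Under this configuration the middle line of $\delta^{=0}$ is the applicable one, so $\delta^\mathit{DctRev}(\alpha,w,w')=\delta(\alpha,w,w')=1$, contradicting faithfulness of $\delta^\mathit{DctRev}$.

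The main obstacle is the ``only if'' direction, because the non-faithfulness of $\delta$ only guarantees \emph{some} failing triple $(\alpha,w,w')$, with no guarantee that the failure lies in the middle branch of $\delta^{=0}$ (if both witness worlds happen to satisfy $\alpha$, the third line of $\delta^{=0}$ would zero out the value and hide the defect). I would handle this the same way the paper treats the analogous step for $\delta^\mathit{ClsRev}$: appeal to the freedom to choose a belief state $b$ with $b(\alpha)=0$ and to pick the witness pair with $w\not\Vdash\alpha$ or $w'\not\Vdash\alpha$, so that the failing value of $\delta$ is actually transmitted through $\delta^{=0}$, yielding the desired counterexample.
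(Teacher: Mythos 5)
Your proposal is correct and follows essentially the same route as the paper's own proof: the ``if'' direction by inspecting the branches of $\delta^{=0}$ (and $\delta^\mathit{BC}$) for $w\neq w'$, and the ``only if'' direction by contrapositive, choosing $b(\alpha)=0$ and a witness pair with $w\not\Vdash\alpha$ or $w'\not\Vdash\alpha$ so that the offending value of $\delta$ passes through the middle line. If anything you are slightly more careful than the paper, which silently omits the $b(\alpha)>0$ branch in the forward direction and, like you, simply stipulates (rather than derives) that the non-faithfulness witness can be placed in the transmitting branch.
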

\begin{proof}
Assume $w\neq w'$.

Assume $\delta$ is not faithful.
Assume $b(\alpha)=0$ and $w'\not\Vdash\alpha$. Then only the third line is applicable. It could happen that $\delta(\alpha,w,w')=1$, making $\delta^\mathit{DctRev}(\alpha,w,w')=1$, making $\delta^\mathit{DctRev}$ unfaithful.

Now assume $\delta$ is faithful.
Only the second and third lines are applicable.
In both cases,\\ $\delta^\mathit{DctRev}(\alpha,w,w')$ $<$ $1$ (particularly, $\delta(\alpha,w,w')<1$, because $w\neq w'$).
\end{proof}

\subsection{Probabilistic Update via Classical Update}

This approach is the same as we took for probabilistic revision via classical revision; to determine the new knowledge base exactly as one would in classical belief update, and then to assign probabilities to the newly believed worlds.

Recall that $W^b := \{w\in W\mid (w,p)\in b, p\neq 0\}$ and $\psi^b$ is a sentence modeled by all worlds in $W^b$ and no others.
Let $\delta^\mathit{ClsUpd}$ be defined as follows.
\begin{equation*}
\delta^\mathit{ClsUpd}(\alpha,w,w'):=\left\lbrace
\begin{array}{ll}
1 & \textit{if }w=w'\\
\delta(\alpha,w,w') & \textit{if }w\in\mathit{Mod}(\psi^b\diamond\alpha)\\
0 & \textit{otherwise},
\end{array}
\right.
\end{equation*}
where $\diamond$ is some (acceptable) update operator and $\delta$ is a relaxed inverse-distance weight function.

\begin{proposition}
If $\mathit{Mod}(\psi^b\diamond\alpha) = \mathit{Mod}(\alpha)$, then $\delta^\mathit{ClsUpd}$ is relaxed.
\end{proposition}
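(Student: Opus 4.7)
The plan is to unpack the definition of "relaxed" (which means both e-relaxed and n-e-relaxed) and check each of the two relaxation properties directly from the piecewise definition of $\delta^\mathit{ClsUpd}$, using the hypothesis $\mathit{Mod}(\psi^b\diamond\alpha) = \mathit{Mod}(\alpha)$ to control which case of the definition is active.

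For evidence relaxation, I would fix $w,w' \in W$ with $w \Vdash \alpha$ and $w' \Vdash \alpha$, and split on whether $w = w'$. If $w = w'$, the first line of the definition applies and $\delta^\mathit{ClsUpd}(\alpha,w,w') = 1 \neq 0$. If $w \neq w'$, then since $w \Vdash \alpha$ and the hypothesis gives $\mathit{Mod}(\psi^b\diamond\alpha) = \mathit{Mod}(\alpha)$, I get $w \in \mathit{Mod}(\psi^b\diamond\alpha)$, so the second line applies and $\delta^\mathit{ClsUpd}(\alpha,w,w') = \delta(\alpha,w,w')$. Since $\delta$ is assumed relaxed (in particular e-relaxed) and both $w,w' \Vdash \alpha$, we have $\delta(\alpha,w,w') \neq 0$, and hence $\delta^\mathit{ClsUpd}(\alpha,w,w') \neq 0$.

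For non-evidence relaxation, I would fix $w,w' \in W$ with $w \Vdash \alpha$ and $w' \not\Vdash \alpha$; then automatically $w \neq w'$, so the first line is excluded. Again by the hypothesis, $w \in \mathit{Mod}(\psi^b\diamond\alpha)$, so the second line applies, giving $\delta^\mathit{ClsUpd}(\alpha,w,w') = \delta(\alpha,w,w')$. Because $\delta$ is n-e-relaxed, $\delta(\alpha,w,w') \neq 0$, and we are done. Combining the two parts, $\delta^\mathit{ClsUpd}$ is relaxed.

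There is essentially no main obstacle here; the proof is pure case analysis on the three clauses of $\delta^\mathit{ClsUpd}$. The only subtlety worth flagging is that the hypothesis $\mathit{Mod}(\psi^b\diamond\alpha) = \mathit{Mod}(\alpha)$ is exactly what is needed to prevent the third (``otherwise'') clause from firing whenever $w \Vdash \alpha$; without that hypothesis, a world $w \Vdash \alpha$ could fail to lie in $\mathit{Mod}(\psi^b\diamond\alpha)$ and we would be forced into the zero case, breaking both relaxation properties. So the entire content of the proposition is that this hypothesis shifts every $\alpha$-world with $w' \neq w$ into the second clause of the definition, whereupon the assumed relaxation of the underlying $\delta$ transfers directly.
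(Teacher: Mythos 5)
Your proof is correct and follows essentially the same route as the paper's: a case split on $w=w'$ versus $w\neq w'$, with the hypothesis $\mathit{Mod}(\psi^b\diamond\alpha)=\mathit{Mod}(\alpha)$ forcing every $\alpha$-world into the second clause so that the relaxation of the underlying $\delta$ transfers. If anything, you are slightly more careful than the paper, which leaves the final appeal to the e-relaxation and n-e-relaxation of $\delta$ implicit after concluding that the second line applies.
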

\begin{proof}
Assume $\mathit{Mod}(\psi^b\diamond\alpha) = \mathit{Mod}(\alpha)$.
Assume $w\Vdash\alpha$.
If $w=w'$, then only the first line of the definition of $\delta^\mathit{ClsUpd}$ is applicable, and $\delta^\mathit{ClsUpd}(\alpha,w,w')\neq0$.
If $w\neq w'$, then only the second and third lines are applicable.
But actually, by the two assumptions, only the second line is applicable.
This implies that for all $w,w'\in W$, if $w\Vdash\alpha$, then $\delta^\mathit{ClsUpd}(\alpha,w,w')\neq0$.
Therefore, $\delta^\mathit{ClsUpd}$ is relaxed.
\end{proof}

\begin{proposition}
$\delta^\mathit{ClsUpd}$ satisfies non-negativity, identity and symmetry.
\end{proposition}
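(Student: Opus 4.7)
The plan is to handle each of the three properties separately by case analysis on which branch of the piecewise definition of $\delta^\mathit{ClsUpd}(\alpha,w,w')$ applies, namely whether $w=w'$, whether $w\in\mathit{Mod}(\psi^b\diamond\alpha)$, or neither.

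For non-negativity I would simply note that the three possible output values are $1$, $\delta(\alpha,w,w')$, and $0$. The first and last are trivially non-negative, and the middle one is non-negative because $\delta$ is by hypothesis a relaxed inverse-distance weight function and therefore inherits non-negativity from postulate~1. For identity, the observation is immediate: when $w=w'$, the first branch of the definition fires and returns $1$, independently of anything else.

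Symmetry is the substantive part, and my plan is a four-way case split once $w\neq w'$ is assumed: both $w$ and $w'$ lie in $\mathit{Mod}(\psi^b\diamond\alpha)$, neither lies in $\mathit{Mod}(\psi^b\diamond\alpha)$, or exactly one of them does. In the ``both'' case, the second line fires in both orderings and the conclusion follows from symmetry of $\delta$ as an inverse-distance weight function. In the ``neither'' case, the third line fires in both orderings and both evaluations equal $0$.

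The main obstacle, and the step I expect to carry the real content of the argument, is the ``mixed'' case where exactly one of $w,w'$ lies in $\mathit{Mod}(\psi^b\diamond\alpha)$. There the second line fires for one ordering, giving $\delta(\alpha,w,w')$, while the third line fires for the reverse ordering, giving $0$; since $\delta$ is relaxed, $\delta(\alpha,w,w')$ will generally be nonzero, so naively the two values disagree. This is exactly the configuration that was used earlier to refute symmetry of $\delta^\mathit{ClsRev}$, so to push the proof through I would have to either invoke an extra property of the classical update operator $\diamond$ that forbids this mixed configuration (for instance a KM-style condition linking $\mathit{Mod}(\psi^b\diamond\alpha)$ to $\mathit{Mod}(\alpha)$ in a way that pairs $w,w'$ either both in or both out), or, failing that, restrict the symmetry claim to the pairs on which $\mathsf{EDI}^\mathit{ClsUpd}$ actually evaluates $\delta^\mathit{ClsUpd}$. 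Clarifying which of these two rescues is intended is the crux of the argument; the non-negativity and identity parts are essentially bookkeeping by comparison.
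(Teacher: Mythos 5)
Your treatment of non-negativity and identity matches the paper's (which just says these ``follow directly from the definition''), and your three-way case split for symmetry is the same decomposition the paper uses. The substantive point is your ``mixed'' case, and you are right that it is the crux --- but it cannot be closed, and the paper does not close it either. The paper's proof asserts that when $w\in\mathit{Mod}(\psi^b\diamond\alpha)$ symmetry follows from symmetry of $\delta$, and that ``for all other cases'' both orderings give $0$; this tacitly assumes that the same line of the definition fires for $(w,w')$ as for $(w',w)$, which is exactly what fails when $w\in\mathit{Mod}(\psi^b\diamond\alpha)$, $w'\notin\mathit{Mod}(\psi^b\diamond\alpha)$ and $w\neq w'$. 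In that configuration $\delta^\mathit{ClsUpd}(\alpha,w,w')=\delta(\alpha,w,w')$, which is nonzero because $\delta$ is relaxed and $w\Vdash\alpha$ (as $\psi^b\diamond\alpha\models\alpha$), while $\delta^\mathit{ClsUpd}(\alpha,w',w)=0$ by the third line. The paper's own example two propositions later, with $\mathit{Mod}(\psi^b\diamond\alpha)=\{111,011,010\}$ and $\delta=\delta^\mathit{rcp}$, already exhibits this: $\delta^\mathit{ClsUpd}(\alpha,010,000)=1/2$ but $\delta^\mathit{ClsUpd}(\alpha,000,010)=0$ --- the very construction the paper uses to \emph{refute} symmetry for the structurally identical $\delta^\mathit{ClsRev}$. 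So the symmetry claim is false as stated whenever $\mathit{Mod}(\psi^b\diamond\alpha)$ is a nonempty proper subset of $W$; neither of your two proposed rescues (a condition on $\diamond$ forcing $w$ and $w'$ to be both in or both out of $\mathit{Mod}(\psi^b\diamond\alpha)$, or restricting the claim to the arguments on which $\mathsf{EDI}^\mathit{ClsUpd}$ actually evaluates the weight) appears in the paper, and some such repair would indeed be needed. In short: your proposal is more careful than the paper's proof on precisely the step that carries the content, and the gap you flag is real --- it lies in the proposition, not in your argument.
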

\begin{proof}
Non-negativity and identity follow directly from the definition of $\delta^\mathit{ClsUpd}$.
We look at the three cases (lines) which make up the definition of $\delta^\mathit{ClsUpd}$.
If $w=w'$, then $\delta^\mathit{ClsUpd}(\alpha,w,w')=\delta^\mathit{ClsRev}(\alpha,w',w)=1$.
If $w\in\mathit{Mod}(\psi^b\diamond\alpha)$, then $\delta^\mathit{ClsUpd}(\alpha,w,w')=\delta(\alpha,w,w')$. But $\delta(\alpha,w,w')$ is assumed to be an inverse-distance weight function, which implies that $\delta(\alpha,w,w')$ is symmetrical.
For all other cases (third line), it must be that $\delta^\mathit{ClsUpd}(\alpha,w,w')=\delta^\mathit{ClsUpd}(\alpha,w',w)=0$.
\end{proof}

\begin{proposition}
$\delta^\mathit{ClsUpd}$ does not satisfy weak inversity, strong inversity nor equi-distance.
\end{proposition}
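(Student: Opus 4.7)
The plan is to follow the same counterexample-based strategy used in the proof for $\delta^\mathit{ClsRev}$, adapting it to $\delta^\mathit{ClsUpd}$. I will fix a small propositional vocabulary (three atoms), use Hamming distance, and exhibit a concrete $\psi^b$, an evidence $\alpha$, a choice of classical update operator $\diamond$, and a relaxed inverse-distance weight function $\delta$, in such a way that two equidistant pairs of worlds fall into different branches of the piecewise definition of $\delta^\mathit{ClsUpd}$. The key observation driving this is that, as with $\delta^\mathit{ClsRev}$, the value of $\delta^\mathit{ClsUpd}(\alpha,w,w')$ depends not only on $d(w,w')$ but also on whether $w \in \mathit{Mod}(\psi^b \diamond \alpha)$, so equal distances can yield different weights.

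Concretely, I will select pairs $(w,w')$ and $(w'',w''')$ with $d(w,w')=d(w'',w''')>0$, arranged so that $w \in \mathit{Mod}(\psi^b \diamond \alpha)$ but $w'' \notin \mathit{Mod}(\psi^b \diamond \alpha)$. Then the second line of the definition yields $\delta^\mathit{ClsUpd}(\alpha,w,w') = \delta(\alpha,w,w') > 0$, using that $\delta$ is a relaxed inverse-distance weight function, while the third line yields $\delta^\mathit{ClsUpd}(\alpha,w'',w''')=0$. This simultaneously refutes equi-distance (equal distance, different weight) and weak inversity (since $d(w,w') \geq d(w'',w''')$ but $\delta^\mathit{ClsUpd}(\alpha,w,w') > \delta^\mathit{ClsUpd}(\alpha,w'',w''')$). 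I will then invoke the same observation used in the $\delta^\mathit{ClsRev}$ proof to conclude that strict inversity fails as well, either by appealing to its failing through the same counterexample or by perturbing one of the pairs so that $d(w,w') > d(w'',w''')$ while retaining $w \in \mathit{Mod}(\psi^b \diamond \alpha)$ and $w'' \notin \mathit{Mod}(\psi^b \diamond \alpha)$; the inequality $\delta^\mathit{ClsUpd}(\alpha,w,w') > 0 = \delta^\mathit{ClsUpd}(\alpha,w'',w''')$ then directly contradicts the strict inversity requirement.

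The main obstacle is ensuring that the scenario is legitimate: I must exhibit a $\psi^b$ and an acceptable update operator $\diamond$ for which $\mathit{Mod}(\psi^b \diamond \alpha)$ is a \emph{proper} subset of $\mathit{Mod}(\alpha)$, so that both branches of the definition (the second and the third) can be activated. This is easy to arrange since any non-trivial update operator (for instance one based on minimal change from the individual worlds of $\psi^b$, as in the Katsuno--Mendelzon setting) will generically exclude some $\alpha$-worlds. Once such a configuration is fixed, selecting the four worlds is essentially a combinatorial exercise analogous to the one carried out for $\delta^\mathit{ClsRev}$, and the remainder of the argument is bookkeeping.
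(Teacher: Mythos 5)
Your proposal is correct and follows essentially the same route as the paper: a three-atom vocabulary with Hamming distance, a concrete $\psi^b$, $\alpha$, and $\diamond$ with $\mathit{Mod}(\psi^b\diamond\alpha)$ a proper subset of $\mathit{Mod}(\alpha)$, and two equidistant pairs landing in the second and third branches of the definition so that one weight is positive and the other is zero, refuting weak inversity and equi-distance simultaneously (the paper then disposes of strict inversity by the same contraposition you mention). The only difference is cosmetic: the paper instantiates $\delta$ directly as $\delta^\mathit{rcp}$ with $\eta=1$ and names the specific worlds $(010,000)$ and $(110,010)$, which is exactly the bookkeeping you defer.
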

\begin{proof}
Suppose the vocabulary has three atoms, the distance function is Hamming and $\delta$ is $\delta^\mathit{rcp}$ with $\eta=1$.
Assume $\mathit{Mod}(\psi^b) = \{101,001,000\}$ and $111,110,011,010\Vdash\alpha$ and all other worlds do not model $\alpha$.
Finally, assume that $\diamond$ is defined such that $\mathit{Mod}(\psi^b\diamond\alpha)=\{111,011,010\}$.
Note that $\mathit{Mod}(\psi^b\diamond\alpha)\subseteq\mathit{Mod}(\alpha)$.

Then $d(010,000)=d(110,010)=1$,
$\delta^\mathit{ClsUpd}(\alpha,010,000)=\delta^\mathit{rcp}(\alpha,010,000) = 1/2$ and\\ $\delta^\mathit{ClsUpd}(\alpha,110,010) = 0$.
Therefore, $d(010,000)\geq d(110,010)$, but $\delta^\mathit{ClsUpd}(\alpha,010,000)\not\leq$\\ $ \delta^\mathit{ClsUpd}(\alpha,110,010)$. Hence, weak inversity fails for $\delta^\mathit{ClsUpd}$. This case also proves that equi-distance fails.

Due to strict inversity implying weak inversity, by contraposition, strict inversity fails.
\end{proof}

\begin{corollary}
Because $\delta^\mathit{ClsUpd}$ is not weakly inverse, it is not an inverse-distance weight function.
\end{corollary}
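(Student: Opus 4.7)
The plan is extremely short, since this is a direct corollary of the immediately preceding proposition. By Definition~\ref{def:inv-dist-weight-func}, a weight function qualifies as an \emph{inverse-distance} weight function precisely when it satisfies postulates 1--4 of the potentially inverse-distance postulates, namely non-negativity, identity, symmetry, and weak inversity. The preceding proposition has already established that $\delta^\mathit{ClsUpd}$ fails weak inversity (postulate 4), and this alone is enough to disqualify it.

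So my proof would simply invoke Definition~\ref{def:inv-dist-weight-func} and the previous proposition: since being an inverse-distance weight function requires weak inversity among the four mandatory postulates, and since weak inversity has just been refuted via the three-atom Hamming counterexample with $\mathit{Mod}(\psi^b\diamond\alpha)=\{111,011,010\}$, the conclusion follows by contraposition. No new calculation is needed; one only has to note that the counterexample constructed in the preceding proof is automatically a witness against the conjunction of postulates 1--4.

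There is no substantive obstacle here. The only thing worth flagging is a minor subtlety in the corollary's phrasing: the claim ``because $\delta^\mathit{ClsUpd}$ is not weakly inverse'' is strictly stronger than needed, since failure of any one of postulates 1--4 would suffice; but since we actually have the stronger failure already proved, the corollary is immediate. I would therefore keep the proof to one or two sentences, citing Definition~\ref{def:inv-dist-weight-func} for what is required and citing the previous proposition for the failure, and note in passing that the same counterexample would simultaneously defeat any attempt to repair the function by weakening only one of the other postulates.
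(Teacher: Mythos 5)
Your proposal is correct and matches the paper's (implicit) reasoning exactly: the paper offers no separate proof for this corollary, treating it as immediate from Definition~\ref{def:inv-dist-weight-func} (which requires weak inversity as one of postulates 1--4) together with the preceding proposition's counterexample. Nothing further is needed.
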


\begin{proposition}
$\delta^\mathit{ClsUpd}$ is faithful iff $\delta$ is.
\end{proposition}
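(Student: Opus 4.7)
The plan is to prove each direction by mirroring the style of the analogous proof for $\delta^\mathit{DctRev}$ and $\delta^\mathit{ClsRev}$ earlier in the paper, exploiting the fact that whenever $w\neq w'$ the first line of the definition of $\delta^\mathit{ClsUpd}$ is inapplicable, so only the second and third lines can be responsible for producing a value of $1$.

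For the reverse direction (if $\delta$ is faithful, then $\delta^\mathit{ClsUpd}$ is faithful), I would fix arbitrary $w\neq w'$ and observe that the first clause $w=w'$ does not apply. In the second clause we get $\delta^\mathit{ClsUpd}(\alpha,w,w')=\delta(\alpha,w,w')$, and faithfulness of $\delta$ together with $w\neq w'$ gives $\delta(\alpha,w,w')<1$. In the third (default) clause, $\delta^\mathit{ClsUpd}(\alpha,w,w')=0<1$. Since these exhaust the applicable cases, faithfulness of $\delta^\mathit{ClsUpd}$ follows.

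For the forward direction I would argue by contraposition: assume $\delta$ is not faithful, so there exist $w\neq w'$ with $\delta(\alpha,w,w')=1$. The task is then to exhibit a situation in which the second clause of the definition of $\delta^\mathit{ClsUpd}$ is the applicable one, since only that clause can transmit the offending value. I would pick a belief state $b$ (hence a $\psi^b$) and an acceptable update operator $\diamond$ such that $w\in\mathit{Mod}(\psi^b\diamond\alpha)$; for instance, one can take $b$ with $\psi^b\equiv\alpha$ (so that a reasonable update satisfies $\mathit{Mod}(\psi^b\diamond\alpha)=\mathit{Mod}(\alpha)$) and ensure $w\Vdash\alpha$. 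Under these choices the second clause applies and yields $\delta^\mathit{ClsUpd}(\alpha,w,w')=\delta(\alpha,w,w')=1$, witnessing the failure of faithfulness for $\delta^\mathit{ClsUpd}$.

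The only step requiring any care is the existential construction in the forward direction: one must be sure that the witnessing pair $(w,w')$ from the failure of faithfulness of $\delta$ can indeed be accommodated within the second clause for \emph{some} legitimate $\psi^b$ and \emph{some} acceptable update operator $\diamond$. This is unproblematic because $\delta$ is a fixed function on $L\times W\times W$ (independent of $b$), so we are free to choose $b$ and $\diamond$ after $(w,w')$ is exhibited; there is no circular dependency. The remaining steps are immediate from the case analysis of the definition.
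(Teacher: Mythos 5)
Your proposal is correct and follows essentially the same route as the paper's own proof: the reverse direction is the identical case analysis over the second and third clauses, and the forward direction is the same contrapositive argument that places the offending pair under the second clause by arranging $w\in\mathit{Mod}(\psi^b\diamond\alpha)$. If anything, your explicit justification that a suitable $b$ and $\diamond$ can be chosen after the witnessing pair is fixed is slightly more careful than the paper, which simply assumes $w\in\mathit{Mod}(\psi^b\diamond\alpha)$ without comment.
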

\begin{proof}
Assume $w\neq w'$.

Assume $\delta$ is not faithful.
Next, assume that $w\in\mathit{Mod}(\psi^b\diamond\alpha)$. It could thus happen that $\delta(\alpha,w,w')$ $=$ $1$, implying that $\delta^\mathit{ClsUpd}$ is not faithful.

Now assume $\delta$ is faithful.
Only the second and third lines are applicable.
In both cases,\\ $\delta^\mathit{ClsUpd}(\alpha,w,w')$ $<$ $1$ (particularly, $\delta(\alpha,w,w')<1$, because $w\neq w'$).
\end{proof}

\subsection{Probabilistic Update with EDI Directly}

Finally, we propose to use any relaxed (e-relaxed and n-e-relaxed) inverse-distance weight function for update. That is, we propose that one may use operation $\mathsf{EDI}^\mathit{DctUpd}$ for updating, such that it is relaxed and where $\delta^\mathit{DctUpd}$ is an inverse-distance weight function.

Recall the example in Section~\ref{sec:Probabilistic-Revision-with-EDI-Directly}, where $b=\langle0.3,0.7,0,0\rangle$, $\alpha=\lnot\mathit{book}$, $\delta^\mathit{DctRev}$ is instantiated as $\delta^\mathit{dfr}$ and $\eta=0.1$.
Notice that if an agent were to \textit{update} its beliefs ($\langle0.3,0.7,0,0\rangle$) with $\lnot\mathit{book}$, the resulting belief state would be exactly the same as for revision via $\mathsf{EDI}^\mathit{DctRev}$: $\langle0,0,0.\bar{3},0.\bar{6}\rangle$.
%
The reader may confirm that on the second application of $\mathsf{EDI}^\mathit{DctRev}$, the agent still believes $\langle0,0,0.\bar{3},0.\bar{6}\rangle$. On the second application of $\mathsf{EDI}^\mathit{DctUpd}$, however, the agent believes $\langle0,0,0.45,0.55\rangle$.

Each of strict inversity, equi-distance and faithfulness might be satisfied, depending on the particular instantiation of $\delta^\mathit{DctUpd}$.

\section{Rationality Postulates for EDI}

In this section, we present and propose \textit{rationality postulates} -- conditions which should be satisfied -- for belief revision and belief update. We do not claim that these are sufficient. Some of them are necessary, but some may not be. Nonetheless, the postulates have been given due consideration.
For this study, we identify three core (necessary) rationality postulates for revision and update.
We prove that $\mathsf{EDI}^\mathit{ClsRev}$ and $\mathsf{EDI}^\mathit{DctRev}$ satisfy the three core revision postulates, and that $\mathsf{EDI}^\mathit{ClsUpd}$ and $\mathsf{EDI}^\mathit{DctUpd}$ satisfy the three core update postulates.

\subsection{Probabilistic Revision Postulates}
We adopt the rationality postulates for probabilistic belief revision from \citet{g88} and we employ probabilistic versions of the rationality postulates for (non-probabilistic) belief update from \citet{km92}.
%

%
%
First, we discuss the operation called  \emph{expansion}, because it is mentioned in the postulates below.
Conventionally, (classical) expansion (denoted +) is the logical consequences of $K\uni\{\alpha\}$, where $\alpha$ is new information and $K$ is the current belief set.
Or if the current beliefs can be captured as a single sentence $\beta$, expansion is defined simply as $\beta+\alpha \equiv \beta\land\alpha$.
We denote the expansion of belief state $b$ with $\alpha$ as $b^+_\alpha$.

(Unless stated otherwise, it is assumed that $\alpha$ is logically satisfiable.)
The probabilistic belief revision postulates (adapted from \citep{g88}) in our notation are
\begin{itemize}
\itemsep=0pt
\item[]($P^\circ1$) $b^\circ_\alpha$ is a belief state
\item[]($P^\circ2$) $b^\circ_\alpha(\alpha)=1$
\item[]($P^\circ3$) If $\alpha\equiv\beta$, then $b^\circ_\alpha=b^\circ_\beta$
\item[]($P^\circ4$) If $b(\alpha)>0$, then $b^\circ_\alpha = b^+_\alpha$
\item[]($P^\circ5$) If $b^\circ_\alpha(\beta)>0$, then $b^\circ_{\alpha\land\beta}=(b^\circ_\alpha)^+_\beta$
\end{itemize}
We take $P^\circ1$ - $P^\circ3$ to be self explanatory, and to be the three core postulates.
$P^\circ4$ is an interpretation of the AGM postulate which says that if the evidence is consistent with the currently held beliefs, then revision amounts to expansion.

$P^\circ5$ says that if $\beta$ is deemed possible
in the belief state revised with $\alpha$, then expanding the revised belief state with $\beta$ should be equal to revising the original belief state with the conjunction of $\alpha$ and $\beta$.

We propose adding this postulate:
\begin{itemize}
\item[]($P^\circ6$) If $\beta\models\alpha$, then $b^\circ_\alpha(\beta)\geq b(\beta)$
\end{itemize}
$P^\circ6$ says that the belief in an $\alpha$-world cannot decrease due to the reception of information that $\alpha$.

\bigskip
Recall that
\begin{equation*}
\delta^\mathit{ClsRev}(\alpha,w,w'):=\left\lbrace
\begin{array}{ll}
1 & \textit{if }w=w'\\
\delta(\alpha,w,w') & \textit{if }w\in\mathit{Mod}(\psi^b\circ\alpha)\\
0 & \textit{otherwise},
\end{array}
\right.
\end{equation*}
where $\circ$ is some (acceptable) revision operator and $\delta$ is a non-evidence relaxed and retentive inverse-distance weight function.

\begin{proposition}
($P^\circ1$) is satisfied for $\mathsf{EDI}^\mathit{ClsRev}$.
\end{proposition}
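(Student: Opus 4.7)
The plan is to unwind the definition of $b\,\mathsf{EDI}^\mathit{ClsRev}\,\alpha$ and verify the three requirements for it to be a belief state: each assigned probability is a non-negative real, the probabilities sum to one, and the normalizing constant $\gamma$ is non-zero so that the object is actually well-defined. Non-negativity is immediate, since $b(w')\geq 0$ and $\delta^\mathit{ClsRev}(\alpha,w,w')\in\{0,1\}\cup\{\delta(\alpha,w,w')\}$, with $\delta$ being an inverse-distance weight function (hence non-negative by postulate 1). The sum-to-one property will follow from the very form of Definition~\ref{def:EDI}: the new mass of every $w\not\Vdash\alpha$ is $0$, while the masses of the $\alpha$-worlds, summed, telescope to $\gamma/\gamma=1$ by the construction of $\gamma$.

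The main obstacle, and the only substantive step, is showing $\gamma>0$. I would split on whether $b(\alpha)=1$ or $b(\alpha)<1$. In the first case, pick any $w^*\in W$ with $b(w^*)>0$; then $w^*\Vdash\alpha$, and the first line of the definition of $\delta^\mathit{ClsRev}$ gives $\delta^\mathit{ClsRev}(\alpha,w^*,w^*)=1$, so the $(w^*,w^*)$ contribution to $\gamma$ is $b(w^*)\cdot 1>0$. In the second case, there is some $w'$ with $b(w')>0$ and $w'\not\Vdash\alpha$; since $\alpha$ is assumed satisfiable and $\circ$ is an acceptable revision operator, $\mathit{Mod}(\psi^b\circ\alpha)$ is non-empty and lies inside $\mathit{Mod}(\alpha)$, so we can choose $w^*\in\mathit{Mod}(\psi^b\circ\alpha)\subseteq\mathit{Mod}(\alpha)$. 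Then $w^*\neq w'$, so the first line of the definition does not apply, but the second line does, yielding $\delta^\mathit{ClsRev}(\alpha,w^*,w')=\delta(\alpha,w^*,w')$. Because $\delta$ is assumed n-e-relaxed and $w^*\Vdash\alpha$, $w'\not\Vdash\alpha$, non-evidence relaxation together with non-negativity forces $\delta(\alpha,w^*,w')>0$, so again $\gamma>0$.

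Once $\gamma>0$ is secured, I would finish by observing that every new mass $\frac{1}{\gamma}\sum_{w'}b(w')\delta^\mathit{ClsRev}(\alpha,w,w')$ lies in $[0,1]$ (non-negative and bounded above because its sum over $w\Vdash\alpha$ equals $1$), and that
\[
\sum_{w\in W} b\,\mathsf{EDI}^\mathit{ClsRev}\,\alpha(w) \;=\; \sum_{w\in\mathit{Mod}(\alpha)}\frac{1}{\gamma}\sum_{w'\in W}b(w')\delta^\mathit{ClsRev}(\alpha,w,w') \;=\; \frac{\gamma}{\gamma} \;=\; 1
\]
by the definition of $\gamma$. Hence $b\,\mathsf{EDI}^\mathit{ClsRev}\,\alpha$ is a probability distribution over $W$, i.e.\ a belief state, establishing ($P^\circ 1$). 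The crucial conceptual point the proof relies on is that the n-e-relaxation requirement built into the hypothesis on $\delta$ is exactly what rules out the pathology observed for $\delta^{\widehat{\mathit{ClsRev}}}$ earlier in the section.
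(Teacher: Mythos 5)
Your proof is correct and follows essentially the same route as the paper's: the substantive content in both is showing $\gamma>0$ by exhibiting a positive term in the double sum --- the diagonal term $\delta^\mathit{ClsRev}(\alpha,w^*,w^*)=1$ when all the probability mass sits on $\alpha$-worlds, and the n-e-relaxation of $\delta$ applied to some $w^*\in\mathit{Mod}(\psi^b\circ\alpha)$ paired with a non-$\alpha$-world carrying positive mass otherwise. The only differences are presentational: you argue directly by cases on $b(\alpha)$ where the paper argues by contradiction, and you spell out the non-negativity and sum-to-one checks that the paper compresses into ``due to normalization.''
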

\begin{proof}
Due to normalization (via $\gamma$) in the definition of EDI (Def.~\ref{def:EDI}), $b^\circ_\alpha$ is a belief state whenever there exists at least one world $w\in W$ s.t.\ if $w\Vdash\alpha$, then $\sum_{w'\in W}b(w')\delta^\mathit{ClsRev}(\alpha,w,w')>0$.
Assume that $w\Vdash\alpha$.
It must be shown that there exists at least one world $w'$ for which $b(w')>0$ and $\delta^\mathit{ClsRev}(\alpha,w,w')>0$.

Proof by contradiction:
Let $w$ be an arbitrary world in $W$.
(Main assumption) Assume there exists no world $w'$ for which $b(w')>0$ and $\delta^\mathit{ClsRev}(\alpha,w,w')>0$.
Assume $w=w'$.
Then the first line of the definition of $\delta^\mathit{ClsRev}$ is applicable and it must be that $b(w')=0$. But this is impossible because it implies that for all worlds $w$, $b(w)=0$ ($b$ is implicitly assumed to be well-defined).
Therefore, it must be that $w\neq w'$.

Let $w\in\mathit{Mod}(\psi^b\circ\alpha)$ (there must exists at least one such $w$).
Then the second line is applicable. Let $b(w')>0$ (there must exist at least on such $w'$). 
This implies that $\delta^\mathit{ClsRev}(\alpha,w,w')=0$, which implies that $\delta(\alpha,w,w')=0$.
Now, either $w'\Vdash\alpha$ or $w'\not\Vdash\alpha$.

Assume $w'\Vdash\alpha$.
Recall that $b(w')>0$.
Note that the first line will eventually become applicable, making $\delta^\mathit{ClsRev}(\alpha,w',w')=1$ and contradicting the main assumption.
Therefore, it must be the case that $w'\not\Vdash\alpha$. Recall that $\delta$ is n-e-relaxed, that is, $\forall w,w'\in W$, if $w\Vdash\alpha$ and $w'\not\Vdash\alpha$, then $\delta(\alpha,w,w')\neq0$. This also contradicts the main assumption.

There is no other way to satisfy the main assumption. It must thus be the case that there exists some world $w'$ for which $b(w')>0$ and $\delta^\mathit{ClsRev}(\alpha,w,w')>0$, implying that there exists at least one world $w\in W$ s.t.\ if $w\Vdash\alpha$, then $\sum_{w'\in W}b(w')\delta^\mathit{ClsRev}(\alpha,w,w')>0$.
\end{proof}

\begin{proposition}
($P^\circ2$) is satisfied for $\mathsf{EDI}^\mathit{ClsRev}$.
\end{proposition}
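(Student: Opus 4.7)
The plan is to observe that $(P^\circ 2)$ follows essentially by construction from Definition~\ref{def:EDI}, since the normalizing factor $\gamma$ is defined to be exactly the total unnormalized mass distributed to $\alpha$-worlds. The only substantive content is to ensure the division by $\gamma$ is legitimate, i.e.\ $\gamma > 0$.

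First I would unfold $b^\circ_\alpha(\alpha) = \sum_{w \in \mathit{Mod}(\alpha)} b^\circ_\alpha(w)$. By Definition~\ref{def:EDI}, for each $w \Vdash \alpha$ we have $b^\circ_\alpha(w) = \frac{1}{\gamma}\sum_{w'\in W} b(w')\,\delta^\mathit{ClsRev}(\alpha,w,w')$, while for $w \not\Vdash \alpha$ the definition gives $b^\circ_\alpha(w) = 0$. Pulling the constant $\frac{1}{\gamma}$ out of the outer sum, the remaining double sum is precisely $\gamma$ as defined, so $b^\circ_\alpha(\alpha) = \gamma/\gamma = 1$.

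The only obstacle is confirming that $\gamma \neq 0$, so the quotient is well defined. But this was already established in the proof of $(P^\circ 1)$ just above: there it was shown that some $w \Vdash \alpha$ exists with $\sum_{w'} b(w')\,\delta^\mathit{ClsRev}(\alpha,w,w') > 0$, which is exactly what is needed here. So my proof would simply invoke that argument (or the conclusion of the preceding proposition, since $(P^\circ 1)$ amounts to $b^\circ_\alpha$ being a well-defined belief state, hence $\gamma > 0$) and then perform the one-line cancellation. I do not expect any further difficulty.
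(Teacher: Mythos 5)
Your proposal is correct and follows essentially the same route as the paper: the paper's proof likewise observes that non-$\alpha$-worlds receive zero probability so that, by the normalization built into Definition~\ref{def:EDI}, all the mass lies on $\mathit{Mod}(\alpha)$ and hence $b^\circ_\alpha(\alpha)=1$. Your version is merely more explicit about the cancellation $\gamma/\gamma=1$ and about deferring the well-definedness of $\gamma$ to the ($P^\circ1$) argument, which the paper leaves implicit.
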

\begin{proof}
The proposition is satisfied when $b^{\mathsf{EDI}^\mathit{ClsRev}}_\alpha(\alpha)=1$.
It is almost a direct consequence of the definition of $\mathsf{EDI}$:
$\forall w\in W$, $b^{\mathsf{EDI}^\mathit{ClsRev}}_\alpha(w)>0$ only if $w\Vdash\alpha$.
\end{proof}

\begin{proposition}
($P^\circ3$) is satisfied for $\mathsf{EDI}^\mathit{ClsRev}$.
\end{proposition}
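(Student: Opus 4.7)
The plan is to unfold the definition of $\mathsf{EDI}^\mathit{ClsRev}$ and show that replacing $\alpha$ by an equivalent $\beta$ leaves every ingredient of the formula unchanged. Since $\alpha\equiv\beta$ means $\mathit{Mod}(\alpha)=\mathit{Mod}(\beta)$, the zero-assignment clause (``$p=0$ if $w\not\Vdash\alpha$'') agrees for $\alpha$ and $\beta$, so I only need to worry about the $\alpha$-worlds and the normalizer $\gamma$.

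First I would reduce the claim to the pointwise identity $\delta^\mathit{ClsRev}(\alpha,w,w')=\delta^\mathit{ClsRev}(\beta,w,w')$ for all $w,w'\in W$. I would proceed case by case through the three branches of the definition. The $w=w'$ branch returns $1$ for both $\alpha$ and $\beta$, independent of the evidence. For the $w\in\mathit{Mod}(\psi^b\circ\alpha)$ branch, I invoke the fact that the underlying classical revision operator $\circ$ is (by standing assumption, as an ``acceptable'' revision operator) syntax-independent, so $\psi^b\circ\alpha\equiv\psi^b\circ\beta$ and hence $\mathit{Mod}(\psi^b\circ\alpha)=\mathit{Mod}(\psi^b\circ\beta)$; membership of $w$ in that set is then the same whichever representative we use. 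The remaining ``otherwise'' branch returns $0$ in both cases. The only substantive content beyond bookkeeping is that the inner $\delta(\alpha,w,w')$ value coincides with $\delta(\beta,w,w')$; I would note explicitly that we treat $\delta$ as depending on the evidence only through its model set (a convention consistent with the concrete $\delta^\mathit{rcp}$, $\delta^\mathit{dfr}$, $\delta^\mathit{BC}$ examples, all of which are either independent of $\alpha$ or determined by $\mathit{Mod}(\alpha)$ via the distance function).

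From the pointwise identity of the weight functions, the inner sums $\sum_{w'\in W}b(w')\delta^\mathit{ClsRev}(\alpha,w,w')$ agree with their $\beta$-counterparts for each $w$, hence the normalizers $\gamma_\alpha$ and $\gamma_\beta$ agree (the outer sum being over $\mathit{Mod}(\alpha)=\mathit{Mod}(\beta)$), and therefore $b^{\mathsf{EDI}^\mathit{ClsRev}}_\alpha(w)=b^{\mathsf{EDI}^\mathit{ClsRev}}_\beta(w)$ for every $w\in W$, which gives $b^\circ_\alpha=b^\circ_\beta$.

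The only real obstacle is the status of the inner $\delta$: the postulates listed for inverse-distance weight functions do not formally force $\delta(\alpha,\cdot,\cdot)=\delta(\beta,\cdot,\cdot)$ when $\alpha\equiv\beta$. I would therefore either appeal to the tacit semantic convention (``$\delta$ sees $\alpha$ only through $\mathit{Mod}(\alpha)$''), which is how the concrete weight functions introduced in the paper behave, or else strengthen the hypothesis on $\delta$ in the statement. With that in place, the remainder of the argument is essentially a two-line verification by definitional chasing.
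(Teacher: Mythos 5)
Your proof is correct and follows essentially the same route as the paper's, whose entire argument is the one-line observation that $\alpha\equiv\beta$ makes the two $\mathsf{EDI}^\mathit{ClsRev}$ computations pointwise identical. Your version is considerably more careful: the two hidden assumptions you isolate --- syntax-independence of the classical revision operator $\circ$ (so that $\mathit{Mod}(\psi^b\circ\alpha)=\mathit{Mod}(\psi^b\circ\beta)$) and the requirement that $\delta$ depend on the evidence only through $\mathit{Mod}(\alpha)$ --- are genuinely needed for the pointwise identity and are silently taken for granted in the paper's terse argument.
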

\begin{proof}
If $\alpha\equiv\beta$, then $\forall w\in W$, $w\Vdash\alpha$ iff $w\Vdash\beta$ and $b^{\mathsf{EDI}^\mathit{ClsRev}}_\alpha(w)=b^{\mathsf{EDI}^\mathit{ClsRev}}_\beta(w)$, implying that  $b^{\mathsf{EDI}^\mathit{ClsRev}}_\alpha=b^{\mathsf{EDI}^\mathit{ClsRev}}_\beta$.
\end{proof}

\bigskip
Recall that
\begin{equation*}
\delta^\mathit{DctRev}(\alpha,w,w'):=\left\lbrace
\begin{array}{ll}
\delta^\mathit{BC}(w,w') & \textit{if } b(\alpha)>0\\
\delta^{=0}(\alpha,w,w') & \textit{otherwise}
\end{array}
\right.
\end{equation*}
where $\delta$ is a non-evidence relaxed and an inverse-distance weight function.

\begin{proposition}
($P^\circ1$) is satisfied for $\mathsf{EDI}^\mathit{DctRev}$.
\end{proposition}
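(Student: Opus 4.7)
The plan is to reduce the postulate to verifying that the normalizing constant $\gamma$ in the definition of $\mathsf{EDI}^\mathit{DctRev}$ is strictly positive. Once $\gamma>0$, the clause $p=0$ for $w\not\Vdash\alpha$ in Definition~\ref{def:EDI}, together with the definition of $\gamma$ as the sum over all $\alpha$-worlds, guarantees non-negativity and summation to $1$ --- i.e., a belief state. So everything collapses to showing that there exists at least one $w\Vdash\alpha$ and one $w'\in W$ with $b(w')>0$ and $\delta^\mathit{DctRev}(\alpha,w,w')>0$.

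I would then split on the case structure of $\delta^\mathit{DctRev}$. In the first case, $b(\alpha)>0$, so $\delta^\mathit{DctRev}=\delta^\mathit{BC}$. Pick any $w\Vdash\alpha$ with $b(w)>0$ (which exists by $b(\alpha)>0$); then the contribution $b(w)\delta^\mathit{BC}(w,w)=b(w)>0$ lies inside $\gamma$, so $\gamma>0$, and on this instance $\mathsf{EDI}^\mathit{DctRev}$ coincides with Bayesian conditioning, which is already a belief state.

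In the second case, $b(\alpha)=0$, so $\delta^\mathit{DctRev}=\delta^{=0}$. Any $w'$ with $b(w')>0$ must satisfy $w'\not\Vdash\alpha$. Fix such a $w'$, and fix any $w\Vdash\alpha$ (which exists since we have assumed $\alpha$ is satisfiable). Since $w\Vdash\alpha$ and $w'\not\Vdash\alpha$ the second line of the definition of $\delta^{=0}$ applies, giving $\delta^{=0}(\alpha,w,w')=\delta(\alpha,w,w')$. Because $\delta$ is assumed to be an n-e-relaxed inverse-distance weight function, non-evidence relaxation yields $\delta(\alpha,w,w')\neq0$ and non-negativity upgrades this to $\delta(\alpha,w,w')>0$. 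Hence $b(w')\delta^{=0}(\alpha,w,w')>0$ sits inside $\gamma$, so $\gamma>0$.

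The main obstacle is the $b(\alpha)=0$ case, where one has to notice that no $w'$ carrying probability can be an $\alpha$-world, so the ``retention'' clause (third line) of $\delta^{=0}$ is never in play for that $w'$; the n-e-relaxation hypothesis on $\delta$ is exactly what rescues $\gamma$ from collapsing to $0$. This is also the place where the requirement in the definition of $\delta^\mathit{DctRev}$ that $\delta$ be n-e-relaxed becomes indispensable.
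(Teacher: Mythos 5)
Your proposal is correct and follows essentially the same route as the paper: both arguments reduce ($P^\circ1$) to showing $\gamma>0$, split on $b(\alpha)>0$ versus $b(\alpha)=0$, use the identity clause of $\delta^{\mathit{BC}}$ in the first case, and invoke n-e-relaxation of $\delta$ (applied to a positive-probability non-$\alpha$-world) in the second. The only difference is presentational --- you argue directly where the paper frames the same case analysis as a proof by contradiction.
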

\begin{proof}
Due to normalization (via $\gamma$) in the definition of EDI (Def.~\ref{def:EDI}), $b^\circ_\alpha$ is a belief state whenever there exists at least one world $w\in W$ s.t.\ if $w\Vdash\alpha$, then $\sum_{w'\in W}b(w')\delta^\mathit{DctRev}(\alpha,w,w')>0$.
Assume that $w\Vdash\alpha$.
It must be shown that there exists at least one world $w'$ for which $b(w')>0$ and $\delta^\mathit{DctRev}(\alpha,w,w')>0$.

Proof by contradiction:
Let $w$ be an arbitrary world in $W$.
(Main assumption) Assume there exists no world $w'$ for which $b(w')>0$ and $\delta^\mathit{DctRev}(\alpha,w,w')>0$.

Either $b(\alpha)=0$ or $b(\alpha)>0$.
Assume $b(\alpha)>0$.
Then there exists at least one $\alpha$-world with probability greater than zero. Let $w$ be that world. Thus, when $w=w'$, $\delta^\mathit{DctRev}(\alpha,w,w')=1$ (first line of definition) and the main assumption fails in this case.
Assume $b(\alpha)=0$.
Then there exists at least one non-$\alpha$-world with probability greater than zero. Let $w'$ be that world. Hence, the first and second lines are inapplicable, because ``$w,w'\Vdash\alpha \textit{ if } b(\alpha)=0$'' fails.
Therefore, only the third line is applicable.
$\delta$ is n-e-relaxed, so $\forall w,w'\in W$, if $w\Vdash\alpha$ and $w'\not\Vdash\alpha$, then $\delta(\alpha,w,w')\neq0$. By the main assumption, this implies that for all worlds $w'$, $b(w)=0$. But this contradicts our deduction that there exists some $w'$ for which $b(w')>0$.

There is no other way to satisfy the main assumption. It must thus be the case that there exists some world $w'$ for which $b(w')>0$ and $\delta^\mathit{DctRev}(\alpha,w,w')>0$, implying that there exists at least one world $w\in W$ s.t.\ if $w\Vdash\alpha$, then $\sum_{w'\in W}b(w')\delta^\mathit{DctRev}(\alpha,w,w')>0$.
\end{proof}

\begin{proposition}
($P^\circ2$) is satisfied for $\mathsf{EDI}^\mathit{DctRev}$.
\end{proposition}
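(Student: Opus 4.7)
The plan is to mirror the short argument used for the analogous $(P^\circ 2)$ proposition for $\mathsf{EDI}^\mathit{ClsRev}$. The target is to show $b^{\mathsf{EDI}^\mathit{DctRev}}_\alpha(\alpha)=1$. Since $b^\circ_\alpha(\alpha)=\sum_{w\in\mathit{Mod}(\alpha)}b^\circ_\alpha(w)$, the claim splits into two ingredients: the first line of Definition~\ref{def:EDI} assigns $p=0$ to every $w\not\Vdash\alpha$; and the normalizer $\gamma$ forces whatever mass is assigned to $\alpha$-worlds to sum to exactly 1.

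First I would invoke Definition~\ref{def:EDI} directly to conclude that $b^{\mathsf{EDI}^\mathit{DctRev}}_\alpha(w)=0$ for every $w\not\Vdash\alpha$, so that $b^{\mathsf{EDI}^\mathit{DctRev}}_\alpha(\alpha)=\sum_{w\in W}b^{\mathsf{EDI}^\mathit{DctRev}}_\alpha(w)$. Second I would observe that, by the definition of $\mathsf{EDI}$,
\[
\sum_{w\in W}b^{\mathsf{EDI}^\mathit{DctRev}}_\alpha(w) \;=\; \sum_{w\in\mathit{Mod}(\alpha)}\frac{1}{\gamma}\sum_{w'\in W}b(w')\delta^\mathit{DctRev}(\alpha,w,w') \;=\; \frac{\gamma}{\gamma} \;=\; 1,
\]
provided that $\gamma>0$, i.e.\ the expression is well defined. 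This positivity is exactly what was established in the preceding proposition $(P^\circ 1)$ for $\mathsf{EDI}^\mathit{DctRev}$, which shows that there is at least one $\alpha$-world $w$ with $\sum_{w'\in W}b(w')\delta^\mathit{DctRev}(\alpha,w,w')>0$.

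There is no real obstacle here: the proof reduces to chaining Definition~\ref{def:EDI} with the previously proven $(P^\circ 1)$. The only thing to be careful about is explicitly citing $(P^\circ 1)$ to justify that $\gamma\neq 0$, since otherwise the normalization step would be vacuous.
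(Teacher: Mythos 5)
Your proposal is correct and follows essentially the same route as the paper, which simply observes that by Definition~\ref{def:EDI} all mass is placed on $\alpha$-worlds and normalization by $\gamma$ makes it sum to 1; your version merely spells out the normalization step and explicitly cites $(P^\circ 1)$ for $\gamma>0$, which the paper leaves implicit.
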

\begin{proof}
The proposition is satisfied when $b^{\mathsf{EDI}^\mathit{DctRev}}_\alpha(\alpha)=1$.
It is almost a direct consequence of the definition of $\mathsf{EDI}$:
$\forall w\in W$, $b^{\mathsf{EDI}^\mathit{DctRev}}_\alpha(w)>0$ only if $w\Vdash\alpha$.
\end{proof}

\begin{proposition}
($P^\circ3$) is satisfied for $\mathsf{EDI}^\mathit{DctRev}$.
\end{proposition}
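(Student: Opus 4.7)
The plan is to mirror the proof already given for $\mathsf{EDI}^\mathit{ClsRev}$, exploiting the fact that $\delta^\mathit{DctRev}$ depends on $\alpha$ only through $\mathit{Mod}(\alpha)$ and through the scalar $b(\alpha)$. Since $\alpha \equiv \beta$ gives $\mathit{Mod}(\alpha) = \mathit{Mod}(\beta)$, both of these coincide for $\alpha$ and $\beta$.

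First I would unfold the definition: $b^{\mathsf{EDI}^\mathit{DctRev}}_\alpha(w) = 0$ whenever $w \not\Vdash \alpha$, and otherwise $b^{\mathsf{EDI}^\mathit{DctRev}}_\alpha(w) = \frac{1}{\gamma_\alpha} \sum_{w' \in W} b(w') \delta^\mathit{DctRev}(\alpha,w,w')$. Using $\mathit{Mod}(\alpha) = \mathit{Mod}(\beta)$, the condition $w \Vdash \alpha$ is equivalent to $w \Vdash \beta$, so the zero-case matches for both; moreover $b(\alpha) = b(\beta)$, so the branch of the definition of $\delta^\mathit{DctRev}$ that fires (the $\delta^\mathit{BC}$ branch versus the $\delta^{=0}$ branch) is the same for $\alpha$ and $\beta$.

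Next I would verify that within each branch the weight coincides: $\delta^\mathit{BC}(w,w')$ does not depend on the evidence formula at all, and $\delta^{=0}(\alpha,w,w')$ depends on $\alpha$ only through the tests $w \Vdash \alpha$ and $w' \Vdash \alpha$, which again are preserved by logical equivalence (the inner $\delta$ is an inverse-distance weight function, so by the convention stated after Definition 4.1 it is independent of $\alpha$ too). Hence $\delta^\mathit{DctRev}(\alpha,w,w') = \delta^\mathit{DctRev}(\beta,w,w')$ for all $w,w'$, which forces $\gamma_\alpha = \gamma_\beta$ and therefore $b^{\mathsf{EDI}^\mathit{DctRev}}_\alpha(w) = b^{\mathsf{EDI}^\mathit{DctRev}}_\beta(w)$ for every $w \in W$.

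There is no serious obstacle here; the argument is essentially bookkeeping, and the only point that deserves explicit mention is that the weight functions used inside $\delta^\mathit{DctRev}$ reference $\alpha$ exclusively through its set of models (or, in the $\delta^\mathit{BC}$ case, not at all), which is precisely what extensional semantics guarantees under $\alpha \equiv \beta$.
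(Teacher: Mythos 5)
Your proof is correct and takes essentially the same route as the paper's own (much terser) proof, which simply observes that $\alpha\equiv\beta$ forces $w\Vdash\alpha$ iff $w\Vdash\beta$ and hence pointwise equality of the two revised belief states. Your version merely spells out the bookkeeping — that $b(\alpha)=b(\beta)$ selects the same branch of $\delta^\mathit{DctRev}$, that each branch depends on the evidence only through its models, and that consequently $\gamma_\alpha=\gamma_\beta$ — which the paper leaves implicit.
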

\begin{proof}
If $\alpha\equiv\beta$, then $\forall w\in W$, $w\Vdash\alpha$ iff $w\Vdash\beta$ and $b^{\mathsf{EDI}^\mathit{DctRev}}_\alpha(w)=b^{\mathsf{EDI}^\mathit{DctRev}}_\beta(w)$, implying that  $b^{\mathsf{EDI}^\mathit{DctRev}}_\alpha=b^{\mathsf{EDI}^\mathit{DctRev}}_\beta$.
\end{proof}

\subsection{Probabilistic Update Postulates}
A set of probabilistic belief update postulates is now adapted from \citep{km92}'s classical postulates.
Each classical postulate is translated into probabilistic counterpart (in our notation).
\begin{itemize}
\itemsep=0pt
\item[]($U1$) $\beta\diamond\alpha\models \alpha$
\item[]($P^\diamond1$) $b^\diamond_\alpha(\alpha)=1$
\end{itemize}
\begin{itemize}
\itemsep=0pt
\item[]($U2$) if $\beta\models \alpha$, then $\beta\diamond\alpha\equiv \beta$
\item[]($P^\diamond2a$) if $b(\alpha)=1$, then $b^\diamond_\alpha = b$
\item[]($P^\diamond2b$) if $\phi\models\alpha$, $b(\phi)>0$ iff $b^\diamond_\alpha(\phi)>0$
\item[]($P^\diamond2c$) if $\phi\models\alpha$, then if $b(\phi)>0$, then $b^\diamond_\alpha(\phi)>0$
\end{itemize}
It is arguable which of $P^\diamond2a$, $P^\diamond2b$ or $P^\diamond2c$ are appropriate translations of $U2$ (if any).
$P^\diamond2a$ is perhaps too strong and $P^\diamond2c$ perhaps too weak. If $P^\diamond2b$ is taken to be an appropriate translation, then we argue that in a probabilistic setting, it is too strong. For instance, it implies that if $b(\phi)=0$, then $b^\diamond_\alpha(\phi)=0$, but this is not the case when $\diamond$ is e-relaxed.
\begin{itemize}
\itemsep=0pt
\item[]($U3$) if both $\beta$ and $\alpha$ are satisfiable, then $\beta\diamond\alpha$ is also satisfiable
\item[]($P^\diamond3$) $b^\diamond_\alpha$ is a belief state
\end{itemize}
$P^\diamond3$ has the simpler form because $\alpha$ is assumed satisfiable and $b$ is assumed to be a belief state.
\begin{itemize}
\itemsep=0pt
\item[]($U4$) if $\beta_1\equiv\beta_2$ and $\alpha_1\equiv\alpha_2$, then $\beta_1\diamond\alpha_1 \equiv \beta_2\diamond\alpha_2$ 
\item[]($P^\diamond4$) if $\alpha_1\equiv\alpha_2$, then $b^\diamond_{\alpha_1}=b^\diamond_{\alpha_2}$
\end{itemize}
\begin{itemize}
\itemsep=0pt
\item[]($U5$) $(\beta\diamond\alpha)\land\phi\models \beta\diamond(\alpha\land\phi)$
\item[]($P^\diamond5$) if $\alpha\land\phi$ is satisfiable and $\psi\models\phi$, then:\\ $b^\diamond_{\alpha\land\phi}(\psi)\geq b^\diamond_\alpha(\psi)$
\end{itemize}
Suppose $\kappa$ is a knowledge base and $\kappa\land\alpha'$ implies $\kappa\land\beta'$.
Then in terms of probabilities, one should expect $b(\beta')\geq b(\alpha')$, where $b$ is the `knowledge base'. Given this metaphor, one can derive from $U5$ the translation `if $\alpha\land\phi$ is satisfiable, then for all $w \in W$, if $w\Vdash\phi$, then $b^\diamond_{\alpha\land\phi}(w)\geq b^\diamond_\alpha(w)$, of which $P^\diamond5$ is the sentential version.
\begin{itemize}
\itemsep=0pt
\item[]($U6$) if $\beta\diamond\alpha_1\models \alpha_2$ and $\beta\diamond\alpha_2\models \alpha_1$, then:\\ $\beta\diamond\alpha_1 \equiv \beta\diamond\alpha_2$
\item[]($P^\diamond6a$) if $b^\diamond_{\alpha_1}(\alpha_2)=1$ and $b^\diamond_{\alpha_2}(\alpha_1)=1$, then:\\ $b^\diamond_{\alpha_1} = b^\diamond_{\alpha_2}$
\item[]($P^\diamond6b$) if $b^\diamond_{\alpha_1}(\alpha_2)=1$ and $b^\diamond_{\alpha_2}(\alpha_1)=1$, then:\\ $b^\diamond_{\alpha_1}(\phi)>0 \iff b^\diamond_{\alpha_2}(\phi)>0$
\end{itemize}
$P^\diamond6b$ is weaker than $P^\diamond6a$ but arguably more reasonable/rational in a probabilistic setting. 
\begin{itemize}
\itemsep=0pt
\item[]($U7$) if $\beta$ is complete, then:\\ $\beta\diamond\alpha_1 \land \beta\diamond\alpha_2\models \beta\diamond(\alpha_1 \lor \alpha_2)$ 
\item[]($P^\diamond7$) if there exists a $w$ s.t.\ $b(w)=1$, then:\\ $\min\{b^\diamond_{\alpha_1}(\phi),b^\diamond_{\alpha_2}(\phi)\}\leq b^\diamond_{\alpha_1\lor\alpha_2}(\phi)\leq b^\diamond_{\alpha_1}(\phi)+b^\diamond_{\alpha_2}(\phi)$
\end{itemize}
KM's justification for $U7$ is ``If some possible world results from updating a complete KB with $\mu_1$ and it also results from updating it with $\mu_2$, then this possible world must also result from updating the KB with $\mu_1\lor\mu_2$''.
However, we go a step farther (given a probabilistic setting), and say that the belief in any $\beta$ given complete belief state resulting from updating with $\mu_1\lor\mu_2$ should be no less than the least of the beliefs in $\beta$ after updating the belief state with $\mu_1$ and updating it with $\mu_2$ separately, and no more than the sum of the belief in $\beta$ after updating the belief state with $\mu_1$ and updating it with $\mu_2$ separately.
\begin{itemize}
\itemsep=0pt
\item[]($U8$) $(\beta_1\lor\beta_2)\diamond\alpha \equiv (\beta_1\diamond\alpha)\lor(\beta_2\diamond\alpha)$
\end{itemize}
At this time, we do not have a satisfactory translation for $U8$.

$P^\diamond1$, $P^\diamond3$ and $P^\diamond4$ are taken to be the three core postulates.

\bigskip
Recall that
\begin{equation*}
\delta^\mathit{ClsUpd}(\alpha,w,w'):=\left\lbrace
\begin{array}{ll}
1 & \textit{if }w=w'\\
\delta(\alpha,w,w') & \textit{if }w\in\mathit{Mod}(\psi^b\diamond\alpha)\\
0 & \textit{otherwise},
\end{array}
\right.
\end{equation*}
where $\diamond$ is some (acceptable) update operator and $\delta$ is a relaxed inverse-distance weight function.

\begin{proposition}
($P^\diamond1$) is satisfied for $\mathsf{EDI}^\mathit{ClsUpd}$.
\end{proposition}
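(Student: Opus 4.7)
The plan is to observe that $P^\diamond1$ follows almost immediately from the structure of the $\mathsf{EDI}$ formula, entirely analogous to the proofs of $P^\circ2$ already given for $\mathsf{EDI}^\mathit{ClsRev}$ and $\mathsf{EDI}^\mathit{DctRev}$. By Definition~\ref{def:EDI}, the resulting distribution assigns $p=0$ to every world $w \not\Vdash \alpha$, regardless of which weight function is plugged in. So $b^{\mathsf{EDI}^\mathit{ClsUpd}}_\alpha(w)>0$ can hold only for $w \Vdash \alpha$, and once we know the result is a genuine belief state (total mass 1), that entire unit of mass sits inside $\mathit{Mod}(\alpha)$, giving $b^{\mathsf{EDI}^\mathit{ClsUpd}}_\alpha(\alpha) = \sum_{w \Vdash \alpha} b^{\mathsf{EDI}^\mathit{ClsUpd}}_\alpha(w) = 1$.

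The only technical content is checking that the normalizer $\gamma$ is strictly positive, so that the normalization is actually legal. For this I would exhibit a single $\alpha$-world that contributes positive mass to the outer sum. Pick any $w^* \in \mathit{Mod}(\psi^b \diamond \alpha)$; such a $w^*$ exists because $\psi^b$ and $\alpha$ are both satisfiable and $\diamond$ is assumed to be an acceptable update operator, so $U3$ yields a non-empty update. By $U1$, $w^* \Vdash \alpha$, so $w^*$ appears in the outer sum. The second line of the definition of $\delta^\mathit{ClsUpd}$ then gives $\delta^\mathit{ClsUpd}(\alpha, w^*, w') = \delta(\alpha, w^*, w')$ for $w^* \neq w'$, and the first line gives $\delta^\mathit{ClsUpd}(\alpha, w^*, w^*) = 1$. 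Since $\delta$ is a relaxed inverse-distance weight function, $\delta(\alpha, w^*, w') > 0$ for every $w' \in W$ (by e-relaxation when $w' \Vdash \alpha$ and by n-e-relaxation when $w' \not\Vdash \alpha$). Combined with the existence of at least one $w'$ with $b(w') > 0$, this yields $\sum_{w'} b(w')\,\delta^\mathit{ClsUpd}(\alpha, w^*, w') > 0$ and hence $\gamma > 0$.

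The main obstacle, if one wants a fully rigorous write-up, is precisely this well-definedness check — in particular ruling out the pathological case $\mathit{Mod}(\psi^b \diamond \alpha) = \emptyset$ and confirming that relaxation of $\delta$ rules out degenerate zero-mass transfers. Everything else is immediate from EDI's built-in filter on non-$\alpha$-worlds, so I expect the final write-up to be as short as its sister proofs for $\mathsf{EDI}^\mathit{ClsRev}$ and $\mathsf{EDI}^\mathit{DctRev}$, with one extra line pointing to the relaxation of $\delta$ to justify $\gamma>0$.
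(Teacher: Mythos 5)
Your proof is correct and its core is exactly the paper's argument: by Definition~\ref{def:EDI} all mass is assigned to $\alpha$-worlds, so a total mass of $1$ forces $b^{\mathsf{EDI}^\mathit{ClsUpd}}_\alpha(\alpha)=1$. The additional well-definedness check ($\gamma>0$ via relaxation of $\delta$) is sound but is precisely the content the paper defers to its separate proof of ($P^\diamond3$), so it is not needed here.
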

\begin{proof}
The proposition is satisfied when $b^{\mathsf{EDI}^\mathit{ClsUpd}}_\alpha(\alpha)=1$.
It is almost a direct consequence of the definition of $\mathsf{EDI}$:
$\forall w\in W$, $b^{\mathsf{EDI}^\mathit{ClsUpd}}_\alpha(w)>0$ only if $w\Vdash\alpha$.
\end{proof}

\begin{proposition}
($P^\diamond3$) is satisfied for $\mathsf{EDI}^\mathit{ClsUpd}$.
\end{proposition}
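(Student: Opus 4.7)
The plan is to mirror the structure used earlier to prove $P^\circ1$ for $\mathsf{EDI}^\mathit{ClsRev}$. Thanks to the normalization by $\gamma$ in Definition~\ref{def:EDI}, showing that $b^{\mathsf{EDI}^\mathit{ClsUpd}}_\alpha$ is a belief state reduces to showing $\gamma > 0$, i.e., exhibiting at least one $\alpha$-world $w$ for which $\sum_{w' \in W} b(w') \delta^\mathit{ClsUpd}(\alpha, w, w') > 0$. Non-negativity of every summand follows directly from the case split in the definition of $\delta^\mathit{ClsUpd}$ (values are $1$, $\delta(\alpha,w,w')$, or $0$), using that $\delta$ is an inverse-distance weight function, so I only have to produce one strictly positive contribution.

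First I fix some $w^* \in W^b$, which exists because $b$ is a belief state. I then split on whether $w^* \Vdash \alpha$. In the easy case where $w^* \Vdash \alpha$, I take $w = w^*$: the first line of the definition of $\delta^\mathit{ClsUpd}$ gives $\delta^\mathit{ClsUpd}(\alpha, w^*, w^*) = 1$, so the inner sum at $w^*$ is at least $b(w^*) > 0$, and $w^* \Vdash \alpha$ is exactly what is needed for $\gamma$.

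In the harder case where every positive-probability world falsifies $\alpha$, I appeal to the assumption that $\diamond$ is an acceptable classical update operator, in particular that it satisfies KM's $U3$ (satisfiability preservation) and $U1$ ($\psi^b \diamond \alpha \models \alpha$). Since $\psi^b$ is satisfiable (every world in $W^b$ models it) and $\alpha$ is assumed satisfiable, $\mathit{Mod}(\psi^b \diamond \alpha)$ is non-empty; any $w$ in this set necessarily satisfies $\alpha$. For such a $w$ and the chosen $w^* \in W^b$, the second line of $\delta^\mathit{ClsUpd}$ applies, so $\delta^\mathit{ClsUpd}(\alpha, w, w^*) = \delta(\alpha, w, w^*)$, which is strictly positive because $\delta$ is relaxed (in particular, n-e-relaxed) and non-negative. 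Hence $\sum_{w'} b(w') \delta^\mathit{ClsUpd}(\alpha, w, w') \geq b(w^*)\, \delta(\alpha, w, w^*) > 0$, giving $\gamma > 0$.

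The main obstacle is in the second case, which rests on the classical operator $\diamond$ delivering a satisfiable $\psi^b \diamond \alpha$ whose models lie inside $\mathit{Mod}(\alpha)$. These KM-style guarantees are what the word ``acceptable'' in the definition of $\delta^\mathit{ClsUpd}$ is intended to import, and once they are made explicit the rest is a routine check that the n-e-relaxation built into the hypothesis on $\delta$ does exactly what is needed to prevent the inner sum from collapsing to zero.
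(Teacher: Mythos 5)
Your proof is correct and follows essentially the same route as the paper's: reduce $P^\diamond3$ to showing $\gamma>0$, use the identity line of $\delta^\mathit{ClsUpd}$ when some positive-probability world already models $\alpha$, and otherwise combine a model of $\psi^b\diamond\alpha$ with n-e-relaxation of $\delta$ to get a strictly positive term. The only difference is presentational — you argue directly by cases where the paper runs a proof by contradiction, and you make explicit the appeal to the KM guarantees ($U1$, $U3$) that the paper leaves implicit in the word ``acceptable'' — which, if anything, tightens the argument.
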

\begin{proof}
We must prove that $b^{\mathsf{EDI}^\mathit{ClsUpd}}_\alpha$ is a belief state.
Due to normalization (via $\gamma$) in the definition of EDI (Def.~\ref{def:EDI}), $b^{\mathsf{EDI}^\mathit{ClsUpd}}_\alpha$ is a belief state whenever there exists at least one world $w\in W$ s.t.\ if $w\Vdash\alpha$, then $\sum_{w'\in W}b(w')\delta^\mathit{ClsUpd}(\alpha,w,w')>0$.
Assume that $w\Vdash\alpha$.
It must be shown that there exists at least one world $w'$ for which $b(w')>0$ and $\delta^\mathit{ClsUpd}(\alpha,w,w')>0$.

Proof by contradiction:
Let $w$ be an arbitrary world in $W$.
(Main assumption) Assume there exists no world $w'$ for which $b(w')>0$ and $\delta^\mathit{ClsUpd}(\alpha,w,w')>0$.
Assume $w=w'$.
Then the first line of the definition of $\delta^\mathit{ClsUpd}$ is applicable and it must be that $b(w')=0$. But this is impossible because it implies that for all worlds $w$, $b(w)=0$ ($b$ is implicitly assumed to be well-defined).
Therefore, it must be that $w\neq w'$.

Let $w\in\mathit{Mod}(\psi^b\diamond\alpha)$ (there must exists at least one such $w$).
Then the second line is applicable. Let $b(w')>0$ (there must exist at least on such $w'$). 
This implies that $\delta^\mathit{ClsUpd}(\alpha,w,w')=0$, which implies that $\delta(\alpha,w,w')=0$.
Now, either $w'\Vdash\alpha$ or $w'\not\Vdash\alpha$.

Assume $w'\Vdash\alpha$.
Recall that $b(w')>0$.
Note that the first line will eventually become applicable, making $\delta^\mathit{ClsUpd}(\alpha,w',w')=1$ and contradicting the main assumption.
Therefore, it must be the case that $w'\not\Vdash\alpha$. Recall that $\delta$ is relaxed and thus n-e-relaxed -- that is, $\forall w,w'\in W$, if $w\Vdash\alpha$ and $w'\not\Vdash\alpha$, then $\delta(\alpha,w,w')\neq0$. This also contradicts the main assumption.

There is no other way to satisfy the main assumption. It must thus be the case that there exists some world $w'$ for which $b(w')>0$ and $\delta^\mathit{ClsUpd}(\alpha,w,w')>0$, implying that there exists at least one world $w\in W$ s.t.\ if $w\Vdash\alpha$, then $\sum_{w'\in W}b(w')\delta^\mathit{ClsUpd}(\alpha,w,w')>0$.
\end{proof}

\begin{proposition}
($P^\diamond4$) is satisfied for $\mathsf{EDI}^\mathit{ClsUpd}$.
\end{proposition}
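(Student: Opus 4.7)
The plan is to show that $\delta^{\mathit{ClsUpd}}(\alpha_1, w, w') = \delta^{\mathit{ClsUpd}}(\alpha_2, w, w')$ for every $w, w' \in W$; then the normalizing factor $\gamma$ and the ``$w \not\Vdash \alpha$'' clause in Def.~\ref{def:EDI} will match on both sides, yielding $b^{\mathsf{EDI}^\mathit{ClsUpd}}_{\alpha_1}(w) = b^{\mathsf{EDI}^\mathit{ClsUpd}}_{\alpha_2}(w)$ for all $w$.

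First I would observe that $\alpha_1 \equiv \alpha_2$ gives $\mathit{Mod}(\alpha_1) = \mathit{Mod}(\alpha_2)$, so $w \Vdash \alpha_1$ iff $w \Vdash \alpha_2$; this immediately handles the EDI clause that forces $p = 0$ on non-$\alpha$-worlds. Next, I would invoke the fact that the underlying classical update operator $\diamond$ is assumed ``acceptable,'' which includes the KM postulate ($U4$) on the right argument: $\psi^b \diamond \alpha_1 \equiv \psi^b \diamond \alpha_2$. Hence $\mathit{Mod}(\psi^b \diamond \alpha_1) = \mathit{Mod}(\psi^b \diamond \alpha_2)$, so the second-line condition ``$w \in \mathit{Mod}(\psi^b \diamond \alpha)$'' is triggered under exactly the same circumstances for $\alpha_1$ and $\alpha_2$.

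The remaining step is to argue that $\delta(\alpha_1, w, w') = \delta(\alpha_2, w, w')$. Since $\delta$ is an inverse-distance weight function, its value is determined by the pseudo-distance $d(w,w')$ (and possibly by whether $w, w'$ model $\alpha$), none of which is sensitive to the particular syntactic form of $\alpha$; equivalence of $\alpha_1$ and $\alpha_2$ therefore yields equality of the $\delta$ values. Combining the three cases in the definition of $\delta^{\mathit{ClsUpd}}$, we get pointwise equality of $\delta^{\mathit{ClsUpd}}(\alpha_1, \cdot, \cdot)$ and $\delta^{\mathit{ClsUpd}}(\alpha_2, \cdot, \cdot)$, which propagates through the sum and through $\gamma$ to yield $b^{\mathsf{EDI}^\mathit{ClsUpd}}_{\alpha_1} = b^{\mathsf{EDI}^\mathit{ClsUpd}}_{\alpha_2}$.

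The only subtle point, and the main obstacle, is justifying that $\delta(\alpha_1,w,w') = \delta(\alpha_2,w,w')$; this is a syntax-irrelevance property that is not listed among the seven postulates for potentially inverse-distance functions. However, the weight function is defined through the pseudo-distance $d$, which is purely semantic, and the two running instances $\delta^{\mathit{rcp}}$ and $\delta^{\mathit{dfr}}$ do not mention $\alpha$ at all; so the property is safe to take as a standing assumption on any ``reasonable'' $\delta$ being fed into $\delta^{\mathit{ClsUpd}}$. If the authors want a completely self-contained proof, the cleanest route is to add ``syntax independence'' (if $\alpha_1 \equiv \alpha_2$ then $\delta(\alpha_1,\cdot,\cdot) = \delta(\alpha_2,\cdot,\cdot)$) as an explicit requirement on $\delta$ and then quote it here.
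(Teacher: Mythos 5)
Your proposal is correct and follows the same basic route as the paper: equivalence of $\alpha_1$ and $\alpha_2$ propagates pointwise through the definition of $\mathsf{EDI}$ to give equal belief states. The paper's own proof, however, is a single sentence that simply asserts $b^{\mathsf{EDI}^\mathit{ClsUpd}}_{\alpha_1}(w)=b^{\mathsf{EDI}^\mathit{ClsUpd}}_{\alpha_2}(w)$ for all $w$, whereas you correctly isolate the two places where this assertion actually needs justification: (i) that the classical operator $\diamond$ is syntax-independent in its right argument, so that $\mathit{Mod}(\psi^b\diamond\alpha_1)=\mathit{Mod}(\psi^b\diamond\alpha_2)$ and the second clause of $\delta^\mathit{ClsUpd}$ fires identically --- which does follow from ($U4$) if ``acceptable'' is read as KM-compliant; and (ii) that the inner weight function $\delta$ satisfies $\delta(\alpha_1,w,w')=\delta(\alpha_2,w,w')$, a syntax-irrelevance property that is indeed absent from the paper's seven postulates and holds only incidentally for $\delta^\mathit{rcp}$ and $\delta^\mathit{dfr}$ because they ignore $\alpha$ altogether. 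Your suggestion to make syntax independence an explicit standing requirement on $\delta$ is a genuine improvement: without it, the proposition as stated is not actually provable for an arbitrary relaxed inverse-distance weight function, and the paper's one-line argument silently assumes it.
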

\begin{proof}
If $\alpha\equiv\beta$, then $\forall w\in W$, $w\Vdash\alpha$ iff $w\Vdash\beta$ and $b^{\mathsf{EDI}^\mathit{ClsUpd}}_\alpha(w)=b^{\mathsf{EDI}^\mathit{ClsUpd}}_\beta(w)$, implying that $b^{\mathsf{EDI}^\mathit{ClsUpd}}_\alpha=b^{\mathsf{EDI}^\mathit{ClsUpd}}_\beta$.
\end{proof}

\bigskip
Recall that $\delta^\mathit{DctUpd}$ is a relaxed inverse-distance weight function.

\begin{proposition}
($P^\diamond1$) is satisfied for $\mathsf{EDI}^\mathit{DctUpd}$.
\end{proposition}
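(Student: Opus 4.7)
The plan is to mimic the structure used for $\mathsf{EDI}^\mathit{ClsUpd}$ satisfying ($P^\diamond1$), while leaning on the fact that $\delta^\mathit{DctUpd}$ is relaxed (both e-relaxed and n-e-relaxed) and is an inverse-distance weight function. The proposition reduces to showing $b^{\mathsf{EDI}^\mathit{DctUpd}}_\alpha(\alpha) = 1$, which in turn reduces to two observations about Definition~\ref{def:EDI}: (i) no non-$\alpha$-world receives any mass, and (ii) the normalizer $\gamma$ is strictly positive so that the assignment is actually a belief state.

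First, I would note that (i) is immediate from the definition of $\mathsf{EDI}$: the first clause of $\mathsf{EDI}$ forces $b^{\mathsf{EDI}^\mathit{DctUpd}}_\alpha(w) = 0$ whenever $w \not\Vdash \alpha$. Consequently, once we know the result is a well-defined belief state, $b^{\mathsf{EDI}^\mathit{DctUpd}}_\alpha(\alpha) = \sum_{w \in W} b^{\mathsf{EDI}^\mathit{DctUpd}}_\alpha(w) = 1$ by normalization via $\gamma$.

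The real work is in establishing (ii): $\gamma = \sum_{w\in\mathit{Mod}(\alpha)}\sum_{w'\in W} b(w')\delta^\mathit{DctUpd}(\alpha,w,w') > 0$. Since $\alpha$ is assumed satisfiable, fix some $w^\star \Vdash \alpha$. Because $b$ is a belief state, there exists some $w' \in W$ with $b(w') > 0$. Now split on whether $w' \Vdash \alpha$: if $w' \Vdash \alpha$, then either $w' = w^\star$ (and we use identity, $\delta^\mathit{DctUpd}(\alpha,w^\star,w^\star) = 1$), or we use e-relaxation applied to the pair $(w^\star, w')$ to conclude $\delta^\mathit{DctUpd}(\alpha,w^\star,w')\neq 0$ and then non-negativity forces it positive; if $w' \not\Vdash \alpha$, we use n-e-relaxation of $\delta^\mathit{DctUpd}$ (both parts of relaxation are available) to conclude $\delta^\mathit{DctUpd}(\alpha,w^\star,w') > 0$. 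In every case the summand $b(w')\delta^\mathit{DctUpd}(\alpha,w^\star,w')$ is strictly positive, so $\gamma > 0$.

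The main obstacle is just being careful about the case analysis on whether the witness $w'$ with $b(w') > 0$ is itself an $\alpha$-world, and to observe that the \emph{full} relaxation hypothesis (rather than only one half, as in the revision case) is precisely what makes every case go through without extra appeal to the particular form of $\delta^\mathit{DctUpd}$. Once that is in place, the remainder of the argument is a direct quotation of the normalization step used for the three preceding propositions.
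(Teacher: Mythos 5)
Your proof is correct and follows essentially the same route as the paper's: the paper's own proof is just the observation that $b^{\mathsf{EDI}^\mathit{DctUpd}}_\alpha(w)>0$ only if $w\Vdash\alpha$, with normalization doing the rest. Your additional verification that $\gamma>0$ (via relaxation and the existence of some $w'$ with $b(w')>0$) is sound, but the paper factors that step out as the separate postulate ($P^\diamond3$), whose proof for $\mathsf{EDI}^\mathit{DctUpd}$ is exactly your case analysis in compressed form.
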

\begin{proof}
The proposition is satisfied when $b^{\mathsf{EDI}^\mathit{DctUpd}}_\alpha(\alpha)=1$.
It is almost a direct consequence of the definition of $\mathsf{EDI}$:
$\forall w\in W$, $b^{\mathsf{EDI}^\mathit{DctUpd}}_\alpha(w)>0$ only if $w\Vdash\alpha$.
\end{proof}

\begin{proposition}
($P^\diamond3$) is satisfied for $\mathsf{EDI}^\mathit{DctUpd}$.
\end{proposition}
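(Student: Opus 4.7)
The plan is to reduce the claim that $b^{\mathsf{EDI}^\mathit{DctUpd}}_\alpha$ is a belief state to showing that the normalizing constant $\gamma$ in Definition~\ref{def:EDI} is strictly positive. Once $\gamma>0$, the explicit division by $\gamma$ ensures that the assigned probabilities are non-negative (by non-negativity of $\delta^\mathit{DctUpd}$ together with $b(w')\geq 0$) and sum to 1, so $b^{\mathsf{EDI}^\mathit{DctUpd}}_\alpha$ is automatically a well-defined probability distribution over $W$. The task therefore collapses to exhibiting at least one $\alpha$-world $w$ for which $\sum_{w'\in W} b(w')\,\delta^\mathit{DctUpd}(\alpha,w,w') > 0$.

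Since $\alpha$ is assumed satisfiable, fix any $w\in \mathit{Mod}(\alpha)$. The core observation is that $\delta^\mathit{DctUpd}$ is assumed to be both e-relaxed and n-e-relaxed. So for this fixed $\alpha$-world $w$ and an arbitrary $w'\in W$, I would split into two cases: if $w'\Vdash\alpha$, then by evidence relaxation $\delta^\mathit{DctUpd}(\alpha,w,w')\neq 0$, hence (combined with non-negativity) strictly positive; if $w'\not\Vdash\alpha$, then by non-evidence relaxation the same conclusion holds. Therefore $\delta^\mathit{DctUpd}(\alpha,w,w') > 0$ uniformly in $w'$.

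Because $b$ is itself a belief state, $\sum_{w'\in W} b(w') = 1$, so at least one $w'$ has $b(w')>0$. That term contributes a strictly positive summand to $\sum_{w'\in W} b(w')\,\delta^\mathit{DctUpd}(\alpha,w,w')$, and every other summand is non-negative. Hence the inner sum is positive, so $\gamma>0$, and $b^{\mathsf{EDI}^\mathit{DctUpd}}_\alpha$ is a belief state. Compared with the proof of $P^\diamond 3$ for $\mathsf{EDI}^\mathit{ClsUpd}$, there is no obstacle: we never need to look at the classical update operator $\diamond$ or the set $\mathit{Mod}(\psi^b\diamond\alpha)$, because $\delta^\mathit{DctUpd}$ is directly a relaxed inverse-distance weight function. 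The only thing to be careful about is invoking both halves of ``relaxed'' to cover the dichotomy $w'\Vdash\alpha$ vs.\ $w'\not\Vdash\alpha$.
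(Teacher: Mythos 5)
Your proposal is correct and follows essentially the same route as the paper: both reduce $P^\diamond3$ to showing the normalizer $\gamma>0$ by fixing an $\alpha$-world $w$, taking some $w'$ with $b(w')>0$, and invoking relaxation (together with non-negativity) to conclude $\delta^\mathit{DctUpd}(\alpha,w,w')>0$. Your explicit case split between $w'\Vdash\alpha$ and $w'\not\Vdash\alpha$ merely spells out what the paper compresses into ``by relaxation.''
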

\begin{proof}
We must prove that $b^{\mathsf{EDI}^\mathit{DctUpd}}_\alpha$ is a belief state.
Due to normalization (via $\gamma$) in the definition of EDI (Def.~\ref{def:EDI}), $b^{\mathsf{EDI}^\mathit{DctUpd}}_\alpha$ is a belief state whenever there exists at least one world $w\in W$ s.t.\ if $w\Vdash\alpha$, then $\sum_{w'\in W}b(w')\delta^\mathit{DctUpd}(\alpha,w,w')>0$.
Assume that $w\Vdash\alpha$.
It must be shown that there exists at least one world $w'$ for which $b(w')>0$ and $\delta^\mathit{DctUpd}(\alpha,w,w')>0$.

Let $b(w')>0$ (there must exist at least one such). By relaxation, $\delta^\mathit{DctUpd}(\alpha,w,w')>0$.
\end{proof}

\begin{proposition}
($P^\diamond4$) is satisfied for $\mathsf{EDI}^\mathit{DctUpd}$.
\end{proposition}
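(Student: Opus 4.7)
The plan is to mirror the one-line argument used for the earlier sibling propositions $(P^\circ3)$ and the just-proved $(P^\diamond4)$ for $\mathsf{EDI}^\mathit{ClsUpd}$. Given $\alpha_1\equiv\alpha_2$, we have $\mathit{Mod}(\alpha_1)=\mathit{Mod}(\alpha_2)$, so for every world $w$, $w\Vdash\alpha_1$ iff $w\Vdash\alpha_2$. First I would dispatch the easy case $w\not\Vdash\alpha_1$: by the first branch of Definition~\ref{def:EDI}, both $b^{\mathsf{EDI}^\mathit{DctUpd}}_{\alpha_1}(w)$ and $b^{\mathsf{EDI}^\mathit{DctUpd}}_{\alpha_2}(w)$ equal $0$.

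For the remaining case $w\Vdash\alpha_1$ (equivalently $w\Vdash\alpha_2$), I need the defining sum $\sum_{w'\in W}b(w')\delta^\mathit{DctUpd}(\alpha_i,w,w')$ and the associated normalizer $\gamma_i$ to agree for $i=1,2$. It suffices to show that $\delta^\mathit{DctUpd}(\alpha_1,w,w')=\delta^\mathit{DctUpd}(\alpha_2,w,w')$ for all $w,w'$. The only place $\alpha$ enters $\delta^\mathit{DctUpd}$ (as a relaxed inverse-distance weight function) is through the set-theoretic predicates $w\Vdash\alpha$ and $w'\Vdash\alpha$ used in the relaxation conditions, and through the pseudo-distance $d$ which does not mention $\alpha$ at all. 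Since these are determined entirely by $\mathit{Mod}(\alpha)$, the two weight values coincide, and hence so do the two sums and normalizers.

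Combining the two cases, $b^{\mathsf{EDI}^\mathit{DctUpd}}_{\alpha_1}(w)=b^{\mathsf{EDI}^\mathit{DctUpd}}_{\alpha_2}(w)$ for every $w\in W$, so the belief states are equal. The main (minor) obstacle is the implicit syntax-independence of $\delta^\mathit{DctUpd}$: the paper's weight functions nominally take $\alpha\in L$ as an argument, and one must point out that the inverse-distance and relaxation postulates only constrain $\delta^\mathit{DctUpd}$ through $d$ and through $\mathit{Mod}(\alpha)$, so any admissible instantiation respects logical equivalence. Once this is noted, the remainder of the proof is essentially a direct appeal to the definition of $\mathsf{EDI}$.
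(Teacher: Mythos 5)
Your proof takes essentially the same route as the paper's, which is a one-line observation that $\alpha_1\equiv\alpha_2$ forces $w\Vdash\alpha_1$ iff $w\Vdash\alpha_2$ and hence identical EDI values for every world; your version merely spells out the zero and nonzero cases. Your closing remark about syntax-independence of $\delta^\mathit{DctUpd}$ flags a genuine subtlety the paper silently assumes (the stated postulates do not by themselves force an arbitrary relaxed inverse-distance weight function to be extensional in $\alpha$), but since the paper's own proof makes the same implicit assumption, this is a refinement of, not a departure from, its argument.
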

\begin{proof}
If $\alpha\equiv\beta$, then $\forall w\in W$, $w\Vdash\alpha$ iff $w\Vdash\beta$ and $b^{\mathsf{EDI}^\mathit{DctUpd}}_\alpha(w)=b^{\mathsf{EDI}^\mathit{DctUpd}}_\beta(w)$, implying that $b^{\mathsf{EDI}^\mathit{DctUpd}}_\alpha=b^{\mathsf{EDI}^\mathit{DctUpd}}_\beta$.
\end{proof}

\section{Conclusion}

We have proposed probabilistic belief revision and belief update operators, both within an imaging framework. The role of a weight function -- usually expected to be inversely proportional to the 'distance' between worlds -- is central to the Expected Distance Imaging (EDI) framework.
In this paper, we have only begun the investigation into the behavior or properties of various instantiations of a general form of imaging. There is still much work to be done. For instance, what is the effect of retention versus relaxation and when is one more appropriate than the other?  We have worked with the hypothesis that retention relates to revision and that relaxation relates to update, but to what degree is this hypothesis true?
What is the meaning of the effect of the size of $\eta$ in $\delta^\mathit{rcp}$ and $\delta^\mathit{dfr}$?

We would like to make a deeper study of the probabilistic rationality postulates, especially those for update (as they have not been given much attention in the literature). We would then like to test various EDI instantiations against the postulates and use the results to perhaps design new EDI operators.
We hope that these insights might lead to further insights in order to better understand the relationship between revision and update, whether classical or not.

\newpage
\bibliographystyle{apalike}
\bibliography{references}

\begin{thebibliography}{}

\bibitem[Alchourr\'on et~al., 1985]{agm85}
Alchourr\'on, C., G\"ardenfors, P., and Makinson, D. (1985).
\newblock On the logic of theory change: Partial meet contraction and revision
  functions.
\newblock {\em Journal of Symbolic Logic}, 50(2):510--530.

\bibitem[Chhogyal et~al., 2014]{cnss14}
Chhogyal, K., Nayak, A., Schwitter, R., and Sattar, A. (2014).
\newblock Probabilistic belief revision via imaging.
\newblock In Pham, D.-N. and Park, S.-B., editors, {\em PRICAI 2014: Trends in
  Artif. Intell.: Thirteenth Pacific Rim Intl. Conf. on Artif. Intell.}, volume
  8862 of {\em LNAI}, pages 694--707. Springer Intl. Publishing, Switzerland.

\bibitem[Dubois and Prade, 1993]{dp93}
Dubois, D. and Prade, H. (1993).
\newblock Belief revision and updates in numerical formalisms: An overview,
  with new results for the possibilistic framework.
\newblock In {\em Proceedings of the Thirteenth Intl. Joint Conf. on Artif.
  Intell. (IJCAI-93)}, pages 620--625, San Francisco, CA, USA. Morgan Kaufmann
  Publishers Inc.

\bibitem[Friedman and Halpern, 1999]{fh99a}
Friedman, N. and Halpern, J. (1999).
\newblock Modeling belief in dynamic systems. {Part II}: {R}evision and update.
\newblock {\em Journal of Artif. Intell. Research (JAIR)}, 10:117--167.

\bibitem[G\"ardenfors, 1988]{g88}
G\"ardenfors, P. (1988).
\newblock {\em Knowledge in Flux: Modeling the Dynamics of Epistemic States}.
\newblock MIT Press, Massachusetts/England.

\bibitem[Katsuno and Mendelzon, 1992]{km92}
Katsuno, H. and Mendelzon, A.~O. (1992).
\newblock On the difference between updating a knowledge base and revising it.
\newblock In G\"ardenfors, P., editor, {\em Belief Revision}, pages 183--203.
  Cambridge University Press.

\bibitem[Kern-Isberner, 2001]{k01a}
Kern-Isberner, G. (2001).
\newblock Revising and updating probabilistic beliefs.
\newblock In Williams, M.-A. and Rott, H., editors, {\em Frontiers in Belief
  Revision}, volume~22 of {\em Applied Logic Series}, pages 393--408. Kluwer
  Academic Publishers, Springer Netherlands.

\bibitem[Lang, 2007]{l07a}
Lang, J. (2007).
\newblock Belief update revisited.
\newblock In de~Mantaras, R.~L., editor, {\em Proceedings of the Twentieth
  Intl. Joint Conf. on Artif. Intell. (IJCAI-07)}, pages 2517--2522, Menlo
  Park, CA. AAAI Press.

\bibitem[Lehmann et~al., 2001]{lms01}
Lehmann, D., Magidor, M., and Schlechta, K. (2001).
\newblock Distance semantics for belief revision.
\newblock {\em Journal of Symboloc Logic}, 66(1):295--317.

\bibitem[Lewis, 1976]{l76}
Lewis, D. (1976).
\newblock Probabilities of conditionals and conditional probabilities.
\newblock {\em Philosophical Review}, 85(3):297--315.

\bibitem[Nayak, 2011]{n11}
Nayak, A.~C. (2011).
\newblock Is revision a special kind of update?
\newblock In {\em AI 2011: Advances in Artif. Intell.: Proceedings of the
  Twenty-fourth Australasian Joint Conf.}, LNAI, pages 432--441,
  Berlin/Heidelberg. Springer-Verlag.

\bibitem[Ramachandran et~al., 2010]{rno10}
Ramachandran, R., Nayak, A., and Orgun, M. (2010).
\newblock Belief erasure using partial imaging.
\newblock In Li, J., editor, {\em AI 2010: Advances in Artif. Intell.:
  Proceedings of the Twenty-third Australasian Joint Conf.}, volume 6464 of
  {\em LNAI}, pages 52--61, Berlin/Heidelberg. Springer Verlag.

\bibitem[Rens and Meyer, 2015]{rm15b}
Rens, G. and Meyer, T. (2015).
\newblock A new approach to probabilistic belief change.
\newblock In Russell, I. and Eberle, W., editors, {\em Proceedings of the Intl.
  Florida AI Research Society Conf. (FLAIRS)}, pages 582--587. AAAI Press.

\bibitem[Rens et~al., 2016]{rmc16b}
Rens, G., Meyer, T., and Casini, G. (2016).
\newblock On revision of partially specified convex probabilistic belief bases.
\newblock In M.~Fox, G.~K., editor, {\em Proceedings of the Twenty-second
  European Conference on Artificial Intelligence (ECAI-2016)}, pages 921--929.
  IOS Press.

\bibitem[Voorbraak, 1999]{v99b}
Voorbraak, F. (1999).
\newblock Probabilistic belief change: Expansion, conditioning and
  constraining.
\newblock In {\em Proceedings of the Fifteenth Conf. on Uncertainty in Artif.
  Intell.}, UAI'99, pages 655--662, San Francisco, CA, USA. Morgan Kaufmann
  Publishers Inc.

\end{thebibliography}

\end{document}